\newcommand{\E}{\mathbb{E}}
\newcommand{\R}{\mathbb{R}}
\newcommand{\lin}{\mathrm{lin}}
\title{Exploring Neural Network Landscapes: Star-Shaped and Geodesic Connectivity}
\author[1]{Zhanran Lin$^*$}
\author[2]{Puheng Li\footnote{The first two authors contributed equally and the order was determined by a coin flip. This work was done when the first two authors were undergraduate students at Peking University.}}
\author[3,4]{Lei Wu\footnote{Corresponding author}}
\affil[1]{Department of Statistics and Data Science, Wharton School,
University of Pennsylvania}
\affil[2]{Department of Statistics, Stanford University}
\affil[3]{School of Mathematical Sciences, Peking University}
\affil[4]{Center for Machine Learning Research, Peking University}
\begin{document}

\maketitle

\begin{abstract}
One of the most intriguing findings in the structure of neural network landscape is the phenomenon of \emph{mode connectivity} \cite{freeman2017topology,draxler2018essentially}: For two typical global minima, there exists a path connecting them without barrier. This concept of mode connectivity has played a crucial role in understanding important phenomena in deep learning.

In this paper, we conduct a fine-grained analysis of this connectivity phenomenon. First, we demonstrate that in the overparameterized case, the connecting path can be as simple as a {\em two-piece linear path}, and the path length can be nearly equal to the Euclidean distance. This finding suggests that the landscape should be nearly convex in a certain sense. Second, we uncover a surprising {\em star-shaped} connectivity: For a finite number of typical minima, there exists a center on minima manifold that connects all of them simultaneously via linear paths. 
These results are provably valid for linear networks and two-layer ReLU networks under a teacher-student setup, and are empirically supported by models trained on MNIST and CIFAR-10. 


\end{abstract}
\section{Introduction}

%


 It is well-known that neural networks are highly non-convex, but they can still be efficiently trained by simple algorithms like stochastic gradient descent (SGD). Understanding the underlying mechanism is crucial and in particular, a key aspect of this is to uncover the topology and geometry of neural network landscapes. 

Some recent studies exploited the local properties of neural network landscapes, including the absence of spurious minima \cite{ge2016matrix,soudry2016no}, the sharpness of different minima \cite{hochreiter1997flat,keskar2017on,wu2017towards}, and the structures of saddle points \cite{zhang2021embedding} and plateaus \cite{ainsworth2021plateau}. Other studies have examined the nonlocal structures, including the impact of symmetries and invariances \cite{simsek2021geometry}, the presence of non-attracting regions of minima \cite{petzka2021non}, the monotonic linear interpolation phenomenon \cite{goodfellow2014qualitatively,wang2022plateau,vlaar2022can,lucas21a}. Among these nonlocal structures, one of the most intriguing findings is the {\em mode connectivity}, which is the focus of this paper. 

Mode connectivity refers to the property that global minima of (over-parameterized) neural networks are (nearly) path-connected and form a connected manifold \cite{cooper2018loss}, rather than being isolated. This characteristic of neural network landscape was first observed by Freeman and Bruna in \cite{freeman2017topology}, and its practical universality was later demonstrated in \cite{draxler2018essentially,garipov2018loss} through extensive large-scale experiments. Mode connectivity has attracted wide attention and has been utilized to understand many important aspects of deep learning, including the role of permutation invariance \cite{entezari2021role,ainsworth2022git}, properties of SGD solutions \cite{mirzadeh2020linear,frankle2020linear}, explaining the success of certain learning methods such as ensemble methods \cite{garipov2018loss,he2019asymmetric}, and even to design methods with better generalization \cite{benton2021loss}.


In this paper, we conduct a fine-grained analysis of this mode connectivity. Our specific investigation is inspired by the empirical observation that  {\em the connecting paths can be piecewise linear with as few as two pieces} \cite{garipov2018loss}. This motivates us to examine the piecewise linear connectivity of the global minima manifold. Two minima are said to be $k$-piece linearly connected if they can be connected using paths with at most $k$ linear segments.
Specifically, our main contributions are summarized as follows.

\begin{itemize}
\item We provide a theoretical analysis of the piecewise linear connectivity for two-layer ReLU networks and linear networks under a teacher-student setup. We prove that as long as the network is sufficiently over-parameterized, any two minima are $2$-piece linearly connected.  By exploiting this property, we further discover the following surprising structures of the global minima manifold:
\begin{itemize}
\item  \textbf{Star-shaped connectivity:} For a finite set of typical minima, there exists a minimum (center) such that it is linearly connected to all these minima simultaneously.
\item \textbf{Geodesic connectivity:} For two typical minima, the geodesic distance on the minima manifold is close to the Euclidean distance. Moreover, the ratio between them monotonically decreases towards $1$ as increasing the network width. This suggests that the landscape of over-parameterized networks might be not far away from a convex one in some sense. 
\end{itemize}

\item We then provide extensive experiments on MNIST and CIFAR-10 datasets that confirm our theoretical findings on practical models.
\end{itemize}

\begin{figure}[ht]
  \begin{minipage}{0.45\linewidth}
   \centering
    \includegraphics[height=5cm,width=6.3cm]{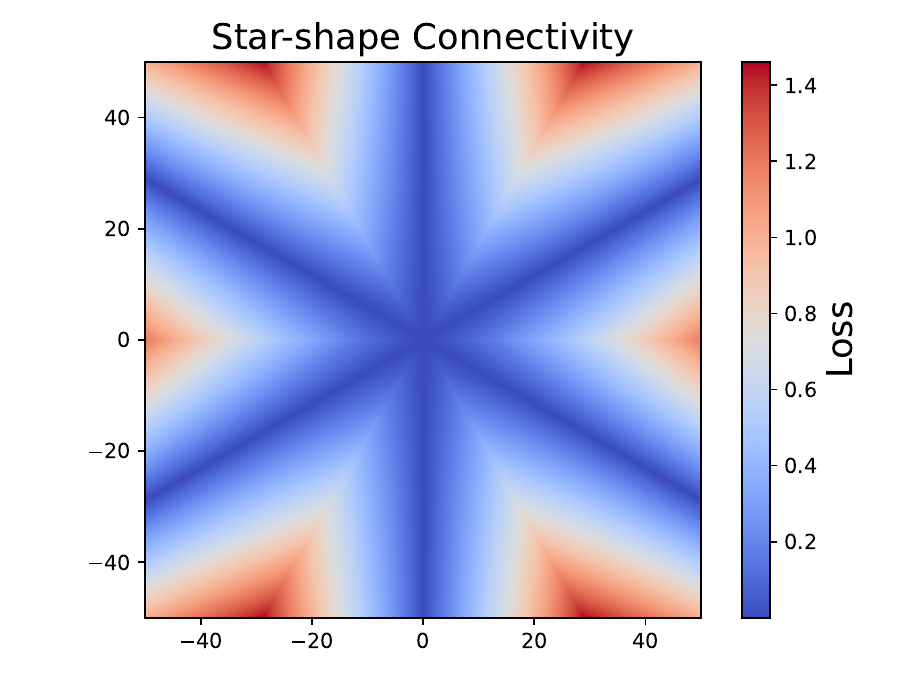}
  \end{minipage}
   \qquad
  \begin{minipage}{0.45\linewidth}
    \centering
    \includegraphics[height = 5cm, width =6cm]{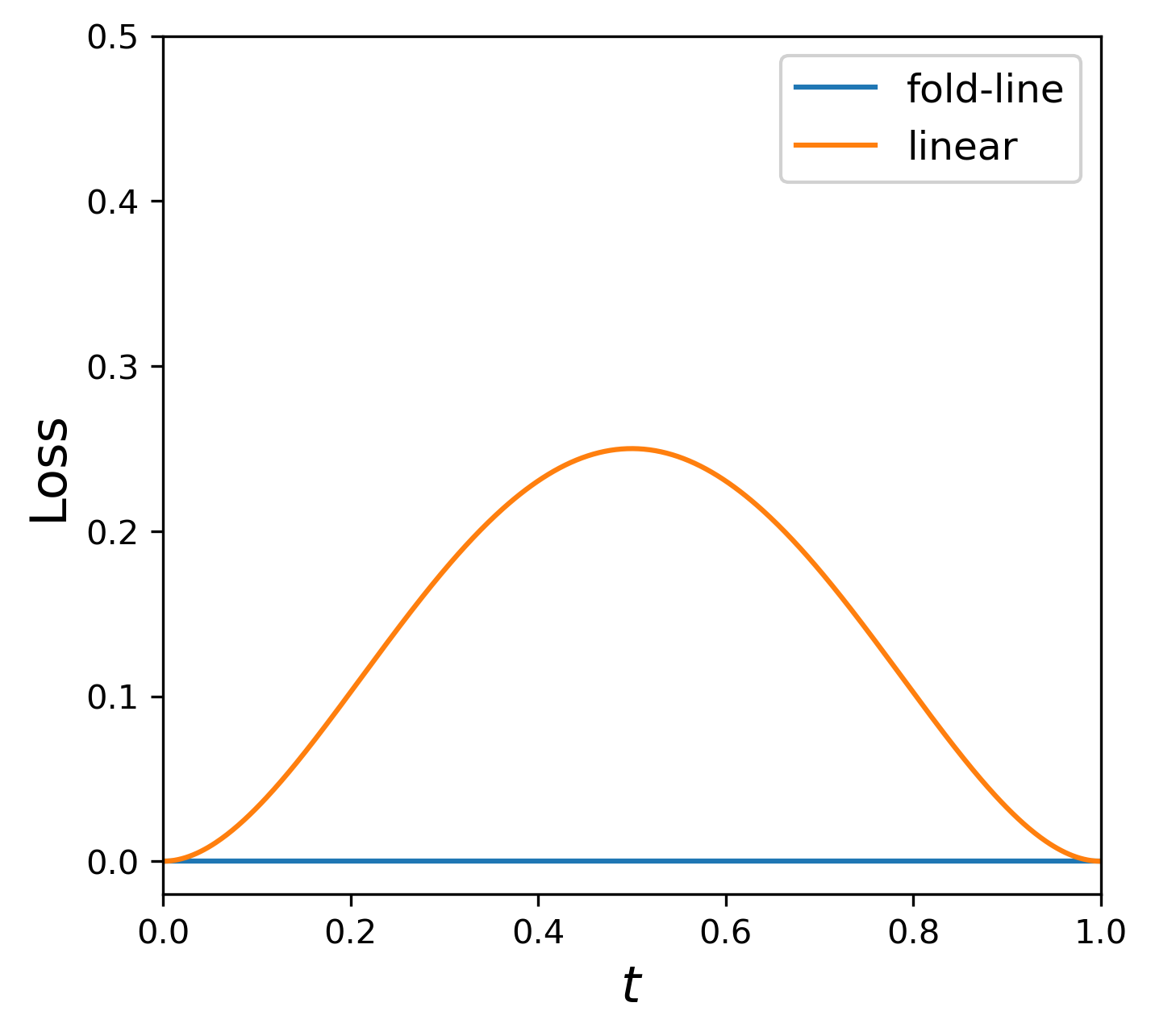}
  \end{minipage}
  \caption{\textbf{Left}: The speculation of a potential shape of the star-shaped connectivity in the loss landscape. Due to the limitation in $2$-dimensional visualization, here we only provide a potential section as a heuristic plot. \textbf{Right}: For $2$ minima $\theta_1,\theta_2$ as described in the setting of Proposition \ref{thm: linearnet-2pl}, we consider the linear mode connectivity through a center $\theta^*$. For the linear interpolations between two minima, and the $\theta_1 \rightarrow \theta^* \rightarrow \theta_2$ fold-lines constructed by two linear interpolations, we plot the training loss along these paths. Specifically, the $x$-axis $t$ here denotes the point $t{\theta^*}+(1-t)\theta_i$ in the linear interpolation (the orange line). On the other hand, for the loss along the fold-line (blue line), $t<0.5$ corresponds to the point $2t{\theta^*}+(1-2t)\theta_1$, while $t \ge 0.5$ corresponds to $(2t-1)\theta_2 + (2-2t){\theta^*}$. The result shows our expectation of linear mode connectivity through the center we obtained.}
  \label{fig:toy}
\end{figure}

\subsection{Related works}
\paragraph*{Understanding the mode connectivity.}
 There have been several studies that aim to theoretically explain the phenomenon of mode connectivity. The initial work by  Freeman and Bruna  \cite{freeman2017topology} proved the mode connectivity for both linear networks and two-layer ReLU networks when the model is regularized by squared $\ell_2$ norm. \cite{garipov2018loss} empirically discovered that connecting paths can be piecewise linear. Based on this observation, \cite{kuditipudi2019explaining}  proved that if two minima satisfy certain conditions such as the dropout stability and noise stability, then they can be connected using paths with at most $10$ linear segments. Moreover, the result in \cite{kuditipudi2019explaining}  is applicable to deep neural networks.  \cite{shevchenko2020landscape} provided a theoretical explanation of why SGD tends to find solutions that satisfy the dropout stability for two-layer neural networks.  In addition to these studies, \cite{nguyen2018loss,nguyen2021connectivity,nguyen2021solutions} investigated how the connectivity depends on the network width and depth, but the analysis does not provide any information about the structure of the connecting paths. In contrast, we investigate the particular structure of connecting paths and provide both empirical and theoretical evidence showing that when networks are sufficiently over-parameterized, typical minima can be connected using paths of merely $2$ linear segment. Moreover, we also explore the geometry of the global minima manifold by using the simplicity of connecting paths.
 
\paragraph*{Critical points.}  
 \cite{fukumizu2019semi,zhang2021embedding}  studied the hierarchical structures of critical points and in particular, how local minima degenerate to saddle points when increasing the number of neurons. \cite{ros2019complex,maillard2020landscape} provided an analytical characterization of the distribution of critical points for learning a single neuron in an asymptotic regime by using the Kac-Rice replicated method from statistical physics.  \cite{ainsworth2021plateau,fukumizu2000local,yoshida2019data} studied the appearance of plateaus in the neural network landscape and its impact on the training process. In addition, it has been always a major problem to understand under what conditions bad local minima/valley disappear \cite{kawaguchi2016deep,soudry2016no,liang2018adding,lin2022landscape}  or not \cite{auer1995exponentially,safran2018spurious,yun2018small,ding2022suboptimal,lu2017depth}.
Another line of works inspects curvatures at minima \cite{sagun2017empirical} and how the curvatures of the local landscape are related to the generalization of networks represented by those minima \cite{hochreiter1997flat,wu2017towards,jastrzkebski2017three,ma2021linear,wualignment}.  
In this paper, we study the topology and geometry of global minima by utilizing the mode connectivity.

\paragraph*{Nonlocal structures.}  Note that the mode connectivity is nonlocal in nature. Therefore, our work is also helpful for understanding the nonlocal structures of the neural network landscape.
\cite{fort2019large} proposed a phenomenological model (a set of high dimensional wedges) to study large-scale structures of neural network landscape.  \cite{goodfellow2014qualitatively} discovered the surprising monotonic linear interpolation (MLI) phenomenon: the loss often decreases monotonically in the linear interpolation between random initialization and minima found by SGD. \cite{wang2022plateau,vlaar2022can,lucas21a} provided theoretical analyses of MLI phenomenon.
 \cite{cooper2018loss} proved that in the over-parameterized case, global minima form a high-dimensional manifold and this minima manifold is path connected as implied by the mode connectivity phenomenon \cite{garipov2018loss,freeman2017topology,draxler2018essentially}. Recently, \cite{annesi2023star} showed a star-shaped structure in the regime of the spherical negative perceptron. We instead provide both theoretical and empirical evidence, showing that star-shaped structure also exists for neural networks.



\section{Preliminaries}
\label{sec: preliminary}

\paragraph*{Notation.} For an integer $n$, let $[n]=\{1,2,\dots,n\}$. For a compact $\Omega$, denote by $\text{Unif}(\Omega)$ the uniform distribution over $\Omega$.
 Let $\SS^{d-1}=\{\bx\in\RR^d : \|\bx\|_2=1\}$ and $\tau_{d-1}=\text{Unif}(\SS^{d-1})$. Denote by $\{\be_j\}_{j=1}^d$ the canonical basis of $\RR^d$.

Let 
$f: \mathcal{X}\times \Theta \mapsto \mathcal{Y}$ be a neural network with $\cX$ and $\Theta$ denoting the input space and parameter space, respectively. Let $\ell: \cY\times \cY\mapsto\RR$ be a loss function. Then the loss landscape is determined by
$
     \mathcal{R}(\theta) = \EE_{(\bx,y)\sim\cD} [\ell(f(\bx;\theta),y)],  
$ where $\cD$ denotes the input distribution. In this paper, we make the over-parameterization assumption: $\inf_{\theta\in \Theta}  \mathcal{R}(\theta)=0$. Then, the global minima manifold is given by 
\begin{equation}
\cM=\{\theta\in \Theta:  \mathcal{R}(\theta)=0\}.
\end{equation}
For any $\theta_1,\theta_2\in \cM$, denote by  $\cP_{\theta_1,\theta_2}$ the space of paths on $\cM$ connecting $\theta_1$ and $\theta_2$:
$$
\cP_{\theta_1,\theta_2}=\left\{\gamma: [0,1]\mapsto\cM\,|\, \gamma(0)=\theta_1,\gamma(1)=\theta_2\right\}.
$$

Existing works on mode connectivity imply that under some conditions, $\cM$ is path connected, i.e., $\cP_{\theta_1,\theta_2}\neq \emptyset$ for typical $\theta_1,\theta_2\in \cM$. 
In this paper, we make a refined analysis of the connectivity. 
To be rigorous, we formalize some concepts that we shall use throughout this paper below.
\begin{definition}[Linear interpolation]
For any $\theta_1,\theta_2\in \Theta$, denote by $\gamma^{\mathrm{lin}}_{\theta_1,\theta_2}:[0, 1] \rightarrow \Theta$ the linear interpolation path  defined as $\gamma(t) = t\theta_1 + (1-t)\theta_2$, $t \in [0,1]$.
\end{definition}

\begin{definition}[$k$-piece linear connectivity]\label{def:lmc}
    For any $\theta_1$, $\theta_2\in \cM$, 
    we write $\theta_1 \leftrightarrow \theta_2$ if $\gamma_{\theta_1,\theta_2}^{\lin}\subset \cM$. We say $\theta_1$ and $\theta_2$ are $k$-piece linearly ($k$-PL) connected  if there exist $\beta_1,\dots,\beta_{k-1}\in \cM$ such that $\theta_1\leftrightarrow \beta_1\leftrightarrow \cdots \leftrightarrow\beta_{k-1}\leftrightarrow \theta_2$. Particularly,  the case of $k=1$ is referred to as linear connectivity. 
\end{definition}

\begin{definition}[Star-shaped linear connectivity]\label{smc}
    For multiple minima $S=\{\theta_i\}_{i=1}^r\subset \cM$, we refer to the star-shaped linear connectivity as there exists a $\theta^*\in \cM$ such that $\theta_i \leftrightarrow \theta^*$ for all $i=1,2,\ldots,r$. Specifically, $\theta^*$ and $S$ are said to be the \emph{center} and feet, respectively.
\end{definition}


In this paper, we also consider another quantity to measure the strength of connectivity. 
\begin{definition}[Normalized geodesic distance (NGD)]\label{def: NGD}
For any $\theta_1,\theta_2\in \cM$, define the normalized geodesic distance between $\theta_1$ and $\theta_2$ by
\begin{align}
G(\theta_1,\theta_2) = \frac{\inf_{\gamma\in \cP_{\theta_1,\theta_2}}  \int_0^1 \|\gamma'(t)\|_2\dd t}{\|\theta_1-\theta_2\|_2}.
\end{align}
If $\cP_{\theta_1,\theta_2}$ is an empty set, set $G(\theta_1,\theta_2)=+\infty$.
\end{definition}
If the landscape is convex, it is trivial that the NGD is exactly $1$ for any pair of global minima since the geodesic is simply the linear interpolation. However, for nonconvex landscapes, the NGD is always strictly greater than $1$.  The  value of NGD can serve as a factor to quantify the degree of non-convexity. If the NGD keeps close to $1$ for any pair of minima, then the landscape should be somehow as benign as a convex one. Otherwise, the landscape must be highly non-convex. We are particularly interested in  how the value of NGD changes as increasing the level of over-parameterization. 


\section{Two-layer ReLU networks}
\label{sec: 2lnn}

We first consider the two-layer ReLU network under a teacher-student setup, where the label is generated by a teacher network: $f^*(\bx)=\sum_{j=1}^M \sigma(\bw_j^*\cdot \bx)$. Here the activation function $\sigma:\RR\mapsto\RR$ is given by $\sigma(z):=\max(0,z)$. We make the following assumption.
\begin{assumption}\label{assumption: 2lnn}
Suppose $M\leq d$, $\bw^*_j=\be_j$ for $j=1,\dots,M$, and $\bx\sim \tau_{d-1}$.
\end{assumption}

By the rotational symmetry, the specific assumption that $\bw_j^*=\be_j$ for $j=1,\dots, M$ is equivalent to only assume  $\{\bw_j^*\}_{j=1}^M$ to be orthonormal. However, we will focus on Assumption \ref{assumption: 2lnn} to make our statement more succinct. In such a case,  the loss objective can be rewritten as 
\begin{equation}\label{2}
     \mathcal{R}(\theta)=\mathbb{E}_{\bx\sim\tau_{d-1}}\left[\left(\sum_{i=1}^{m} \sigma\left(\bw_i\cdot \bx\right)-\sum_{i=1}^{M} \sigma\left( x_i \right)\right)^{2}\right],
\end{equation}
where $m$ denotes the number of neurons of the student network and $\theta=(\bw_1,\bw_2,\dots,\bw_m)^T=(w_{i,j})\in\RR^{m\times d}$ . Using this notation, each row of $W$ represents a student neuron.

Assumption \ref{assumption: 2lnn} allows us to obtain the following analytic characterization of the global minima manifold.  This characterization will play a critical role in our theoretical analysis and might be of independent interest to other related problems as well.

\begin{theorem}\label{thm: 2lnn-minima}
Suppose that $m \geq M$ and Assumption \ref{assumption: 2lnn} hold. Let $S_0 = \{(0,\ldots,0) \in \mathbb{R}^d\}$,  $S_j = \{\alpha \be_j: \alpha\neq 0\}$ for $j\in [M]$, and $S=\cup_{j=0}^M S_j$. Then the global minima manifold $\cM$ is a compact set in $\RR^{m\times d}$: 
\begin{align}
\cM=\left\{\theta=(\bw_1,\dots,\bw_m)^T\in \RR^{m\times d}\,:\, \,\forall\, i\in [m],  \bw_i\in S\text{ and } \forall\, j\in [M],  \sum_{i=1}^m w_{i,j}=1 \right\}
\end{align}
\end{theorem}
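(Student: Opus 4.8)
The plan is to characterize $\cM$ by directly analyzing when the population loss in \eqref{2} vanishes. The key technical tool is the observation that the map $\bw \mapsto \E_{\bx \sim \tau_{d-1}}[\sigma(\bw\cdot\bx)\sigma(\bv\cdot\bx)]$ has a known closed form (the classic "arc-cosine kernel" identity), so that $\mathcal{R}(\theta)$ can be rewritten purely in terms of the norms $\|\bw_i\|$, the pairwise angles $\angle(\bw_i,\bw_j)$, and the inner products $\bw_i\cdot\be_j$. First I would reduce to showing the two inclusions separately. The inclusion $\supseteq$ is the easy direction: if every $\bw_i$ lies in some axis $S_j$ (or is zero) and $\sum_i w_{i,j}=1$ for each $j\in[M]$, then on every input $\bx$ the student computes $\sum_i \sigma(\bw_i\cdot\bx) = \sum_{j=1}^M \sigma(\sum_{i: \bw_i \in S_j} w_{i,j} x_j)$; since the positive-homogeneity of ReLU and the fact that all coefficients sharing axis $j$ contribute $\sigma(c\, x_j)$ — here one must be slightly careful, because $\sigma$ is not linear, so I actually need that for each $j$ the nonzero entries $w_{i,j}$ all have the sum equal to $1$ AND that this forces $\sum_i\sigma(w_{i,j}x_j)=\sigma(x_j)$; this holds iff all the $w_{i,j}$ on axis $j$ are nonnegative, OR more generally because $\sigma(ax)+\sigma(bx)=\sigma((a+b)x)$ fails in general — so I must double check the theorem statement allows $\alpha<0$ in $S_j$. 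Re-examining: $S_j=\{\alpha\be_j:\alpha\neq 0\}$ permits negative $\alpha$, so the claim $\sum_i\sigma(w_{i,j}x_j)=\sigma(x_j)$ needs $\sum_i w_{i,j}=1$ to be supplemented; in fact I suspect the correct reading is that within each axis the contributions telescope only when there is effectively one "active" sign, and the constraint $\sum_i w_{i,j}=1$ together with the structure forces this — so I would verify this carefully, possibly concluding that if two neurons lie on axis $j$ with opposite signs the loss is strictly positive, which would actually be part of the $\subseteq$ direction.

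**The hard direction.** For $\subseteq$, suppose $\mathcal{R}(\theta)=0$, i.e., $g(\bx):=\sum_{i=1}^m\sigma(\bw_i\cdot\bx) = \sum_{j=1}^M\sigma(x_j)$ for $\tau_{d-1}$-a.e. $\bx$, hence (by continuity of both sides) for all $\bx\in\SS^{d-1}$, hence by homogeneity for all $\bx\in\RR^d$. The core structural fact is that a finite sum of ReLU ridge functions $\sum_i\sigma(\bw_i\cdot\bx)$ is a piecewise-linear convex... no, not convex, but piecewise linear function whose set of non-differentiability is contained in $\bigcup_i \{\bx:\bw_i\cdot\bx=0\}$, a union of hyperplanes. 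The right-hand side $\sum_{j=1}^M\sigma(x_j)$ is non-smooth exactly on $\bigcup_{j=1}^M\{x_j=0\}$. I would argue that the "singular hyperplane arrangement" must match: each hyperplane $\{x_j=0\}$, $j\in[M]$, must appear among the $\{\bw_i\cdot\bx=0\}$, forcing the corresponding $\bw_i$ to be proportional to $\be_j$; and no neuron may introduce a spurious hyperplane not among the $\{x_j=0\}$, which forces every remaining $\bw_i$ to be $0$. Making "must match" precise is the main obstacle: I would localize near a generic point of $\{x_j=0\}$ where no other singular hyperplane passes, compute the jump in the gradient of $g$ across that hyperplane (which is $\be_j$ from the RHS), and match it to $\sum_{i:\bw_i\parallel\be_j}\bw_i$ from the LHS, obtaining both the axis-alignment and the normalization $\sum_{i}w_{i,j}=1$ simultaneously. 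The sign subtlety above gets resolved here too: the gradient jump across $\{x_j=0\}$ from neurons on axis $j$ with positive vs. negative orientation have opposite signs, and to get a clean jump of $\be_j$ with no leftover singularity we need them effectively aligned — I would sort this out via the local linear pieces.

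**Assembling.** Concretely the steps are: (1) upgrade a.e.-equality to everywhere-equality on $\RR^d$ by continuity and homogeneity; (2) identify the singular locus of each side as a hyperplane arrangement and show they coincide, giving $\bw_i\in\bigcup_{j=1}^M\R\be_j\cup\{0\}$ for all $i$ (no neuron off-axis, possibly after discarding zero neurons); (3) on each axis $j$, collect $c_j:=\sum_{i:\bw_i\in S_j}w_{i,j}$ and reduce the identity to $\sum (\text{terms}) = \sigma(x_j)$ in the single variable $x_j$, forcing $c_j=1$ and ruling out mixed signs; (4) check the remaining neurons contribute $0$ identically, so they must be $0$; (5) verify compactness — $\cM$ is closed (it is the zero set of the continuous $\mathcal{R}$) and bounded because each $|w_{i,j}|\le 1$... wait, that is not immediate since many neurons could share an axis with large cancelling entries, but step (3)'s no-mixed-signs conclusion gives $0\le w_{i,j}$ and $\sum_i w_{i,j}=1$ hence $w_{i,j}\in[0,1]$, giving boundedness. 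I expect step (2), rigorously matching hyperplane arrangements of two piecewise-linear functions, to be the crux; everything else is bookkeeping once that normal-form is in hand. I would either cite a uniqueness-of-ReLU-representation lemma or prove the needed special case by the local gradient-jump argument sketched above.
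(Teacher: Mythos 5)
Your proposal is correct in substance but reaches the hard inclusion by a genuinely different route than the paper. The paper's proof is entirely by test-point evaluation: it plugs $\bx=\pm\be_j$ into the identity $\sum_i\sigma(\bw_i\cdot\bx)=\sum_j\sigma(x_j)$ to get $w_{ij}\ge 0$ and $\sum_i w_{ij}=1$ for $j\in[M]$ (and $w_{ij}=0$ for $j>M$), and then rules out a neuron with two nonzero coordinates by evaluating at a single perturbed point $\sqrt{1-\epsilon^2}\,\be_1-\epsilon\be_2$ and exhibiting a strict inequality. You instead extend the identity to all of $\RR^d$ by homogeneity and match the singular loci of the two piecewise-linear functions; this needs the (true, but worth stating) fact that a sum of ReLU ridge functions with unit outer weights cannot have its kinks cancel --- the jump in the directional derivative across $\{\bw\cdot\bx=0\}$ contributed by neurons on that hyperplane is $\sum_i|\bw_i\cdot v|>0$ --- after which axis-alignment and the absence of spurious hyperplanes follow, and the per-axis reduction to a one-variable identity $\sum_i\sigma(\alpha_i t)=\sigma(t)$ gives both $\sum\alpha_i=1$ and the exclusion of mixed signs. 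Your approach is more structural and would generalize to non-orthonormal teachers; the paper's is shorter and avoids any uniqueness-of-representation lemma. One point in your favor: you correctly flagged that $S_j=\{\alpha\be_j:\alpha\neq 0\}$ as written admits $\alpha<0$, under which the reverse inclusion (and compactness) would fail --- e.g.\ $\bw_1=T\be_1$, $\bw_2=(1-T)\be_1$ with $T>1$ satisfies the stated constraints but is not a global minimum since $\sigma(Tx_1)+\sigma((1-T)x_1)\neq\sigma(x_1)$ for $x_1<0$. The paper's own forward argument proves $w_{ij}\ge 0$, so the intended set is the one with $\alpha>0$; your step (3)/(5) resolves this the same way, and your observation that the sign restriction is what makes $\cM$ bounded is exactly right.
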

The proof is deferred to Appendix \ref{pf: thm: 2lnn-minima}.
Note that $S_j \cap S_k = \emptyset$ for any $j \neq k \in \{0, 1,\ldots, M\}$. Hence Theorem \ref{thm: 2lnn-minima} implies the following facts about the global minima:
 \begin{itemize}
\item  There are at most $m+1$ types of student neurons, represented by $S_0,S_1,\dots,S_M$, regardless of how overparameterized the student network is. Moreover, for any $j\in [M]$, there exists at least one student neuron taking the type of $S_j$.
\item For each neuron, there exists at most one coordinate to be nonzero and the coordinates from $M+1$ to $d$ must be zero.
 \end{itemize}


The following lemma provides a precise condition of when two global minima are linearly connected, which will be used in our subsequent analysis.

\begin{lemma}[Linear connectivity]\label{lemma: 2pl-2lnn}
    For any two global minima  $\theta^{(1)} = (\bw^{(1)}_1,\ldots,\bw^{(1)}_m)^{T}$, \\$\theta^{(2)} = (\bw^{(2)}_1,\ldots,\bw^{(2)}_m)^{T} \in \mathbb{R}^{m\times d}$, we have $W^{(1)} \leftrightarrow W^{(2)}$ if and only for any $i \in \{1,\ldots,m\}$, one of the following happens:
    \begin{itemize}
    \item $\bw^{(1)}_{i} \in S_0$ or $\bw^{(2)}_{i} \in S_0$;
    \item there exists $j \in \{1,\ldots,M\}$ such that $\bw^{(1)}_{i} \in S_j$ and $\bw^{(2)}_{i} \in S_j$.
    \end{itemize}
\end{lemma}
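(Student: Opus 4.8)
The plan is to reduce the statement to a purely geometric fact about the union of lines $S = \bigcup_{j=0}^M S_j$, using the explicit description of $\cM$ supplied by Theorem \ref{thm: 2lnn-minima}. Write $\theta(t) := \gamma^{\lin}_{\theta^{(1)},\theta^{(2)}}(t)$, so its $i$-th row is $\bw_i(t) = t\bw_i^{(1)} + (1-t)\bw_i^{(2)}$. The affine part of the characterization is automatically preserved along the segment: for each $j \in [M]$ we have $\sum_{i=1}^m w_{i,j}(t) = t\cdot 1 + (1-t)\cdot 1 = 1$, because both endpoints lie in $\cM$. Hence, by Theorem \ref{thm: 2lnn-minima}, $\theta(t) \in \cM$ for all $t \in [0,1]$ if and only if $\bw_i(t) \in S$ for every $i$ and every $t \in [0,1]$. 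So it suffices to prove the following: for $\bu,\bv \in S$, the segment $\{t\bu + (1-t)\bv : t\in[0,1]\}$ is contained in $S$ if and only if $\bu \in S_0$, or $\bv \in S_0$, or $\bu$ and $\bv$ lie on a common $S_j$.

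For the \emph{if} direction I would check the cases directly. If one endpoint, say $\bw_i^{(1)}$, equals $0$, then $\bw_i(t) = (1-t)\bw_i^{(2)}$ is a nonnegative multiple of an element of $S$; since each $S_j\cup\{0\}$ is a line through the origin, this stays in $S$. If instead $\bw_i^{(1)} = \alpha_1\be_j$ and $\bw_i^{(2)} = \alpha_2\be_j$ with $\alpha_1,\alpha_2\neq 0$, then $\bw_i(t) = \bigl(t\alpha_1 + (1-t)\alpha_2\bigr)\be_j$, which lies in $S_j$ when the scalar is nonzero and equals $0\in S_0$ otherwise; in all cases $\bw_i(t)\in S$. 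Combining this with the affine identity above gives $\theta(t)\in\cM$ for all $t$.

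For the \emph{only if} direction, suppose $\theta(t)\in\cM$ for all $t\in[0,1]$ but the condition fails for some $i$. Taking $t=0$ and $t=1$ shows $\bw_i^{(1)},\bw_i^{(2)}\in S$, and since by assumption neither lies in $S_0$ we can write $\bw_i^{(1)} = \alpha_1\be_{j_1}$ and $\bw_i^{(2)} = \alpha_2\be_{j_2}$ with $\alpha_1,\alpha_2\neq 0$ and $j_1\neq j_2$. Then for any $t\in(0,1)$ the vector $\bw_i(t) = t\alpha_1\be_{j_1} + (1-t)\alpha_2\be_{j_2}$ has two distinct nonzero coordinates, contradicting the fact that every element of $S$ has at most one nonzero coordinate. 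This establishes the equivalence.

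I do not anticipate a genuine obstacle here: given Theorem \ref{thm: 2lnn-minima}, the lemma is essentially bookkeeping. The only points requiring mild care are the degenerate parameters $t=0$ and $t=1$, where a point moving along an axis $S_j$ passes through the origin, and the observation that linear interpolation preserves the affine constraints $\sum_{i=1}^m w_{i,j}=1$. The conceptual content is simply that $S$ is a star-shaped union of lines through $0$, so a segment lies in $S$ exactly when its two endpoints are not simultaneously supported on two distinct coordinate axes.
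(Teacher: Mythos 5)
Your proposal is correct and follows essentially the same route as the paper's proof: it reduces the question to a row-wise condition via the characterization of $\cM$ in Theorem \ref{thm: 2lnn-minima}, observes that the affine constraints $\sum_i w_{i,j}=1$ are preserved by linear interpolation, and settles both directions by counting nonzero coordinates of the interpolated rows. Your case analysis for the ``if'' direction (in particular the interpolated row passing through the origin) is slightly more explicit than the paper's, but the argument is the same.
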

The above lemma (proof deferred to Appendix \ref{pf: lemma: 2pl-2lnn}) means that if $\theta_1\leftrightarrow \theta_2$, then for any $i\in [m]$, the nonzero coordinates of $\bw^{(1)}_i$ and $\bw^{(2)}_i$ must be the same if they are not zero simultaneously.


\subsection{The $k$-piece linear connectivity}
\begin{theorem}\label{thm: 2pl-2lnn}
    Suppose $m > M$ and  two minima $\theta^{(1)}$, $\theta^{(2)}$ are \iid drawn from $\mathrm{Unif}(\cM)$. Then, \wp~at least $p_{m,M}=1 - M(\frac{M^2-1}{M^2})^{m - 2M}$,  $\theta^{(1)}$ and $\theta^{(2)}$ are $2$-PL connected.
\end{theorem}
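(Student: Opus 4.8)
The plan is to construct, for typical $\theta^{(1)},\theta^{(2)}$, a single center $\beta\in\cM$ realizing $\theta^{(1)}\leftrightarrow\beta\leftrightarrow\theta^{(2)}$, and then to bound the probability that this construction is available. Everything happens at the level of \emph{type vectors}: by Theorem~\ref{thm: 2lnn-minima}, a point drawn from $\mathrm{Unif}(\cM)$ almost surely has no zero row, so we may attach to $\theta^{(\ell)}$ a vector $(t^{(\ell)}_1,\dots,t^{(\ell)}_m)\in[M]^m$ with $\bw^{(\ell)}_i\in S_{t^{(\ell)}_i}$, and Theorem~\ref{thm: 2lnn-minima} moreover forces every value in $[M]$ to occur in $t^{(\ell)}$. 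Call $j\in[M]$ \emph{good} if some neuron $i$ satisfies $t^{(1)}_i=t^{(2)}_i=j$, and let $E$ be the event that all $j\in[M]$ are good.

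On $E$, pick for each $j$ a neuron $i_j$ with $t^{(1)}_{i_j}=t^{(2)}_{i_j}=j$ (these are automatically distinct), set $\bw^\beta_{i_j}=\be_j$, and set $\bw^\beta_i=\mathbf 0$ for all other $i$. Then $\beta\in\cM$: each row lies in $S$, and for every $j$ the only $\beta$-row with a nonzero $j$-th entry is $\bw^\beta_{i_j}=\be_j$, so $\sum_i w^\beta_{i,j}=1$. By Lemma~\ref{lemma: 2pl-2lnn}, $\theta^{(1)}\leftrightarrow\beta$ (at $i=i_j$ both rows lie in $S_j$; at every other $i$, $\bw^\beta_i\in S_0$) and likewise $\theta^{(2)}\leftrightarrow\beta$, so on $E$ the two minima are $2$-PL connected. (Applying Lemma~\ref{lemma: 2pl-2lnn} to any hypothetical center at one of its type-$j$ rows shows $E$ is in fact also necessary, hence $\mathbb{P}(2\text{-PL})=\mathbb{P}(E)$, but only the lower bound $\mathbb{P}(E)\ge p_{m,M}$ is needed.) Since connectivity and the construction depend on $\theta^{(1)},\theta^{(2)}$ only through their type vectors, and $\cM$ is symmetric under permuting coordinates $1,\dots,M$, a union bound reduces the theorem to the single estimate $\mathbb{P}(1\text{ not good})\le\big((M^2-1)/M^2\big)^{m-2M}$.

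To prove that estimate I would condition on $\theta^{(1)}$, put $A=\{i:t^{(1)}_i=1\}$ and $a=|A|\ge1$; by neuron-exchangeability, $\mathbb{P}(1\text{ not good}\mid\theta^{(1)})=\mathbb{P}(t^{(2)}_i\ne1\ \forall i\in A)$ depends on $A$ only through $a$. Using that the type vector of $\theta^{(2)}$ is supported (up to a null set) on surjections $[m]\to[M]$, the number of such surjections avoiding the value $1$ on a fixed $a$-set is at most $(M-1)^aM^{m-a}$, while the total number of surjections is at least $M^m\big(1-M(1-1/M)^m\big)$ by a union bound over the missing value; hence $\mathbb{P}(t^{(2)}_i\ne1\ \forall i\in A)\le(1-1/M)^a\big/\big(1-M(1-1/M)^m\big)$. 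Averaging over $\theta^{(1)}$, applying the same surjectivity correction to $\theta^{(1)}$, and using the exact identity $\EE_{t\sim\mathrm{Unif}([M]^m)}\big[(1-1/M)^{\#\{i:t_i=1\}}\big]=(1-1/M^2)^m$ (a binomial moment computation), one gets $\mathbb{P}(1\text{ not good})\le(1-1/M^2)^m\big(1-M(1-1/M)^m\big)^{-2}$, and an elementary inequality shows the right-hand side is $\le\big((M^2-1)/M^2\big)^{m-2M}$ on the range of $(m,M)$ for which $p_{m,M}>0$ (the bound being trivial otherwise).

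The one genuine subtlety — and the step I expect to be the main obstacle — is that "uniform on $\cM$" is the $(m-M)$-dimensional Hausdorff measure, under which the top-dimensional strata of $\cM$ (products of probability simplices of differing dimensions and volumes, indexed by surjective type vectors) are \emph{not} equally weighted, so the type vector is not exactly a uniform surjection. One therefore has to verify that the induced law on the type counts $(\#\{i:t_i=1\},\dots,\#\{i:t_i=M\})$ still concentrates near the balanced vector $(m/M,\dots,m/M)$ strongly enough for the estimate above to survive; reassuringly, the non-uniform weighting only tilts the counts \emph{toward} balance, which helps. Steps 1--2 are routine given Theorem~\ref{thm: 2lnn-minima} and Lemma~\ref{lemma: 2pl-2lnn}; the work is in making this concentration (or a direct computation with the Hausdorff weights) quantitative.
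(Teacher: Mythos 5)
Your reduction is the same as the paper's: the good event is that every teacher direction $j\in[M]$ is matched at some common neuron index of the two minima; on that event the ``merged'' center (one row $\be_j$ at a matched index for each $j$, zeros elsewhere) lies in $\cM$ and is linearly connected to both minima by Lemma~\ref{lemma: 2pl-2lnn}; and the failure probability is controlled by a union bound over $j\in[M]$. That part of your argument is correct and is in fact spelled out more carefully than in the paper, which leaves the center construction implicit. Where you diverge is the per-type probability estimate. The paper gets the stated constant directly by a cruder device: it sets aside (at most) $2M$ coordinates to account for the surjectivity constraint on each type vector, declares the types of the remaining $m-2M$ coordinates to be i.i.d.\ uniform over the $M$ nonzero classes, and reads off $\bigl((M^2-1)/M^2\bigr)^{m-2M}$ as the probability that a fixed type is never matched among the free coordinates. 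Your route --- conditioning on $\theta^{(1)}$, counting surjections, correcting for surjectivity, and invoking an unproved ``elementary inequality'' to recover the exponent $m-2M$ --- is more work and is not carried to the end; as written it is a plan rather than a proof, and the final inequality chain would need to be checked on the precise range of $(m,M)$ where $p_{m,M}>0$. The subtlety you flag is genuine: $\mathrm{Unif}(\cM)$ lives on a stratified set whose top-dimensional strata (indexed by surjective type vectors) have unequal volumes, so the induced law of the type vector is uniform neither over $[M]^m$ nor over surjections; neither your computation nor the paper's is rigorous on this point, the paper simply positing the i.i.d.-uniform model for the free coordinates. So relative to the paper you have lost nothing essential, but you have also not closed the one step that needed care, and your detour through surjection counting makes the constant harder to verify than the paper's direct (if heuristic) count.
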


The proof of this theorem can be found in Appendix \ref{pf: thm: 2pl-2lnn}. Notice that the probability $p_{m,M}\to 1$ as $m\to\infty$, implying that when the student is sufficiently over-parameterized, the $2$-PL connectivity holds with probability nearly $1$. Quantitatively speaking, for any $\delta\in (0,1)$,  $m\geq C M^2\log(M/\delta)$ is sufficient to guarantee that the probability of $2$-PL connectivity is no less than $1-\delta$.

The following theorem further shows that if allowing the number of pieces to be slightly larger, then the $k$-PL connectivity holds for two global minima. 
\begin{theorem}\label{thm: 4pl-2lnn}
    Suppose $m \ge 2M -1$, then any two global minima are $4$-PL connected.
\end{theorem}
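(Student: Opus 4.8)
First I would recast everything in terms of \emph{type patterns}. By Theorem~\ref{thm: 2lnn-minima}, a minimum $\theta=(\bw_1,\dots,\bw_m)^T$ is determined by the map $s_\theta:[m]\to\{0,1,\dots,M\}$ with $s_\theta(i)=j$ iff $\bw_i\in S_j$; such a pattern comes from a minimum iff every $j\in[M]$ is used at least once (then one can put unit weight on a single neuron of each type to meet the column constraints). By Lemma~\ref{lemma: 2pl-2lnn}, $\theta\leftrightarrow\theta'$ iff the patterns are \emph{compatible}: at each $i$, $s_\theta(i)=0$ or $s_{\theta'}(i)=0$ or $s_\theta(i)=s_{\theta'}(i)$; and along any such linear path the column sums stay $1$ by linearity. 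So the task becomes purely combinatorial: any two valid patterns on $[m]$ with $m\ge 2M-1$ are joined by $\le 4$ compatibility steps, and any valid pattern may serve as an intermediate.

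\textbf{Construction.} Two one-step moves are basic: (i) \emph{consolidating} a pattern $s$ to a pattern with each color used once, keeping color $j$ at one coordinate where $s$ already has it and zeroing the rest (this is compatible with $s$); and (ii) the reverse expansion. Accordingly I would take the path
\[
\theta^{(1)}\;\leftrightarrow\;\bar\theta^{(1)}\;\leftrightarrow\;\cdots\;\leftrightarrow\;\bar\theta^{(2)}\;\leftrightarrow\;\theta^{(2)},
\]
with $\bar\theta^{(1)},\bar\theta^{(2)}$ consolidations and the middle stretch a \emph{bridge} between the two minimal (one-neuron-per-type) patterns. The core lemma to prove is: (A) any two minimal patterns on $[m]$, $m\ge 2M-1$, are $3$-PL connected; and (B) if at least one of $s^{(1)},s^{(2)}$ has a repeated color, the consolidations can be chosen so that the bridge is $2$-PL. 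Given (A)--(B), the segment count is $0{+}3{+}0$ when both inputs are already minimal, $1{+}3{+}0$ when exactly one is, and $1{+}2{+}1$ when neither is, hence $\le 4$ throughout. The reason the bridge works is the budget $m\ge 2M-1=M+(M-1)$: after the $M$ coordinates hosting the types there are $\ge M-1$ \emph{spare} coordinates, and one uses them to slide colors past each other. Concretely, for minimal $\bar s^{(1)}\to\bar s^{(2)}$, split $[m]$ into the coordinates used only by $\bar s^{(1)}$, only by $\bar s^{(2)}$, by both, and by neither, and run a generalized cyclic shift: in step $1$ move each displaced color onto a spare or just-vacated coordinate, in step $2$ advance it toward its $\bar s^{(2)}$-slot, and in step $3$ settle the stragglers (the shared coordinates are what force the third step). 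For (B), a repeated color in $s^{(1)}$ (or $s^{(2)}$) gives a choice of which neuron to keep; I would feed this into a Hall-type matching to pick consolidations whose supports meet only in \emph{aligned} coordinates, after which the shift finishes in two steps.

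\textbf{Main obstacle.} The delicate case is two minimal patterns with almost-disjoint supports. Since $m<2M$ they must still share at least one coordinate, and at $m=2M-1$ there is literally no spare coordinate left; a two-step bridge then provably fails (already for the two transposition patterns with $M=2$, $m=3$), so one genuinely needs three steps, and the hard part will be checking that three \emph{always} suffice while keeping every intermediate pattern valid (all $M$ colors present) and compatible with its neighbors. This is precisely what makes the constant $4$ rather than $3$: when both endpoints are non-minimal we must spend a step consolidating each side, leaving only two steps for the bridge, and it is the adaptivity in (B) that keeps the total at $4$.
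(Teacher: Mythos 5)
Your reduction to type patterns and the consolidate--bridge--expand architecture is exactly the paper's strategy (the paper calls consolidation ``merging''), and your diagnosis of the crux is sharp and correct: at $m=2M-1$ two minimal patterns can fail to admit a single common neighbour (your transposition example $s^{(1)}=(1,2,0)$, $s^{(2)}=(2,1,0)$ with $M=2$, $m=3$ is right --- any common neighbour is forced to vanish on coordinates $1$ and $2$ and cannot carry both colors on the single spare coordinate), so a genuinely three-segment bridge is sometimes needed. This is a point the paper's own Case~2 treats loosely: it relies on re-choosing the merge of $\theta^{(2)}$, which is impossible when $\theta^{(2)}$ is already minimal, whereas your case split (three-segment bridge when an endpoint is minimal, two-segment bridge with adaptively chosen consolidations otherwise) is the right way to organize the argument.

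The gap is that the two lemmas carrying all the content, (A) and (B), are only named, not proved, and you say so yourself (``the hard part will be checking that three always suffice''). The paper's entire proof consists of the nested case analysis that plays the role of (B): given repeated colors, iteratively ``switch'' which representative of a repeated type the merge keeps until the merged patterns admit a common neighbour; without that induction, or your cyclic-shift argument for (A), written out and verified to preserve validity of every intermediate pattern, nothing is established. Moreover, the mechanism you propose for (B) --- choose consolidations whose supports meet only in aligned coordinates --- is not always achievable: for $M=2$, $m=3$, $s^{(1)}=(1,1,2)$, $s^{(2)}=(2,2,1)$, every consolidation of $s^{(1)}$ uses coordinate $3$ with color $2$ and every consolidation of $s^{(2)}$ uses coordinate $3$ with color $1$, so the supports always meet in a misaligned coordinate. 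A one-intermediate bridge still exists there (e.g.\ $(1,0,2)\leftrightarrow(1,2,0)\leftrightarrow(0,2,1)$), but it has to be produced by an argument like the paper's switching induction rather than by a Hall-type matching of supports. So the skeleton is right and matches the paper, but the combinatorial core of the proof still has to be supplied.
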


The proof is deferred to Appendix \ref{pf: thm: 4pl-2lnn}.


\subsection{Star-shaped connectivity} 
\begin{theorem}\label{thm: star-shaped-2lnn-1}
    Suppose $m > M$ and let $\theta_1,\theta_2$ be two minima \iid drawn from $\mathrm{Unif}(\cM)$.  Then, \wp~at least $1 - M(\frac{M^k-1}{M^k})^{m - kM}$, there exists a center $\theta^*\in \cM$ such that $\theta^{i} \leftrightarrow \theta^*$ for all $i \in [k]$. 
\end{theorem}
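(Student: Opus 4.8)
The approach is to reduce the claim --- via the explicit description of the minima manifold in Theorem~\ref{thm: 2lnn-minima} and the connectivity criterion in Lemma~\ref{lemma: 2pl-2lnn} --- to a covering-type probability estimate on neuron types, generalizing the mechanism behind Theorem~\ref{thm: 2pl-2lnn} from $k=2$ to general $k$. For $\theta \sim \mathrm{Unif}(\cM)$, the pieces of $\cM$ containing an $S_0$-neuron have strictly smaller dimension than $\cM$ itself, so almost surely every neuron lies in exactly one $S_j$ with $j\in[M]$; call this $j$ the \emph{type} of the neuron. Given the $k$ samples $\theta^{(1)},\dots,\theta^{(k)}$, I build the candidate center $\theta^*$ as follows: for each $j\in[M]$ for which some neuron has type $j$ in all $k$ samples, fix one such index $i_j$ and set $\bw^*_{i_j}=\be_j$; set every other neuron of $\theta^*$ to $0$. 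By Theorem~\ref{thm: 2lnn-minima}, $\theta^*\in\cM$ (every row lies in $S$ and the $j$-th column sum is $w^*_{i_j,j}=1$), and by Lemma~\ref{lemma: 2pl-2lnn}, $\theta^*\leftrightarrow\theta^{(t)}$ for every $t$ (a non-selected neuron of $\theta^*$ lies in $S_0$, and a selected neuron $i_j$ lies in $S_j$ in both $\theta^*$ and $\theta^{(t)}$). Hence star-shaped connectivity holds whenever \emph{for every $j\in[M]$ some neuron has type $j$ in all $k$ samples}, and it remains to lower-bound the probability of this event.

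By a union bound over $j\in[M]$ it suffices to show $\Pr[\text{no neuron has type }j\text{ in all }k\text{ samples}]\le(1-M^{-k})^{m-kM}$. The key input is that the $k$ type vectors are i.i.d.\ and, conditionally on their per-sample type-count vectors, each is a uniform assignment of $[m]$ to $[M]$ with those counts, with the count distribution induced by $\mathrm{Unif}(\cM)$ concentrating around the balanced profile $n_j\approx m/M$. Two routes then reach the target exponent. (i) \emph{Conditioning on surjectivity}: a uniform surjective assignment is i.i.d.\ $\mathrm{Unif}([M])$ types conditioned on hitting all $M$ types, and $\mathrm{Unif}(\cM)$ only tilts this further toward balance; under i.i.d.\ $\mathrm{Unif}([M])$ types the $m$ neurons have i.i.d.\ type profiles in $[M]^k$, so the probability that none equals $(j,\dots,j)$ is exactly $(1-M^{-k})^m$, and dividing by $\Pr[\text{all }k\text{ samples surjective}]=\Pr[\text{one surjective}]^k\ge(1-M^{-k})^{kM}$ gives $(1-M^{-k})^{m-kM}$ (the last inequality being routine whenever $m$ exceeds a constant multiple of $M^k\log M$, which is the only regime in which the claimed bound is non-vacuous, and moot otherwise since then $M(1-M^{-k})^{m-kM}\ge1$). (ii) \emph{Negative association}: conditionally on the counts, the indicators ``neuron $i$ has type $j$ in every sample'' are negatively associated (without-replacement within each sample, an independent product across samples), so the probability is at most $\prod_i\bigl(1-\prod_t n_j^{(t)}/m\bigr)\le\bigl(1-\prod_t n_j^{(t)}/m\bigr)^m$; combining the concentrated regime of the counts with a crude tail bound for unbalanced counts recovers the exponent $m-kM$, where the deficit $kM$ is exactly the price of the surjectivity constraint --- $M$ types forced to appear in each of the $k$ samples.

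The routine parts are the center construction and its verification from Theorem~\ref{thm: 2lnn-minima} and Lemma~\ref{lemma: 2pl-2lnn}, plus noting that at $k=2$ the statement specializes to Theorem~\ref{thm: 2pl-2lnn} with the same $p_{m,M}$. I expect the main obstacle to be the probability estimate: the neuron types under $\mathrm{Unif}(\cM)$ are neither independent nor uniform on $[M]$, being coupled through the column-sum/surjectivity constraints and, under the Hausdorff-uniform measure, reweighted toward balanced count vectors. Turning either route above into a clean inequality therefore needs a stochastic-domination or concentration argument to control the random per-sample count vectors, together with some attention to the small-$m$ range where the stated bound degenerates.
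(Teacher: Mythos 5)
Your proposal follows essentially the same route as the paper: the same center construction (one shared index per type set to $\be_j$, all other rows of $\theta^*$ zero, verified via Theorem~\ref{thm: 2lnn-minima} and Lemma~\ref{lemma: 2pl-2lnn}), the same reduction to the event that every type $j\in[M]$ is realized by a common neuron across all $k$ samples, and the same union bound over $j$ with the exponent $m-kM$ arising from reserving $M$ positions per sample for surjectivity. If anything you are more careful than the paper, which simply \emph{posits} that the remaining $m-kM$ neurons carry \iid\ uniform types under $\mathrm{Unif}(\cM)$; the measure-theoretic subtlety you flag as the main obstacle is present, and left unaddressed, in the paper's own argument.
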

A simple calculation implies that to ensure the probability larger than $1-\delta$, we need $m \geq C M^k\log(M/\delta)$. The following theorem shows by allowing the connectivity between feet  and the center to be two-piece linear path, then the probability becomes exactly $1$ as long as $m\geq kM$.


\begin{theorem}\label{thm: star-shaped-2lnn-2}
Given $k\in \NN$, suppose $m \ge kM$.
    For any $k$ global minima  $\theta_1,\ldots,\theta_k$,  we can find a center $\theta^*$ such that $\theta^*$ and $\theta_i$ are $2$-PL connected for any $i=1,\dots,k$.
\label{1.8}
\end{theorem}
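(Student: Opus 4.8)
The plan is to reduce the statement to a system‑of‑distinct‑representatives problem and then settle that by a counting argument. We may assume $k\ge 2$; for $k=1$ one takes $\theta^*=\theta_1$, which is $2$‑PL connected to itself through the constant path $\gamma^{\lin}_{\theta_1,\theta_1}\subset\cM$. By Theorem~\ref{thm: 2lnn-minima}, each $\theta\in\cM$ assigns to every neuron $i\in[m]$ a \emph{type} in $\{0,1,\dots,M\}$ — the unique $j$ with $\bw_i\in S_j$, type $0$ meaning $\bw_i=0$ — and every type in $[M]$ occurs at least once; by Lemma~\ref{lemma: 2pl-2lnn}, $\theta\leftrightarrow\theta'$ holds iff at every neuron the two types are equal or one of them is $0$.

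First I would build, for each $l\in[k]$, an intermediate minimum $\beta_l$ by collapsing $\theta_l$ onto representatives of its types. Pick for each $j\in[M]$ a neuron $q_{l,j}$ of type $j$ in $\theta_l$; the $q_{l,1},\dots,q_{l,M}$ are distinct since the $S_j$ are disjoint. Let $\beta_l\in\RR^{m\times d}$ have $\bw^{\beta_l}_{q_{l,j}}=\be_j$ for $j\in[M]$ and all other neurons $0$. Then $\beta_l\in\cM$ (every neuron lies in $S$, and the $j$‑th column of the weight matrix sums to $1$ because only $q_{l,j}$ feeds it), and $\theta_l\leftrightarrow\beta_l$ (at $i=q_{l,j}$ the type is $j$ on both sides; elsewhere $\beta_l$ has type $0$). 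So it remains to produce a single center $\theta^*\in\cM$ with $\beta_l\leftrightarrow\theta^*$ for every $l$, since then $\theta_l\leftrightarrow\beta_l\leftrightarrow\theta^*$ is exactly $2$‑PL connectivity.

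I will seek $\theta^*$ of the same canonical shape: choose an injection $r:[M]\to[m]$, set $\bw^{\theta^*}_{r(j)}=\be_j$, and all other neurons $0$ (such $\theta^*$ always lies in $\cM$). Checking the two cases of Lemma~\ref{lemma: 2pl-2lnn}, the requirement ``$\beta_l\leftrightarrow\theta^*$ for all $l$'' translates exactly into: $r(j)\notin\{q_{l,j'}:l\in[k],\ j'\neq j\}$ for every $j$. Equivalently, with $A_j:=\{i\in[m]:i\neq q_{l,j'}\text{ whenever }j'\neq j\}$, we need a system of distinct representatives (SDR) for the family $(A_j)_{j\in[M]}$.

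The heart of the argument is Hall's condition $\bigl|\bigcup_{j\in J}A_j\bigr|\ge|J|$ for all $J\subseteq[M]$. View the $kM$ pairs $(l,j)$ as labels deposited on the $m\ge kM$ neurons, with at most one label per neuron coming from any fixed $l$. A neuron fails to lie in $\bigcup_{j\in J}A_j$ only if it carries at least one label and either (i) all of its labels share a single second coordinate lying outside $J$, or (ii) its labels involve at least two distinct second coordinates. Write $a,b$ for the numbers of type‑(i) and type‑(ii) neurons. Each type‑(i) neuron consumes a label whose second coordinate avoids $J$, and there are only $k(M-|J|)$ such labels, so $a\le k(M-|J|)$; each type‑(ii) neuron consumes at least two labels, so $a+2b\le kM$. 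Hence $\bigl|[m]\setminus\bigcup_{j\in J}A_j\bigr|=a+b\le a+\tfrac12(kM-a)\le kM-\tfrac{k}{2}|J|$, giving $\bigl|\bigcup_{j\in J}A_j\bigr|\ge m-kM+\tfrac{k}{2}|J|\ge\tfrac{k}{2}|J|\ge|J|$ because $k\ge 2$. Hall's theorem then produces the SDR $r$, and the concatenated paths $\theta_l\leftrightarrow\beta_l\leftrightarrow\theta^*$ complete the proof. I expect this counting step to be the main obstacle: it is where the exact threshold $m\ge kM$ is extracted from a tight double count of labels, and where the hypothesis $k\ge 2$ (hence the separate trivial treatment of $k=1$) becomes essential; everything else is routine verification against Theorem~\ref{thm: 2lnn-minima} and Lemma~\ref{lemma: 2pl-2lnn}.
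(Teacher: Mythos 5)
Your proof is correct. It shares the paper's first step---collapsing each foot $\theta_l$ onto a canonical minimum $\beta_l$ with exactly $M$ nonzero rows $\be_1,\dots,\be_M$ (the paper calls this ``merging'')---but the way you then locate the center is genuinely different. The paper observes that each merged foot has $m-M\ge(k-1)M$ zero rows, so by a direct pigeonhole the first $k-1$ merged feet share at least $m-(k-1)M\ge M$ common zero positions; it plants the center's $M$ nonzero rows at those positions, aligned with the types of the $k$-th merged foot, and connectivity to all feet follows immediately from Lemma~\ref{lemma: 2pl-2lnn}. You instead treat all $k$ feet symmetrically, encode the admissible positions for the center's type-$j$ row as a set $A_j$, and obtain the required injection $r$ as a system of distinct representatives via Hall's theorem, with the condition $|\bigcup_{j\in J}A_j|\ge|J|$ verified by a double count of the $kM$ labels $(l,j)$; I checked the counts $a\le k(M-|J|)$ and $a+2b\le kM$ and the resulting bound $m-kM+\tfrac{k}{2}|J|\ge|J|$, and they are right (your separate treatment of $k=1$, where the Hall bound degenerates, is also needed and handled correctly). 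The paper's route is shorter and avoids Hall's theorem entirely; yours buys a more symmetric construction that does not single out one foot and, because it matches the center's types to the feet wherever possible rather than forcing common zeros, tends to produce a center sharing more structure with all $k$ feet---but both extract exactly the threshold $m\ge kM$.
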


The proofs of Theorem \ref{thm: star-shaped-2lnn-1} and Theorem \ref{thm: star-shaped-2lnn-2} can be found  in Appendix \ref{sec: proof-3} and \ref{pf: 1.8}, respectively. In Figure \ref{illus}, we provide an illustration of the difference in how the feet are connected to the star center between Theorem \ref{thm: star-shaped-2lnn-1} and Theorem \ref{1.8}.

\begin{figure}[htbp]
    \centering
    \includegraphics[width=10cm]{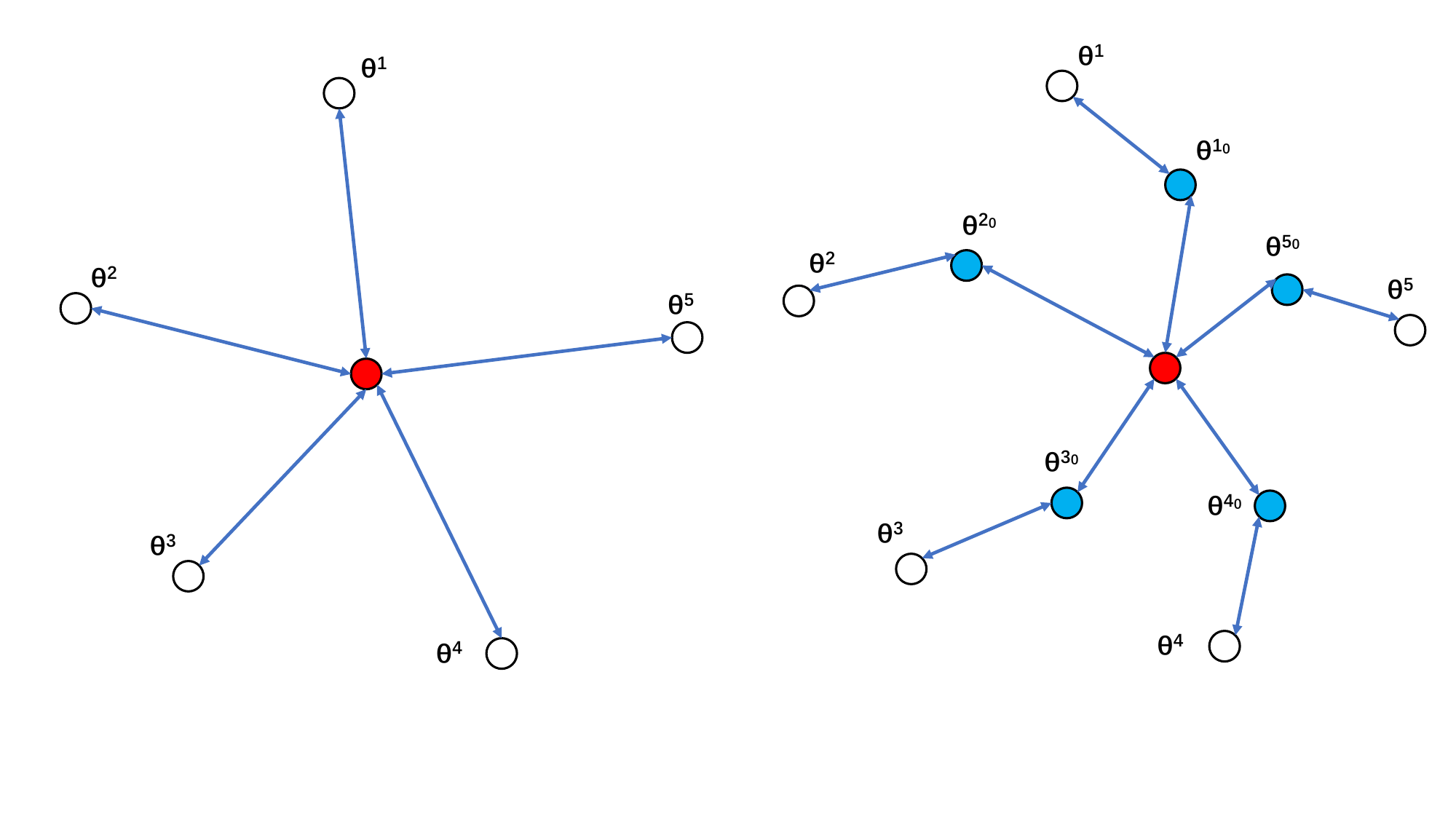}
    \caption{\textbf{Left.} The original star-shaped connectivity. The five white circles are the feet and the red circle is the center. The blue line represents the linear connecting path. \textbf{Right.} The extended star-shaped connectivity is proved in Theorem \ref{1.8}, where the feet are connected to the center via a two-piece linear path. 
    }
    \label{illus}
\end{figure}

\subsection{The geodesic connectivity}
The left of Figure \ref{fig: combine} shows the normalized geodesic distances (NGDs) between minima found by SGD for the two-layer ReLU networks mentioned above. We can see clearly that the value of NGD decreases monotonically towards $1$ as the network width $m$ increases.
This implies that the landscape of wide networks should be somehow not far from a convex one.
This is consistent with our intuition that wider networks should have a simpler landscape than narrow networks. Below, we provide some theoretical evidence to explain this mysterious phenomenon.

\begin{figure}[!h]
    \centering
    \includegraphics[width=13cm]{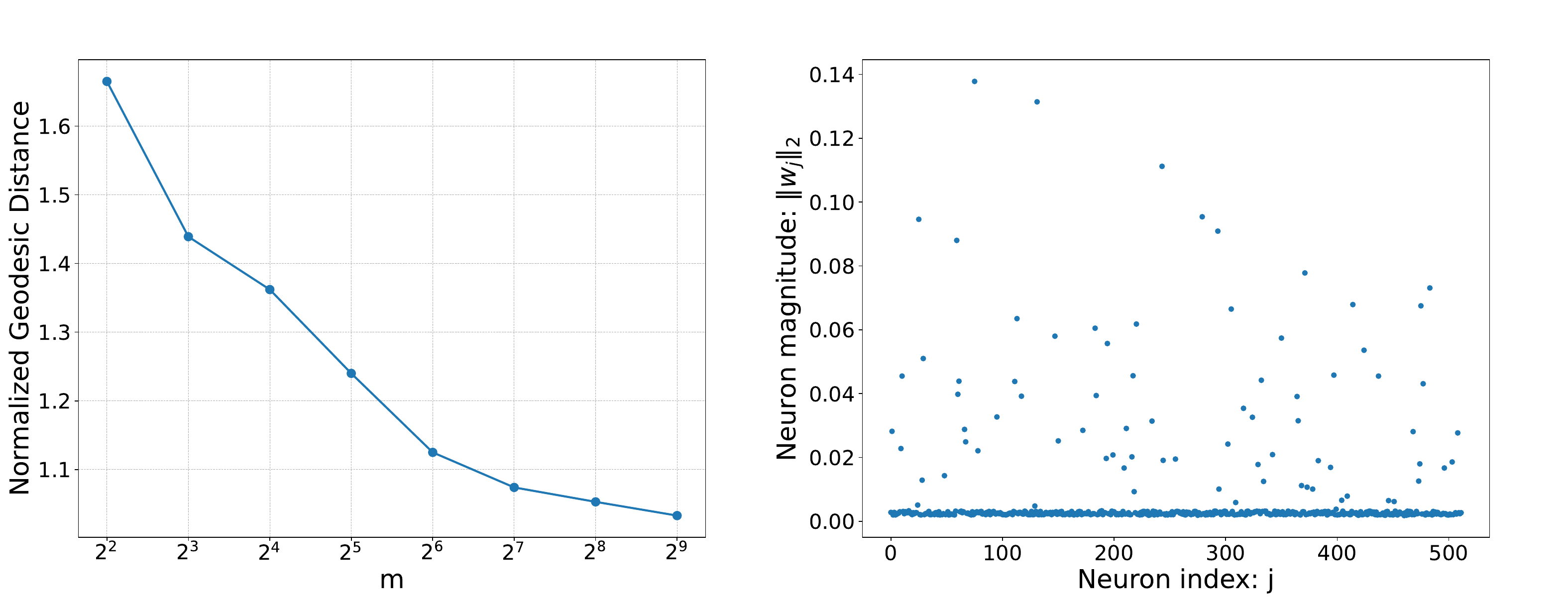}
    \caption{\textbf{Left.} How the normalized geodesic distance (NGD) changes with the network width for two-layer ReLU networks. The teacher network has $M=4$ neurons and we refer to Section \ref{exp} for the algorithm of estimating NGD.
    \textbf{Right.} The $L^2$ norm of each neuron for SGD solutions. Here, $m=512$, $M=4$, $d=4$. One can see that SGD tends to find sparse solutions.}
    \label{fig: combine}
\end{figure}

\begin{theorem}\label{thm: ngd-2lnn-unif}
  Suppose $m > M$, and let $\theta_1,\theta_2$ be two minima independently drawn from $\mathrm{Unif}(\cM)$. Then there exists absolute constants $c_1, c_2>0$ such that \wp~at least $1 - c_1 e^{-m}$ that $G(\theta_1,\theta_2)\leq c_2 \sqrt{M}$. Moreover, the upper bound can be achieved by a two-piece linear path.
  \end{theorem}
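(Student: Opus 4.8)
The plan is to exhibit an explicit two-piece linear path from $\theta_1$ to $\theta_2$ lying on $\cM$, and control its length by $c_2\sqrt{M}\,\|\theta_1-\theta_2\|_2$. By Lemma \ref{lemma: 2pl-2lnn}, a linear interpolation stays in $\cM$ precisely when, coordinatewise, each neuron's nonzero index matches or one of the two neurons is zero. So the natural construction (as in the proof of Theorem \ref{thm: 2pl-2lnn}) is to route through an intermediate minimum $\beta\in\cM$ obtained by "zeroing out" a suitable set of neurons: on the first leg $\theta_1\to\beta$ we kill the neurons of $\theta_1$ whose type disagrees with $\theta_2$ (and rebalance the column sums using spare neurons), and on the second leg $\beta\to\theta_2$ we grow them into their $\theta_2$-positions. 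The event of probability $1-c_1e^{-m}$ is the event that this construction is feasible, i.e. that for each $j\in[M]$ there are enough neurons of type $S_j$ (or $S_0$) available in both $\theta_1$ and $\theta_2$ to perform the rebalancing; a Chernoff/union bound over the $M$ types, using that under $\mathrm{Unif}(\cM)$ each neuron independently takes type $S_j$ with probability bounded below by a constant depending on $M$ (and $m>M$), gives the $e^{-m}$ tail. This feasibility analysis is essentially the same counting argument already used for Theorems \ref{thm: 2pl-2lnn} and \ref{thm: star-shaped-2lnn-2}, so I would cite/reuse it.

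The substantive new part is the length bound. Write $\ell(\gamma)$ for the length of the two-piece path; then $\ell(\gamma)=\|\theta_1-\beta\|_2+\|\beta-\theta_2\|_2$, so $G(\theta_1,\theta_2)\le (\|\theta_1-\beta\|_2+\|\beta-\theta_2\|_2)/\|\theta_1-\theta_2\|_2$. The key observations: (i) each $\bw_i$ has at most one nonzero coordinate, so $\|\theta_1-\beta\|_2^2$ decomposes over neurons and over the $M$ "active" coordinates; (ii) for each active coordinate $j$, the column-sum constraint $\sum_i w_{i,j}=1$ forces the relevant entries to have controlled magnitude — in the equal-split construction, entries are $O(1/(\text{number of type-}j\text{ neurons}))$, and the number of such neurons concentrates around $m/M$ (up to constants) with high probability, so each nonzero entry is $O(M/m)$ and there are $O(m)$ of them per column, giving $\|\theta_1-\beta\|_2^2 = O(M\cdot m\cdot (M/m)^2)=O(M^2/m)$, hence $\|\theta_1-\beta\|_2=O(M/\sqrt m)$, and similarly for the other leg. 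Meanwhile $\|\theta_1-\theta_2\|_2$ is bounded below: with high probability the two minima differ in a constant fraction of neurons, and for each such disagreeing neuron the contribution is $\Omega(M/m)$ from at least one side, so $\|\theta_1-\theta_2\|_2^2 = \Omega(m\cdot (M/m)^2)=\Omega(M^2/m)$ — wait, that only gives a constant ratio. Let me restate: the correct bound to aim for is $\|\theta_1-\beta\|_2 \le C\sqrt{M}\,\|\theta_1-\theta_2\|_2$ directly, which follows because the "mass" that $\beta$ removes from $\theta_1$ on the first leg is at most the mass on which $\theta_1$ and $\theta_2$ disagree (plus rebalancing mass of the same order), and redistributing a fixed amount of $\ell_1$-mass across $\Theta(m/M)$ coordinates can only increase the $\ell_2$ norm by a factor $\sqrt{M}$ relative to the most concentrated configuration, which is what $\theta_1-\theta_2$ realizes up to constants.

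Concretely, the steps I would carry out are: (1) define the high-probability event $\cE$ = {for every $j\in[M]$, $\theta_1$ and $\theta_2$ each have at least $am/M$ and at most $bm/M$ neurons of type $S_j$ for suitable constants $0<a<b$} and show $\Pr[\cE^c]\le c_1 e^{-m}$ via Chernoff + union bound over $M\le m$ types; (2) on $\cE$, construct $\beta\in\cM$ as above, verifying via Lemma \ref{lemma: 2pl-2lnn} that $\theta_1\leftrightarrow\beta\leftrightarrow\theta_2$; (3) bound $\|\theta_1-\beta\|_2$ and $\|\beta-\theta_2\|_2$ by $O(\sqrt M/\sqrt m)$ using the per-column $\ell_2$ decomposition and the type-count bounds; (4) lower-bound $\|\theta_1-\theta_2\|_2$ by $\Omega(1/\sqrt m)$ (again on $\cE$, since a constant fraction of the $m$ neurons disagree and each contributes $\Omega(M/m)$ in squared norm... so $\Omega(\sqrt M/\sqrt m)$, hmm) — more carefully, compare numerator and denominator coordinate-block by coordinate-block and apply Cauchy–Schwarz to turn the $\ell_1$-to-$\ell_2$ redistribution into the $\sqrt M$ factor; (5) combine to get $G(\theta_1,\theta_2)\le c_2\sqrt M$.

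I expect the main obstacle to be step (4)/(5): getting a matching \emph{lower} bound on $\|\theta_1-\theta_2\|_2$ that is strong enough relative to the path length. The subtlety is that $\theta_1$ and $\theta_2$ could in principle be close in Euclidean norm (e.g. if they agree on most neurons and the disagreements involve tiny entries), which would blow up the ratio — so one must argue that the same rebalancing mass that makes $\|\theta_1-\beta\|_2$ large is "charged against" genuine disagreement between $\theta_1$ and $\theta_2$, i.e. the first leg only needs to move mass where $\theta_1$ and $\theta_2$ actually differ. Making this charging argument precise, while simultaneously controlling the rebalancing overhead, is where the $\sqrt M$ (rather than, say, $\sqrt m$) must be extracted, and is the technical heart of the proof; the factor $\sqrt M$ is exactly the price of possibly having to redistribute mass from a single coordinate of $\theta_1$ across up to $\Theta(m/M)$ coordinates while $\theta_1-\theta_2$ keeps it concentrated.
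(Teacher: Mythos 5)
Your high-level architecture matches the paper's: route through a center $\beta\in\cM$ whose type-$j$ mass sits on positions compatible with both endpoints (feasible with probability $1-c_1e^{-m}$ by concentration of the type counts), decompose everything column-by-column over $j\in[M]$, and extract the $\sqrt{M}$ from an $\ell_1$-to-$\ell_2$ redistribution argument. But the proposal stops exactly at the step that constitutes the actual content of the theorem. Your step (3)+(4) attempt — upper-bounding each leg by $O(\sqrt{M}/\sqrt{m})$ via "each entry is $O(M/m)$" and separately lower-bounding $\|\theta_1-\theta_2\|_2$ — does not work, as you yourself observe: under $\mathrm{Unif}(\cM)$ the entries are not pointwise $O(M/m)$, and $\|\theta_1-\theta_2\|_2$ has no useful deterministic lower bound (the two minima can agree on most mass), so the ratio cannot be controlled by bounding numerator and denominator independently. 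You then correctly name the fix — charge the center's detour against the genuine per-column disagreement and compare the two jointly — but you never write down the inequality, and you explicitly flag it as "the technical heart of the proof" that remains to be done. A plan that identifies the hard step without executing it is not a proof of the theorem.

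For comparison, the paper resolves this in Lemma \ref{lem: ngd-2lnn-unif}: fixing a column $j$, with $m_1,m_2$ type-$j$ neurons in $\theta_1,\theta_2$ and $m_t$ positions where both are type $j$, it explicitly minimizes $\|\boldsymbol{v}_j-\boldsymbol{u}^1_j\|_2^2+\|\boldsymbol{v}_j-\boldsymbol{u}^2_j\|_2^2$ over centers supported on the $m_t$ common positions (the optimum splits each $x_i$ at the midpoint $(a_i+b_i)/2$ plus a uniform correction enforcing $\sum_i x_i=1$), and shows the resulting \emph{ratio} to $\|\boldsymbol{u}^1_j-\boldsymbol{u}^2_j\|_2^2$ is at most $\bigl(1-\sqrt{(m_1-m_t)(m_2-m_t)/(m_1m_2)}\bigr)^{-1}\le m_1/m_t$. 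Crucially, both numerator and denominator are expressed in the same variables $a_i,b_i$, and Cauchy--Schwarz bounds the extra detour terms $\bigl(1-\sum a_i\bigr)^2/(m_1-m_t)$ against terms already present in $\|\theta_1-\theta_2\|_2^2$; this is the charging argument you gesture at. Hoeffding then gives $m_t/m_1\gtrsim 1/(M+1)$ with probability $1-c_1e^{-m}$, so the squared ratio is $O(M)$ and the NGD is $O(\sqrt{M})$ after one more Cauchy--Schwarz. Supplying this per-column ratio lemma (or an equivalent) is what your proposal is missing; without it the claimed bound $G(\theta_1,\theta_2)\le c_2\sqrt{M}$ is not established. (Minor additional issues: the count "$O(m)$ nonzeros per column" should be $O(m/M)$, and the feasibility event must also control $m_t$, not just the marginal type counts, since $m_t$ depends on the joint configuration of the two independent minima.)
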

The proof is deferred to Appendix \ref{sec: proof-2lnn-ngd-1}.
  This theorem shows that the NGD between uniformly sampled minima is independent of the level of over-parameterization.
  However, Figure \ref{fig: combine} shows that NGD shrinks to $1$ when increasing the network width for minima found by SGD. We hypothesize that SGD induces a biased distribution over the minima manifold. In the right of Figure \ref{fig: combine}, we visualize the magnitude of different neurons for an SGD solution. We observe that  SGD tends to find solutions with sparse structures, i.e., only a few dominant neurons contribute.



To study the influence of neuron sparsity on the geodesic connectivity, we define the following distribution to formulate the sparsity bias.
  \begin{definition}[Neuron-sparse distribution] For any absolute constant $0<r<1$, we define $\mathrm{SP}(\cM, r)$, the neuron-sparse distribution with a sparsity $r$ over $\cM$, as following: for $\theta = (\bw_1, \ldots, \bw_m)^T$, for any $i \in \{1, \ldots, m\}$, $P(\bw_i \in S_0) = r$ and $P(\bw_i \in S_j) = \frac{1-r}{M}$ for $j \in \{1,\ldots, M\}$.
  \end{definition}

\begin{theorem}\label{thm: ngd-2lnn}
  Suppose $m > M$ and let $\theta_1,\theta_2$ be two minima independently drawn from $\mathrm{SP}(\cM, r)$. Then there exists two absolute constants $c_1, c_2>0$ such that \wp at least $1 - c_1 M e^{-mr^2}$ that $G(\theta_1,\theta_2)\leq 1 + \frac{c_2}{r\sqrt{m}}$. Moreover, the upper bound can be achieved by a two-piece linear path.
\end{theorem}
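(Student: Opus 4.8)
The plan is to exhibit an explicit intermediate minimum $\beta\in\cM$ giving a short two-piece path $\theta_1\leftrightarrow\beta\leftrightarrow\theta_2$, and to control its length using concentration of the neuron types under $\mathrm{SP}(\cM,r)$; the argument runs parallel to Theorem~\ref{thm: ngd-2lnn-unif}, with the sparsity $r$ sharpening all the relevant counts. First I would set up the bookkeeping: for each $i\in[m]$ classify the pair $(\bw_i^{(1)},\bw_i^{(2)})$ into $I_{00}$ (both in $S_0$), $I_j^{\mathrm{ag}}$ (both in $S_j$), $I_j^{(1)}$ (only $\bw_i^{(1)}\in S_j$), $I_j^{(2)}$ (only $\bw_i^{(2)}\in S_j$), and the conflicting set $I^{\mathrm c}=\{i:\bw_i^{(1)}\in S_j,\ \bw_i^{(2)}\in S_{j'},\ j\ne j'\}$. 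By Lemma~\ref{lemma: 2pl-2lnn}, the pair $\theta_1\leftrightarrow\beta$ and $\beta\leftrightarrow\theta_2$ holds if and only if $\beta_i=0$ for every $i\in I^{\mathrm c}$, while for every non-conflicting neuron belonging to coordinate $j$ one has $\beta_i=t_i\be_j$ for some scalar $t_i$; the only extra requirement is $\beta\in\cM$, i.e.\ $\sum_{i\ \mathrm{in\ dir.}\ j}t_i=1$ for each $j$ (Theorem~\ref{thm: 2lnn-minima}). So constructing $\beta$ amounts to choosing the scalars $t_i$.

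Next, the probabilistic input. Since the pairs $(\bw_i^{(1)},\bw_i^{(2)})$ are i.i.d., Chernoff/Hoeffding bounds control every count and mass entering the construction — in particular the number $|I_{00}|$ of ``doubly dead'' neurons (of order $mr^2$), which may be repurposed to carry any coordinate's mass, and, for each $j$, the conflicting mass $\Delta_j:=\sum_{i\in I^{\mathrm c}:\,\bw_i^{(1)}\in S_j}\|\bw_i^{(1)}\|+\sum_{i\in I^{\mathrm c}:\,\bw_i^{(2)}\in S_j}\|\bw_i^{(2)}\|$ together with the transported energy $\sum_{i\in I^{\mathrm c}}\|\bw_i^{(1)}\|^2$. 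A union bound over the $M$ coordinates shows all of these lie within a constant factor of their means except on an event of probability at most $c_1Me^{-mr^2}$, which is exactly the claimed failure probability. On the complementary good event, build $\beta$ by: putting $\beta_i=0$ on $I^{\mathrm c}$; on each non-conflicting coordinate-$j$ neuron starting from the midpoint $t_i=\frac{1}{2}\bigl(w^{(1)}_{i,j}+w^{(2)}_{i,j}\bigr)$, which leaves a coordinate-$j$ deficit of exactly $\frac{1}{2}\Delta_j$ (the mass that $\theta_1,\theta_2$ carried on conflicting neurons); and repairing this deficit by spreading a uniform correction $\epsilon_i$ over the $\approx mr^2$ doubly-dead neurons, so each $|\epsilon_i|$ is of order $\Delta_j/(mr^2)$.

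Finally, the length estimate. Writing $u_i=\bw_i^{(1)}-\beta_i$ and $v_i=\beta_i-\bw_i^{(2)}$, the two-piece path has length $\sqrt{\sum_i\|u_i\|^2}+\sqrt{\sum_i\|v_i\|^2}$, whereas $\|\theta_1-\theta_2\|^2=\sum_i\|u_i+v_i\|^2$. On the non-conflicting neurons the midpoint choice makes $u_i\parallel v_i$ with $\|u_i\|=\|v_i\|$ up to the tiny $\epsilon_i$, so these neurons contribute essentially as much to the path length as to $\|\theta_1-\theta_2\|$; splitting off the conflicting block via $\sqrt{a+b}\le\sqrt a+\sqrt b$ gives
\[
\|\theta_1-\beta\|+\|\beta-\theta_2\|\ \le\ \|\theta_1-\theta_2\|\ +\ 2\sqrt{\textstyle\sum_{i\in I^{\mathrm c}}\|\bw_i^{(1)}\|^2}\ +\ 2\sqrt{\textstyle\sum_{i\in I^{\mathrm c}}\|\bw_i^{(2)}\|^2}\ +\ O\!\Bigl(\sqrt{\textstyle\sum_i\epsilon_i^2}\Bigr),
\]
whence $G(\theta_1,\theta_2)\le 1+(\text{excess})/\|\theta_1-\theta_2\|$. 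I expect the main obstacle to be the quantitative comparison in this last display: one must lower-bound $\|\theta_1-\theta_2\|$ (which is carried by the ``singly dead'' neurons $I_j^{(1)}\cup I_j^{(2)}$) and upper-bound both the conflicting energy and the corrections, tracking the dependence on $m$, $r$ and $M$ uniformly over all $M$ coordinates, to conclude that the excess ratio is at most $c_2/(r\sqrt m)$. This two-sided, uniform accounting — rather than the construction itself — is the technical heart, and it is here that sparsity matters: larger $r$ makes conflicting neurons relatively scarce while the backbone supporting $\|\theta_1-\theta_2\|$ persists, which is what pushes the ratio towards $0$.
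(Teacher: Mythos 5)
Your overall strategy coincides with the paper's: both arguments build an explicit center by taking the coordinate-wise midpoint on the neurons where a common nonzero direction is permitted by Lemma~\ref{lemma: 2pl-2lnn}, zeroing the center on conflicting neurons, repairing the resulting per-coordinate mass deficit $\tfrac12\Delta_j$ by a small uniform correction, and controlling all counts and masses by Hoeffding plus a union bound over the $M$ coordinates; the paper packages this as an explicit constrained minimization of the two legs' squared lengths (Lemmas~\ref{lem: ngd-2lnn} and~\ref{lem2: ngd-2lnn}) and converts to a length bound at the very end via Cauchy's inequality, which is your ``$u_i\parallel v_i$'' observation in different clothing. So the construction and the probabilistic input are not in dispute.

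The genuine gap is that you stop precisely at the step that produces the theorem. Your final display reduces the claim to showing that $\bigl(\sum_{i\in I^{\mathrm c}}\|\bw_i^{(1)}\|^2\bigr)^{1/2}+\bigl(\sum_{i\in I^{\mathrm c}}\|\bw_i^{(2)}\|^2\bigr)^{1/2}+\bigl(\sum_i\epsilon_i^2\bigr)^{1/2}\lesssim \tfrac{1}{r\sqrt m}\,\|\theta_1-\theta_2\|$, and you explicitly defer this ``two-sided, uniform accounting.'' But this is where the $r$- and $m$-dependence is actually generated, and it is not routine: one must (i) lower-bound $\|\theta_1-\theta_2\|^2$ via the singly-dead neurons $\cup_j(I_j^{(1)}\cup I_j^{(2)})$ (an $\approx 2r(1-r)$ fraction of neurons, each contributing its full squared magnitude), (ii) upper-bound the conflicting energy using that each of the $\approx m(1-r)^2$ conflicting neurons carries only an $O(M/((1-r)m))$ share of the unit mass per coordinate, and (iii) check that the ratio of (ii) to (i) is $O(1/(r^2m))$ so that the lossy split $\sqrt{a+b}\le\sqrt a+\sqrt b$ delivers the $1/(r\sqrt m)$ rate; these estimates depend on how the \emph{magnitudes} (not just the types) of the neurons are distributed, a point the definition of $\mathrm{SP}(\cM,r)$ leaves implicit, so the conclusion does not follow from the construction alone. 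A second, smaller flaw: every row of a point in $\cM$ has at most one nonzero entry, so a doubly-dead neuron in $I_{00}$ can absorb the deficit of only one coordinate; you cannot spread all $M$ deficits over the same $\approx mr^2$ neurons, and must either partition $I_{00}$ (inflating each $\epsilon_i$ by a factor of $M$) or, as the paper effectively does, spread the coordinate-$j$ deficit over all coordinate-$j$-eligible neurons. This bookkeeping must be tracked through the union bound to keep the failure probability at $c_1Me^{-mr^2}$ and the excess free of $M$.
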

The proof is deferred to Appendix \ref{sec: proof-2lnn-ngd}.
This theorem demonstrates that the bias towards sparse solutions can explain the shrinkage of NGD to $1$. In particular, when $m\to\infty$, NGD approaches to $1$.


\section{Linear networks}
\label{sec: linear-net} 

A  linear network $f(\cdot;\theta): \RR^d\mapsto\RR$ is parameterized by $f(\bx;\theta) = A_LA_{L-1}\ldots A_1 \bx$, where  $A_l \in \mathbb{R}^{m_{l} \times m_{l-1}}$ for $l=1,2,\ldots,L$. Here $L$ denotes the network depth and $\{m_l\}_{l=0}^L$ denotes the widths. Note that $m_0=d$ and $m_L = 1$ and we assume $m_2=\cdots=m_{L-1}=m$ for simplicity.  We make the following assumption on the data distribution.

\begin{assumption}\label{assumption: lineanet}
Suppose that $y=Q\bx$ for some $Q\in\RR^{1\times d}$, $\EE[\bx]=0$, and $\lambda_{\min}(\EE[\bx\bx^T])>0$.
\end{assumption}

The above assumption is quite mild but ensures that the global minima manifold has the following analytic characterization: 
\begin{equation}\label{eq:explicit}
\cM = \{(A_L, A_{L-1}, \dots,A_1): A_LA_{L-1}\cdots A_1 = Q\}
 \end{equation}





The following theorem provides the characterization of $k$-PL connectivity of the loss landscape of linear networks. The proof can be found in Appendix \ref{sec: proof-kPL-linearnet}.
\begin{theorem}\label{thm: linearnet-2pl}
Let $f(\cdot;\theta)$ be the linear network described in Section \ref{sec: linear-net}. Let $\theta_1, \theta_2\in \cM$ be two global minima. If $m>2L-1$, then we have:
\begin{itemize}
    \item Two global minima are almost surely 2-PL connected;
\item Any two global minima are $3$-PL connected,
\end{itemize}
\end{theorem}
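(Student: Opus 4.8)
The plan is to reduce the problem to a statement about products of matrices and then use a rank/dimension-counting argument to show that a single intermediate minimum suffices generically (the 2-PL claim), while a fixed two-step construction always works (the 3-PL claim). First I would set up the key sub-lemma: for two global minima $\theta_1 = (A_L^{(1)},\dots,A_1^{(1)})$ and $\theta_2 = (A_L^{(2)},\dots,A_1^{(2)})$ in $\cM$, the linear interpolation $\gamma^{\lin}_{\theta_1,\theta_2}$ lies in $\cM$ if and only if every "mixed product" $\tilde A_L \cdots \tilde A_1$ with $\tilde A_l \in \{A_l^{(1)}, A_l^{(2)}\}$ equals $Q$ — more precisely, expanding $(tA_L^{(1)}+(1-t)A_L^{(2)})\cdots(tA_1^{(1)}+(1-t)A_1^{(2)})$ as a polynomial in $t$ and demanding it be identically $Q$ forces all the cross-term coefficients to vanish. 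So $\theta_1 \leftrightarrow \theta_2$ is equivalent to a finite list of algebraic identities on the factors. This is the analogue of Lemma \ref{lemma: 2pl-2lnn} for the linear-network case and I would prove it first.

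Next, for the 2-PL claim, I would construct a candidate center $\theta^* = (A_L^*,\dots,A_1^*)$ and argue it is linearly connected to both $\theta_1$ and $\theta_2$ for generic choices. The natural idea, using $m > 2L-1$, is to pick each intermediate factor $A_l^*$ ($2 \le l \le L-1$) so that its row space (or column space) simultaneously contains the relevant subspaces coming from both $\theta_1$ and $\theta_2$ — because $m$ is large enough, there is room to "align" with both endpoints at once. Concretely, I would choose $A_l^*$ to factor through a common subspace on which the products already agree, so that every mixed product along $\gamma^{\lin}_{\theta_i,\theta^*}$ collapses to $Q$; the dimension bound $m \ge 2L-1$ is exactly what guarantees the intersection/span conditions are satisfiable, and "almost surely" enters because the endpoints, being generic points of the manifold \eqref{eq:explicit}, have factors in general position so the required alignment is generic. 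I would make this precise by exhibiting an explicit one-parameter (or finite-dimensional) family of valid centers and noting it is nonempty on a full-measure set of $(\theta_1,\theta_2)$.

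For the 3-PL claim I would drop genericity and give a deterministic two-hop construction: first move $\theta_1$ by a linear path to some canonical minimum $\bar\theta_1$ (e.g., one whose factors are in a normalized "staircase" form that routes $Q$ through designated coordinates), similarly move $\theta_2$ to $\bar\theta_2$, and then connect $\bar\theta_1 \leftrightarrow \bar\theta_2$ directly — or, better, route everything through a single fixed "hub" minimum $\theta_{\mathrm{hub}}$ chosen once and for all, checking that $\theta \leftrightarrow \theta_{\mathrm{hub}}$ holds for every minimum after one linear hop; the width condition $m>2L-1$ again supplies the slack needed for the canonicalizing linear segment to stay in $\cM$. The main obstacle I anticipate is the first sub-lemma and the bookkeeping of mixed products: with $L$ factors there are $2^L$ mixed products and the vanishing conditions couple the factors across all layers, so I would need a clean inductive/telescoping way to express when a linear interpolation of a matrix product stays constant — likely by peeling off one layer at a time and tracking the affine subspace swept out by the partial products — rather than manipulating the degree-$L$ polynomial in $t$ directly. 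Once that structural characterization is in hand, both the generic 2-PL center and the deterministic 3-PL hub should follow from linear-algebra dimension counts governed by $m>2L-1$.
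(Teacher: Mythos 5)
Your skeleton---characterize linear connectivity by expanding the interpolated product in $t$, build a generic center by dimension counting, and insert one extra segment to handle degenerate pairs---matches the paper's strategy, but two of your concrete steps would fail as written.

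For the 2-PL part you propose to choose the center $\theta^*$ so that ``every mixed product collapses to $Q$.'' That is sufficient but far stronger than the actual requirement, which is only that for each $k$ the \emph{sum} of the mixed products with $k$ starred factors equals $\binom{L}{k}Q$ (your own ``more precisely'' clause). The stronger condition is generally unsatisfiable under $m>2L-1$: already for $L=2$ it forces the $1\times m$ outer factor $A_2^*$ to solve $A_2^*A_1^{(1)}=A_2^*A_1^{(2)}=A_2^*A_1^*=Q$, i.e.\ $3d$ scalar constraints on $m$ unknowns, which is generically inconsistent for $d>1$ and, say, $m=4$ (allowed by $m>2L-1=3$). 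The paper instead fixes the outer factors of $\theta^*$ in general position so that the relevant partial products remain linearly independent, and then solves the binomial-sum identities as a linear system in the innermost factor, column by column; $m>2L-1$ is precisely the number of independent constraints that system carries. Your dimension count has to be run against these weaker conditions, not mixed product by mixed product.

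For the 3-PL part, both of your variants have a problem. A fixed hub with every minimum one linear hop away would make every pair 2-PL connected, contradicting the paper's explicit counterexample ($A_1^{(1)}=(1,0,0,0)$, $A_2^{(1)}=(1,0,0,0)^T$ versus its negation); requiring an intermediate hop to reach the hub instead yields 4-PL. Canonicalizing both endpoints has the complementary defect: a single linear segment from $\theta_1$ to a staircase form that changes all layers generically leaves $\cM$, since the cross-term conditions couple all layers and are not satisfied just because both endpoints are minima; and an endpoint-independent canonical form would again give universal 2-PL. The repair is the paper's: change \emph{only the last layer} of $\theta_1$ to some $A_L'$ with $A_L'A_{L-1}^{(1)}\cdots A_1^{(1)}=Q$---a one-layer change makes the interpolated product affine in $t$, hence constant, so this segment is free---and pick $A_L'$ so that the perturbed minimum is in general position relative to $\theta_2$; the generic 2-PL result then supplies the remaining two segments.
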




We remark that the ``almost surely'' condition in characterizing the $2$-PL connectivity cannot be removed.
The following lemma provides a counterexample, showing that there indeed exist pathological minima that are not $2$-PL connected. 
\begin{lemma}
Consider the case of $m=4, d=1, L=2$ and the target $y=x$. Then, we have $\cM=\{(a,b)\in\RR^m\otimes\RR^m\,:\, a^Tb=1\}$. 
Consider two global minima $\theta_1=(A_1^{(1)}, A_1^{(2)})$, $\theta_2=(A_2^{(1)}, A_2^{(2)})$ with
     \[
 A_1^{(1)} = (1,0,0,0), A_2^{(1)} = (1,0,0,0)^T,\quad A_1^{(2)} = (-1,0,0,0), A_2^{(2)} = (-1,0,0,0)^T.
 \] 
Then, $\theta_1$ and $\theta_2$ are not $2$-PL connected. Quantitatively, 
 $$
 \inf_{\theta\in \cM} \left(\int_0^1  \mathcal{R}(t\theta_1 + (1-t)\theta) \dd t + \int_0^1  \mathcal{R}(t\theta_2 + (1-t)\theta)\right) \dd t \ge \frac{4}{15}\E x^2.
 $$
\end{lemma}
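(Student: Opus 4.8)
The plan is to analyze the constrained minimization directly on the explicit manifold $\cM=\{(a,b)\in\RR^4\times\RR^4 : a^Tb=1\}$. First I would reduce the problem to a two-dimensional computation: both $A_1^{(1)}=A_2^{(1)}=\be_1$ and $A_1^{(2)}=A_2^{(2)}=-\be_1$, so $\theta_1=\theta_2=:(\be_1,-\be_1)$ in this notation — wait, the point is precisely that $\theta_1$ and $\theta_2$ have the \emph{same} first-layer vector $\be_1$ but the roles of layers are transposed, i.e. $\theta_1$ stores $\be_1$ in layer $1$ and $-\be_1$ in layer $2$ reading as a row, while $\theta_2$ stores the transpose. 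Concretely the obstruction is that along any linear interpolation $t\mapsto t\theta_1+(1-t)\theta$ the network map is a degree-$2$ polynomial in $t$ whose endpoints both realize $Q$, and I want to show it cannot stay near $Q$ for all $t$ unless $\theta$ itself is forced into a configuration incompatible with also connecting $\theta_2$. So the first real step is to write, for a candidate center $\theta=(a,b)$ with $a^Tb=1$, the scalar function $g_1(t) = (t\be_1+(1-t)a)^T(-t\be_1+(1-t)b) $ (up to the input covariance factor; by Assumption the loss is a fixed positive-definite quadratic form in the coefficient error, so $\cR$ along the path is $c\,(g_1(t)-1)^2$ for $c=\E x^2>0$ — actually $c$ is $\lambda$-weighted but for $d=1$ it is exactly $\E x^2$), and similarly $g_2(t)$ for the $\theta_2$ side.

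Next I would expand: $g_1(t) = -t^2 (\be_1^T\be_1) + t(1-t)(\be_1^Tb) - t(1-t)(a^T\be_1) + (1-t)^2 (a^Tb)$. Using $\be_1^T\be_1=1$, $a^Tb=1$, and writing $p = a_1 = a^T\be_1$, $q = b_1 = \be_1^Tb$, this becomes $g_1(t) = -t^2 + t(1-t)(q-p) + (1-t)^2$. The path-loss integral is $c\int_0^1 (g_1(t)-1)^2\,dt$, and $g_2(t)-1$ works out to the analogous polynomial with $p,q$ interchanged (since in $\theta_2$ the first and second slots are swapped). So the total quantity to lower-bound is $c\int_0^1 (g_1(t)-1)^2 dt + c\int_0^1(g_2(t)-1)^2 dt$, which is an explicit quartic polynomial in the two real variables $p,q$ — and crucially, it \emph{does not depend on any other coordinates of $a,b$ except through the single constraint that $(p,q)$ are extendable to a pair with $a^Tb=1$, which imposes no restriction on $(p,q)$ at all} (given any $(p,q)$, pick the remaining coordinates to make $a^Tb=1$; this is where $m=4\ge 2$ and the freedom in higher coordinates matters, but in fact any $m\ge1$ with a free coordinate suffices). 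Therefore the infimum over $\cM$ equals $\min_{(p,q)\in\RR^2} \big[ c\int_0^1 (g_1(t)-1)^2 dt + c\int_0^1(g_2(t)-1)^2 dt\big]$.

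The final step is the routine but essential calculation: compute $h(t):=g_1(t)-1 = -t^2 + t(1-t)(q-p) + (1-t)^2 - 1 = -2t^2+2t\cdot(\text{something}) - \dots$; more cleanly, $(1-t)^2 - t^2 - 1 = -2t$, so $h(t) = -2t + t(1-t)(q-p)$, i.e. $h(t) = t\big[-2 + (1-t)(q-p)\big]$. Then $g_2(t)-1 = t[-2+(1-t)(p-q)]$. Integrating $h(t)^2$ and the analogue over $[0,1]$ gives a symmetric quadratic in the single variable $s:=q-p$: one gets $\int_0^1 t^2(-2+(1-t)s)^2 dt + \int_0^1 t^2(-2-(1-t)s)^2 dt$, which by symmetry equals $2\int_0^1 t^2(4 + (1-t)^2 s^2)\,dt = 8\int_0^1 t^2 dt + 2s^2\int_0^1 t^2(1-t)^2 dt = \tfrac{8}{3} + 2s^2\cdot\tfrac{1}{30}$. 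This is minimized at $s=0$, giving $\tfrac{8}{3}$. Multiplying by $c=\E x^2$ and... hmm, this gives $\tfrac{8}{3}\E x^2$, much larger than the claimed $\tfrac{4}{15}\E x^2$, so I must have mis-tracked a normalization (likely the loss is $\tfrac14$ of what I used, or the interpolation parameter should be halved, or the two integrals should be combined with weight $\tfrac12$ each for the two-piece path of total length $1$). The main obstacle is thus not conceptual but getting the normalization conventions of the two-piece-linear path and the loss exactly consistent with Definition of $\gamma^{\lin}$ and equation \eqref{2}'s analogue for linear nets; once the constant is pinned down, monotone-in-$s^2$ convexity immediately yields the stated bound and shows the infimum is strictly positive, hence $\theta_1,\theta_2$ are not $2$-PL connected. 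I would double-check the constant by re-deriving $\cR$ along $\gamma^{\lin}$ for $d=1$ from scratch, being careful that $\cR(\theta)=\E[(f(x;\theta)-x)^2] = (ab-1)^2\E x^2$ in the scalar case and that the path endpoints are $t=0,1$ with the center at the junction.
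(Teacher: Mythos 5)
Your overall strategy --- reduce everything to the two scalars $p=a_1$, $q=b_1$, observe that the constraint $a^Tb=1$ places no restriction on $(p,q)$ because the remaining coordinates of $a,b$ can absorb it, and then minimize an explicit quartic integral in $(p,q)$ --- is the right one, and it parallels the paper's own (much terser) argument, which only records the necessary conditions $\alpha_1+\beta_1=2$ and $\alpha_1+\beta_1=-2$ and notes the contradiction. However, there is a concrete error in your setup, and it is the true source of the constant you could not reconcile: you took $\theta_1=(\be_1,-\be_1)$, whose layer product is $\be_1^T(-\be_1)=-1\neq 1$, so your $\theta_1$ is not a global minimum at all. The lemma's (admittedly confusingly indexed) minima are $\theta_1=(\be_1,\be_1)$ and $\theta_2=(-\be_1,-\be_1)$, i.e.\ antipodal points of $\cM$. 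The symptom was visible in your own computation: your $h(t)=g_1(t)-1=t[-2+(1-t)(q-p)]$ satisfies $h(1)=-2\neq 0$, meaning the loss fails to vanish at the endpoint $\theta_1$ --- that, and not a normalization convention, is why you obtained $8/3$ instead of $4/15$.

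With the correct endpoints your computation closes immediately. For $\gamma_1(t)=t\theta_1+(1-t)\theta$ the product is $g_1(t)=t^2+t(1-t)(p+q)+(1-t)^2$, so $g_1(t)-1=t(1-t)(p+q-2)$, and likewise $g_2(t)-1=-t(1-t)(p+q+2)$; both now vanish at $t=0,1$ as they must. Since $\mathcal{R}(a,b)=(a^Tb-1)^2\,\E x^2$ and $\int_0^1 t^2(1-t)^2\,dt=\frac1{30}$, the total path loss is $\frac{\E x^2}{30}\bigl[(s-2)^2+(s+2)^2\bigr]=\frac{\E x^2}{30}(2s^2+8)\ge \frac{4}{15}\E x^2$ with $s=p+q$, with equality at $s=0$, and the infimum over the free remaining coordinates is unaffected exactly as you argued. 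This also recovers the qualitative claim, since a genuine $2$-PL connection would force $p+q=2$ and $p+q=-2$ simultaneously. So the fix is purely the identification of the endpoints; the rest of your proposal is sound.
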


\begin{theorem}[star-shaped linear  connectivity]\label{linear-multiple}
Consider linear networks of depth $L$ and width $m$. Let $\{\theta_i\}_{i=1}^r$ be $r$ global minima. If $m > 1+r(L-1)$, then we almost surely (with respect to the Lebesgue measure over $\cM^{\otimes r}$) have that there exist a $\theta^*\in \cM$ such that $\theta^* \leftrightarrow  \theta_i$ for all $i=1,\dots,r$.
\end{theorem}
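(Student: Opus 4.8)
The plan is to translate linear connectivity into an explicit system of polynomial equations in the candidate center $\theta^*$, to exhibit a center by a reserved-neuron construction whose degree-of-freedom budget is exactly $1+r(L-1)$, and to upgrade this to the almost-sure statement by a transversality argument on an incidence variety.

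First I would record the defining equations of $V_i := \{\theta^*\in\cM : \theta^*\leftrightarrow\theta_i\}$. For $\theta^*=(A^*_L,\dots,A^*_1)$ and a minimum $\theta_i=(A^{(i)}_L,\dots,A^{(i)}_1)$, the map $t\mapsto \prod_{l=L}^{1}\big((1-t)A^*_l+tA^{(i)}_l\big)$ is a polynomial of degree at most $L$ valued in $\RR^{1\times d}$ that equals $Q$ at $t=0$ and $t=1$ (both endpoints lie in $\cM$); hence it is identically $Q$ iff the polynomial $\big(\prod_l((1-t)A^*_l+tA^{(i)}_l)-Q\big)/\big(t(1-t)\big)$, of degree at most $L-2$, vanishes. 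So $\theta^*\leftrightarrow\theta_i$ amounts to $(L-1)d$ polynomial equations in $\theta^*$, and $V_i$ is a subvariety of $\cM$ of codimension at most $(L-1)d$. (For $L=2$ these collapse to the single bilinear relation $(A^*_2-A^{(i)}_2)(A^*_1-A^{(i)}_1)=0$, i.e.\ $A^*_2A^{(i)}_1+A^{(i)}_2A^*_1=2Q$, which is affine in $\theta^*$.) The goal becomes: for Lebesgue-a.e.\ tuple, $\bigcap_{i=1}^r V_i\neq\emptyset$.

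Next I would set up the incidence variety $\mathcal{Z}=\{(\theta_1,\dots,\theta_r,\theta^*)\in\cM^{\otimes r}\times\cM : \theta^*\leftrightarrow\theta_i\ \forall i\}$ with projection $\pi:\mathcal{Z}\to\cM^{\otimes r}$: if one can produce a single point of $\mathcal{Z}$ at which $d\pi$ is onto the tangent space of $\cM^{\otimes r}$, then $\pi$ is a submersion near it, its image contains an open set, and (its complement being contained in a proper subvariety) has full measure. To build such a seed, let $\theta^{*0}$ route the rank-one target $Q$ through one dedicated ``central'' neuron in each hidden layer (so all partial products along this channel telescope to standard-basis objects), and let $\theta^0_i$ agree with $\theta^{*0}$ except on a private block of neurons forming a dormant, output-neutral module threaded through all $L-1$ hidden layers. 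Then $\theta^{*0}$ and $\theta^0_i$ differ only inside a submodule annihilated downstream, the segment between them is affine in $t$, and $\theta^{*0}\leftrightarrow\theta^0_i$, so the seed lies in $\mathcal{Z}$. The width accounting is where $m>1+r(L-1)$ enters: a dormant perturbation differing in only one hidden layer leaves (as a short computation with the tangent map shows) an uncorrectable second-order term and fails transversality, so each foot's private module should be genuinely $(L-1)$-dimensional, occupying on the order of $L-1$ neurons per layer; the $r$ disjoint modules plus the central channel then consume $1+r(L-1)$ neurons, and the strict inequality supplies the slack needed to solve the linearized connectivity equations for $\dot\theta^*$ given any prescribed $(\dot\theta_1,\dots,\dot\theta_r)$.

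The step I expect to be hardest is exactly this tangent-space computation: differentiating the $(L-1)d$ equations cutting out each $V_i$ at the seed, using that the fixed matrices in the linearization are telescoped standard-basis objects, and verifying surjectivity of $d\pi$. For $L\ge 3$ linear connectivity is genuinely nonlinear in $\theta^*$ (contrast the affine $L=2$ case, where instead one could intersect the affine slices $V_i$ with the quadric $\cM$ directly and get non-emptiness from indefiniteness of the relevant form), so there is no shortcut around producing an explicit real seed and checking transversality there; and since we work over $\RR$, non-emptiness of $\bigcap_i V_i$ cannot be read off from dimension counts alone, which is also why the bound is construction-driven rather than dimension-tight. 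Everything else — the reduction to the polynomial identity, the codimension bound on $V_i$, and the passage from ``$d\pi$ surjective somewhere'' to the a.e.\ conclusion — is routine. As a consistency check, at $r=2$ the hypothesis reads $m>2L-1$ and the conclusion (a common center linearly joined to two feet) is precisely the $2$-PL connectivity of Theorem~\ref{thm: linearnet-2pl}.
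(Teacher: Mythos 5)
Your reduction of $\theta^*\leftrightarrow\theta_i$ to the vanishing of the degree-$(L-2)$ quotient polynomial is correct and is exactly the coefficient-matching condition ($\sigma_{i,k,L}=C_L^kQ$) that the paper works with, and your degree-of-freedom count $1+r(L-1)$ is the right one. The genuine gap is in the final step of your plan: producing one seed point of the incidence variety $\mathcal{Z}$ at which $d\pi$ is surjective only shows that the image of $\pi$ contains an open neighborhood of that one (highly degenerate) tuple $(\theta_1^0,\dots,\theta_r^0)$. Your parenthetical claim that the complement of the image is then ``contained in a proper subvariety'' is false over $\RR$: the image of a real polynomial map is merely semialgebraic, and a semialgebraic set of full dimension can have a complement of positive measure (compare $x\mapsto x^2$ on $\RR$, whose image is $[0,\infty)$). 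So a single transversal point does not upgrade to ``Lebesgue-a.e.\ tuple''; you yourself note that over $\RR$ non-emptiness of $\bigcap_iV_i$ cannot be read off from dimension counts, but the submersion-at-a-seed argument suffers from exactly the same real-vs-complex obstruction.

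What is needed --- and what the paper does --- is an \emph{explicit} genericity condition whose failure set is a proper algebraic subset of $\cM^{\otimes r}$, together with a construction of $\theta^*$ valid for \emph{every} tuple satisfying it. Concretely, the paper assumes the partial products $A_L^{(i)}A_{L-1}^{(i)}\cdots A_k^{(i)}$, $i=1,\dots,r$, are linearly independent for each $k$ (a condition violated only on the zero set of certain determinants, hence on a Lebesgue-null set), and then builds $A_L^*,A_{L-1}^*,\dots,A_2^*$ layer by layer so that the $(L-1)r+1$ coefficient vectors multiplying $A_1^*$ in the matching equations stay linearly independent (this is where $m>1+r(L-1)$ enters, via an elementary independence lemma); the final layer $A_1^*$ is then obtained by solving a consistent linear system column by column. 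If you want to keep your incidence-variety framing, you would have to replace ``one transversal seed'' by an argument that $\pi$ is surjective over the complement of an explicit proper subvariety --- which in effect forces you back to a construction of the paper's type.
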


Here $\cM^{\otimes r}=\{(\theta_1,\dots, \theta_r)\in\RR^r: \theta_i\in \cM \text{ for }i\in [r]\}$.
This theorem establishes that when  linear networks are sufficiently wide, the star-shaped connectivity holds almost surely.
The proof can be found in Appendix \ref{proof:linear-multiple}.

\paragraph*{The geodesic connectivity.}

In Figure \ref{fig:linear-distance}, we empirically demonstrate that the Normalized Geodesic Distances (NGDs) for linear networks are also close to $1$. Additionally, we observe that the NGD monotonically decreases with increasing network width, although this phenomenon has not yet been theoretically proven.

\begin{figure}[ht]
    \centering
    \includegraphics[height = 5.5cm, width =8.59cm]{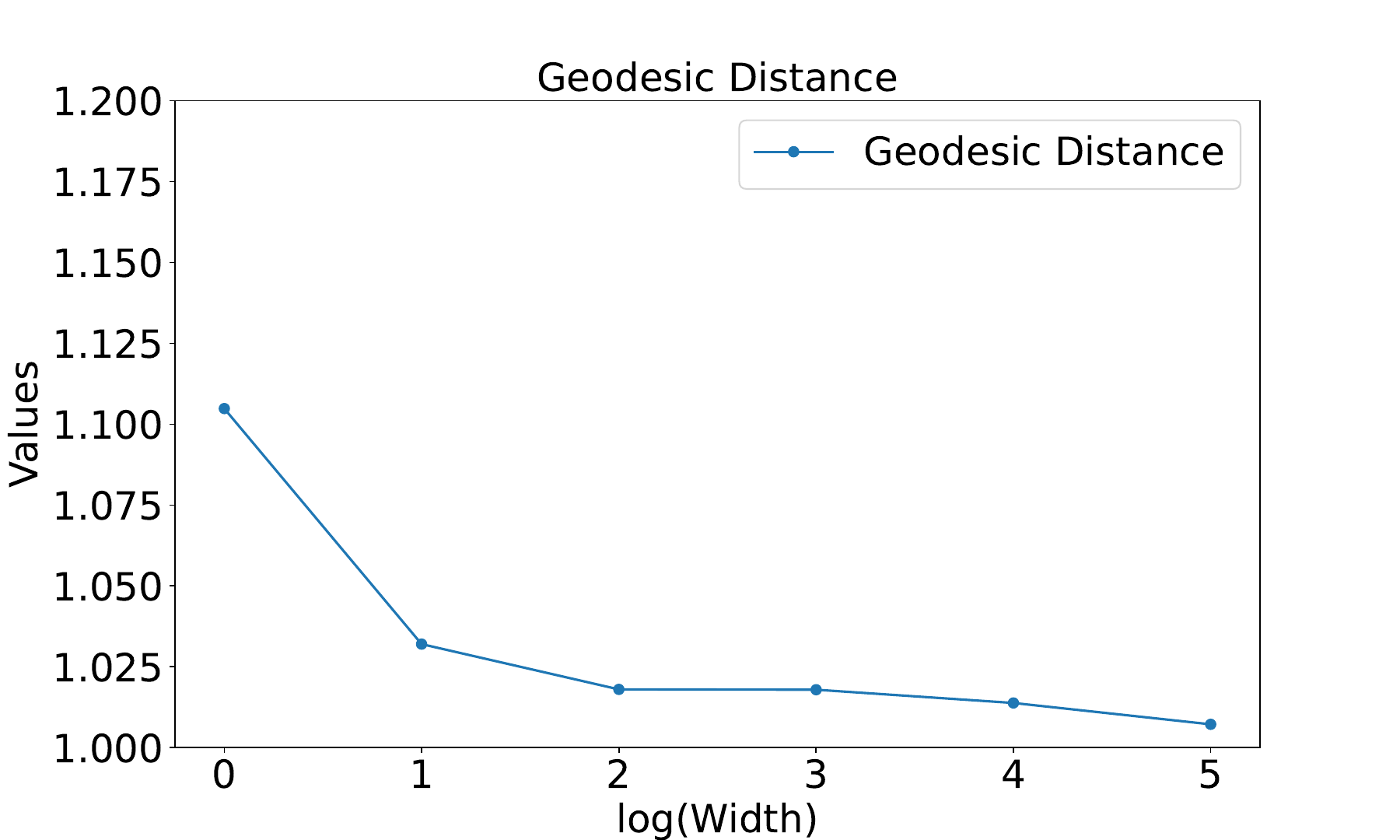}
  \caption{Normalized geodesic distance vs. network width for linear networks. 
  Following the setting as described earlier in this section, we consider a fully connected linear network with $L = 2$. We set 
  $d=m$, and vary $m$  to consider the normalized geodesic distance of a center with $2$-PL-connectivity. Algorithm \ref{algo1} is applied here to train a center and the result is an average of $5$ separate experiments. It is shown that as the width increases, we can obtain a center that satisfies $2$-PL-connectivity with a shorter geodesic distance.
  }
  \label{fig:linear-distance}
\end{figure}

\section{Experiments}
\label{exp}
In this section, we provide experiments to validate the star-shaped and geodesic connectivity across a range of architectures and datasets. 

\paragraph*{The center-finding algorithm.}
\label{algo1}
Given a set of minima $S=\{\theta_i^*\}_{i=1}^r$, to find a center $\theta$ that connects to all of them via linear paths, we propose to   minimize  the following objective
\begin{equation}\label{eqn: x4}
\mathcal{J}_{S}(\theta) := \frac{1}{r}\sum_{i=1}^r \left(\E_{t \sim U[0,1]} [ \mathcal{R}(t\theta + (1-t)\theta^*_i)] + \lambda p(\theta,\theta_i^*)\right),
\end{equation}
where $p(\cdot,\cdot)$ is a penalization function to be determined later.
To efficiently solve this optimization problem, we use the Adam \cite{kingma2014adam} optimizer with the minibatch gradient:
\begin{align}
\nabla \widehat{\mathcal{J}}_{S}(\theta) = \nabla \left(\frac{1}{B_rB_t}\sum_{k=1}^{B_r}\sum_{j=1}^{B_t}  \mathcal{R}(t_{j}\theta_t + (1-t_{j})\theta^*_{i_k})+ \frac{\lambda}{B_r}\sum_{k=1}^{B_r} p(\theta_t, \theta_{i_k}^*)\right),
\end{align}
where $i_k\stackrel{\iid}{\sim} \mathrm{Unif}([r])$ and $t_j\stackrel{\iid}{\sim} \mathrm{Unif}([0,1])$ for $k\in [B_r]$ and $j\in [B_t]$. Here, $B_r,B_t\in \NN$ denote the batch sizes. Across all our experiments, we always set $B_r=1, B_t=3$.

\paragraph*{Estimating the normalized geodesic distance.} Given two minima $\theta_1^*$ and $\theta_2^*$, we first find a center $\tilde{\theta}\in \cM$ by minimizing the objective \eqref{eqn: x4} for $S=\{\theta_1^*,\theta_2^*\}$ and $p(\theta,\theta')=\|\theta-\theta'\|_2^2$.  This allows us to find a minimum on the minima manifold such that it connects to both $\theta_1^*$ and $\theta_2^*$ via the shortest linear paths. Moreover,  by Definition \ref{def: NGD}, it holds  for any $\theta\in \cM$, satisfying $\theta\leftrightarrow \theta_i^*$ for $i=1,2$, that
\begin{equation}\label{eqn: x3}
G(\theta_1^*, \theta_2^*) \leq \frac{\|\theta_1^*-\theta\|_2+\|\theta-\theta_2^*\|_2}{\|\theta_1^*-\theta_2^*\|_2}.
\end{equation}
Then, plugging $\tilde{\theta}$ into the right hand side of \eqref{eqn: x3} gives an upper bound of $G(\theta_1^*, \theta_2^*)$.

\paragraph*{The experiment setup.} To validate our theoretical findings for practical models, 
we train fully-connected neural networks (FNNs) and VGG16 \cite{simonyan2014very} for classifying MNIST \cite{lecun1998gradient} dataset, and  VGG16 \cite{simonyan2014very} and ResNet34 \cite{he2016deep} for classifying CIFAR-10 \cite{krizhevsky2009learning} dataset, respectively:
\begin{itemize}
\item The FNN is three-layer, whose architecture is $728\to500\to 300\to 10$. We trained FNNs under the hyperparameters: lr $=$ 1e-3 and batchsize = 200,
\item The architecture of VGG16 and ResNet34 can be found in \cite{simonyan2014very} and \cite{he2016deep}, respectively. We trained VGG16 and ResNet34 under the hyperparameters: lr $=$ 5e-3 and batchsize $=$ 200.
\end{itemize}
As for the center-finding algorithm, we set  $B_r = 1$, $B_t = 3$ as mentioned above, and learning rate $\eta=0.01$. For all cases, the Adam optimizer \cite{kingma2014adam} is adopted.

\subsection{Star-shaped connectivity}


In Figure \ref{fig:cifar}, we visualize the star-shaped connectivity for VGG-16 in classifying the CIFAR10 dataset. We independently train the model to find $3$ minima $\{\theta^*_i\}_{i=1}^3$ and then run the center-finding algorithm to locate a center $\theta_{\mathrm{c}^*}$ on the minima manifold.  We can see that the linear interpolation between any pair minima among the three ones indeed encounters a very high barrier. However, through the center $\theta_{\mathrm{c}}$, the three minima form a star-shaped connectivity.

In Table \ref{tab:lmc}, we provide more experiments for a variety of model architectures and training datasets. We can see that the star-shaped connectivity holds for all scenarios examined.

\begin{figure}[H]
  \begin{minipage}{0.45\linewidth}
    \centering
    \includegraphics[height = 6cm, width =8cm]{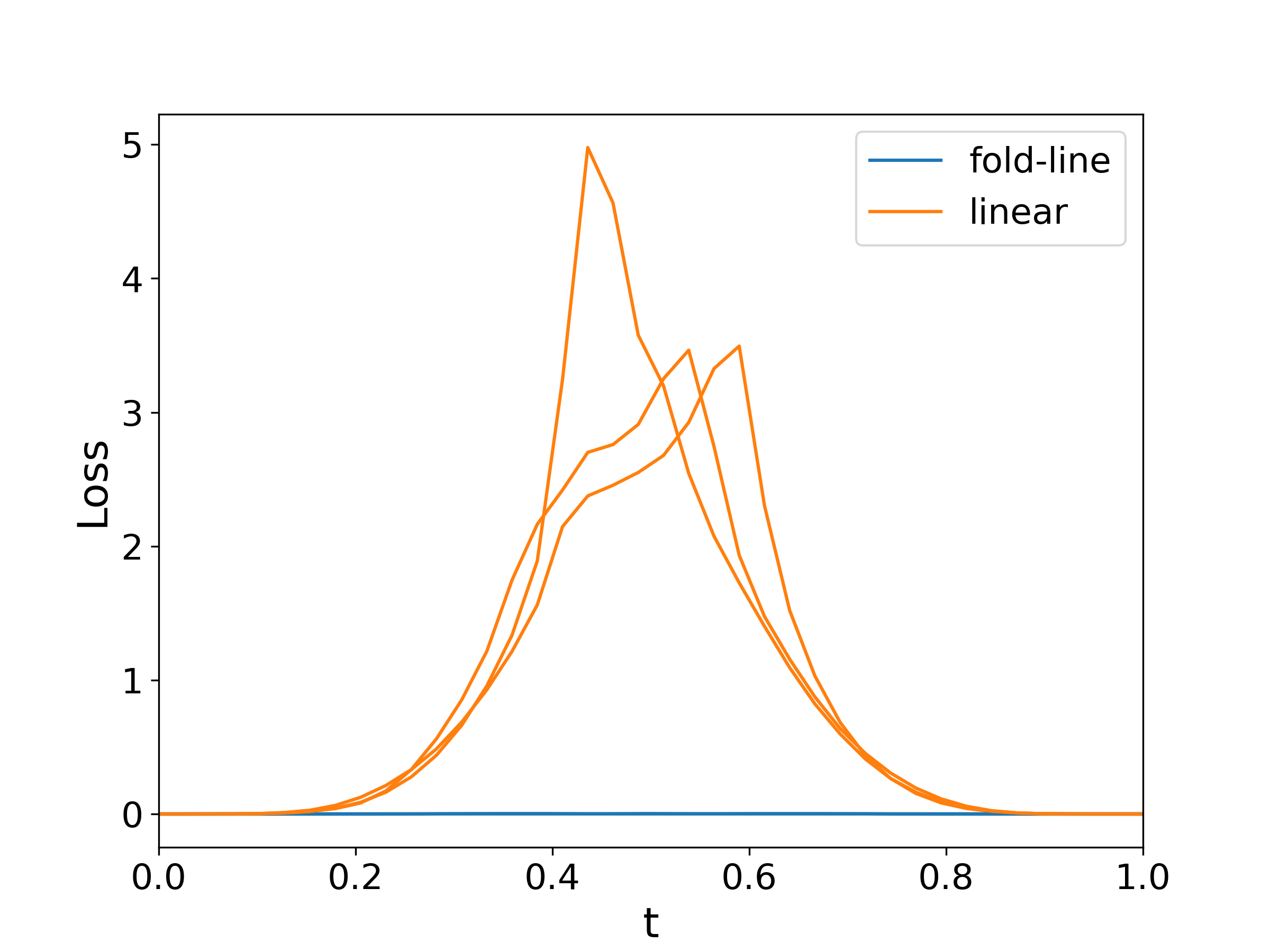}
  \end{minipage}
  \qquad
  \begin{minipage}{0.45\linewidth}
  \centering
    \includegraphics[height = 6cm, width =8cm]{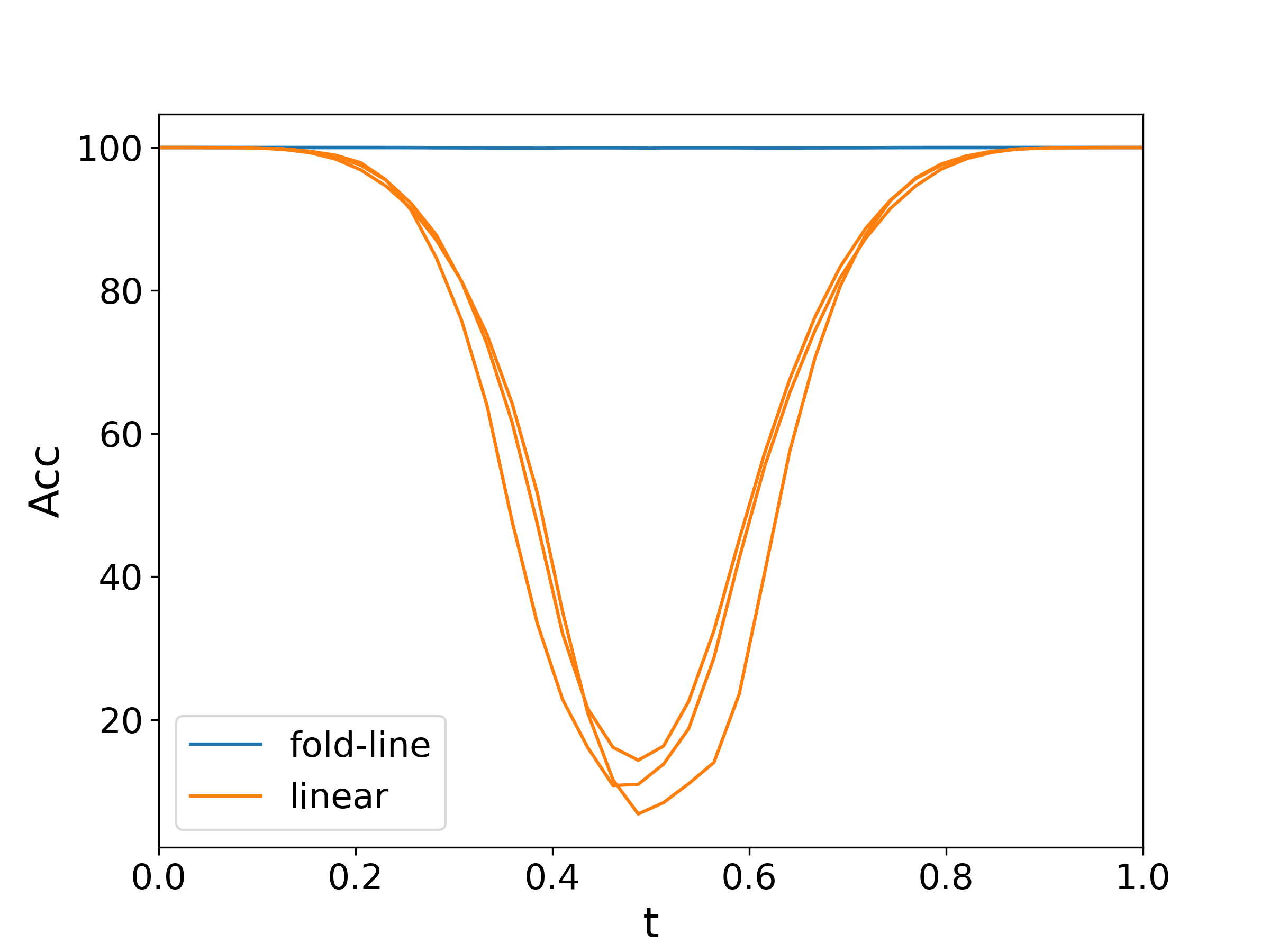}
  \end{minipage}
  \caption{An validation of star-shaped connectivity. The model is VGG16 and the dataset is CIFAR-10. We examine $3$ minima obtained by running Adam independently. Then we applied the center-finding algorithm to obtain the corresponding center. For all the $3$ linear interpolations between minima, and all the $3$ ``minimum-center-minimum'' fold-lines constructed by two linear interpolations, we plot the training loss (left) and accuracy (right) along these paths. Specifically, the $x$-axis $t$ here denotes the point $t\theta^*+(1-t)\theta_i$ in the linear interpolation (the orange line). On the other hand, for a pair $(\theta_i,\theta_j)$ (blue line), $t<0.5$ corresponds to the point $2t\theta^*+(1-2t)\theta_i$, while $t \ge 0.5$ corresponds to $(2t-1)\theta_j + (2-2t)\theta^*$. It is shown in the experiment that our algorithm successfully found a center that is linearly connected to all three minima simultaneously, i.e., forms a star-shaped connectivity.}
  \label{fig:cifar}
\end{figure}

\small{
\begin{table*}[ht!]
  \centering
    \begin{tabular}{ccccc}
    \hline
   \multirow{2}{5mm}{} & \multicolumn{2}{c}{MNIST} & \multicolumn{2}{c}{CIFAR10} \\
   \cline{2-5}
   & {VGG16} & {FNN} & {VGG16} & {ResNet34}  \\
   
    \hline
    Loss Barrier (linear) & 16.91 & 1.25 &6.21 &   3.28\\
   Loss Barrier (fold-line) & 3.1e-05 & 1.1e-03 &5.0e-03 &  1.0e-02\\
    \hline \\
    \end{tabular}%
    \\
    \begin{tabular}{ccccc}
    \hline
 \multirow{2}{12mm}{} & \multicolumn{2}{c}{MNIST} & \multicolumn{2}{c}{CIFAR10} \\
 \cline{2-5}
   & {VGG16} & {FNN} & {VGG16} & {ResNet34}  \\
    \hline
      Accuracy Barrier (linear) & 28.55\% & 68.09\% &10.79\% &  41.21\%\\
   Accuracy Barrier (fold-line) & 100.00\% & 99.96\% &99.99\% &  99.65\%\\
    \hline \\
    \end{tabular}%
    \caption{For different models and datasets, we independently trained $5$ minima using Adam optimizer. Then, we run the path-finding algorithm with $B_r = 1, B_t = 3$ for $200$ epochs. For all the $10$ linear interpolations between minima, and all the $10$ ``minimum-center-minimum'' fold-lines constructed by two linear interpolations, we computed 
    the maximum (for loss) or minimum (for accuracy) and averaged them, which is denoted as ``barrier''. 
    It is shown that there is nearly no barrier on the fold-lines we constructed, which validates that our observation of star-shaped connectivity holds for a wide range of settings.}
    \label{tab:lmc}
\end{table*}
}

\subsection{The geodesic connectivity}\label{sec:pathdistance}

In Table \ref{tab:portion}, we report the NGDs estimated by the aforementioned algorithm. It is demonstrated that minima (found by the commonly-used Adam optimizers) can be connected via paths whose NGDs are nearly $1$. This is consistent with our theoretical findings in toy models.

\small{
\begin{table*}[htbp]
  \centering
    \begin{tabular}{cccccc}
    \hline
   {} & {FNN+MNIST} &  {VGG16+MNIST} & {VGG16+CIFAR10} & {ResNet18+CIFAR10}  \\
   \hline
        NGD & 1.003 & 1.001 & 1.051 & 1.003 \\
        Barrier & 99.89\% & 99.25\% & 99.91\% & 99.63\% \\

       \hline \\
    \end{tabular}%
\caption{The upper bound of NGD estimated via Eq.~\ref{eqn: x3} for different networks and datasets. We independently trained $2$ minima in each setting, then trained the center via the center-finding algorithm with a penalized term. It turns out that we can obtain great linear mode connectivity via a fold-line, as well as keeping the relative distance proportion controlled.}
\label{tab:portion}
\end{table*}
}

\section{Conclusion}
In this paper, we systematically investigate the star-shaped and geodesic connectivity phenomenon for the landscape of neural networks. We provide theoretical analysis on toy models such as two-layer ReLU networks and linear networks, as well as experimental validations on popular networks trained on MNIST and CIFAR-10 datasets. Our findings reveal that the neural network landscape has many simple structures. Specifically, the star-shaped phenomenon suggests a connectivity property stronger than mode connectivity. The geodesic connectivity indicates that the loss landscape might be not far from being convex in a certain sense. For future work, it would be interesting to explore the potential relationships between our findings and optimization and generalization in neural networks.

\section*{Acknowledgements}
The work is supported by the National Key R\&D Program of China (No.~2022YFA1008200) and National Natural Science Foundation of China (No.~12288101).

\bibliographystyle{amsalpha}
\bibliography{ref}

\newpage
\appendix






\section{Proofs in Section \ref{sec: 2lnn}}
\label{sec: proof-2lnn-minima}

\subsection{Proof of Theorem \ref{thm: 2lnn-minima}.}
\label{pf: thm: 2lnn-minima}
Let 
\[
    \widetilde{\cM} = \left\{W\in \RR^{m\times d}\,:\, \bw_i\in S \text{ for } i=1,\dots,m \text{ and }  \sum_{i=1}^m w_{i,j}=1 \text{ for } j=1,\dots,M\right\}.
\]
Our task is to prove that $\cM=\widetilde{\cM}$. 
For any $W\in \cM$, we have $L(W)=\EE_{\bx\sim\tau_{d-1}}[(\sumim \sigma(\bw_i^T\bx)-\sum_{j=1}^M \sigma(x_j))^2]=0$. By the non-degeneracy of $\tau_{d-1}$, this is equivalent to 
\begin{align}\label{eqn: 00}
\sumim \sigma(\bw_i^T\bx)=\sum_{j=1}^M \sigma(x_j) \quad \forall\, \bx\,\in\SS^{d-1}.
\end{align}

\begin{itemize}
\item We first consider the first $M$ columns.
Taking $\bx=\be_j$  in \eqref{eqn: 00} gives for any $j\in [M]$ that
\begin{equation}\label{eqn: 3}
    \sumim \sigma(w_{i,j}) = \sumim \sigma(\bw_i^T\be_j) = 1.
\end{equation}
Taking $\bx=-\be_j$ in \eqref{eqn: 00} gives for any $j\in [M]$ that 
\begin{equation}
    \sum\limits_{i=1}^{m}\sigma(-w_{ij}) = \sumim \sigma(-\bw_i^T\be_j)= 0.
\label{4}
\end{equation}
Combining \eqref{eqn: 3} and \eqref{4} leads to 
\begin{equation}
\begin{cases}
w_{ij} \ge 0  & \forall\, i\in [m], j\in [M]\\
\sum\limits_{i=1}^{m}w_{ij}=1 & \forall\, j\in [M]
\end{cases}.
\end{equation}
\item Next we turn to consider the columns from $M+1$ to $d$. Analogously,  for any $j\in {M+1,\dots,d}$, we take $\bx=\be_j$ and $\bx=-\be_j$ in \eqref{eqn: 00}, yielding 
\begin{equation}\label{5}
\begin{cases}
    \sum\limits_{i=1}^{m}\sigma(w_{ij}) = 0 & \\
     \sum\limits_{i=1}^{m}\sigma(-w_{ij}) = 0.
\end{cases}
\end{equation}
This implies 
\begin{equation}
w_{ij} = 0,\quad \forall\, i \in \{1,\ldots,m\}, j \in \{M+1,\ldots,d\}.
\end{equation}
\item Now we prove by contradiction that for each $j\in [m]$, there exists at most one coordinate to be nonzero. Suppose that there exists $i \in [m]$ such that $||w_i||_0 \ge 2$. W.L.O.G, let $i = 1$ and $w_{11} > 0, w_{12} > 0$. Then,
there must exist $\epsilon>0$ such that $\sqrt{1-\epsilon^2}w_{11} - \epsilon w_{12} > 0$. 

Let $\bx=\sqrt{1-\epsilon^2} \be_1 -\epsilon \be_2$ in \eqref{eqn: 00}. First, we have $\sum\limits_{i=1}^{M} \sigma\left( x_i \right)=\sqrt{1-\epsilon^2}$.  Second,
\begin{align*}
\sum\limits_{i=1}^{m}\sigma(\bw_i \cdot \bx)&=\sum\limits_{i=1}^{m}\sigma(\sqrt{1-\epsilon^2}w_{i1}-\epsilon w_{i2})= \sigma(\sqrt{1-\epsilon^2}w_{11}-\epsilon w_{12}) + \sum\limits_{i=2}^{m}\sigma(\sqrt{1-\epsilon^2}w_{i1}-\epsilon w_{i2})\\ 
&\le \sigma(\sqrt{1-\epsilon^2}w_{11}-\epsilon w_{12}) + \sum\limits_{i=2}^{m}\sigma(\sqrt{1-\epsilon^2}w_{i1})=\sqrt{1-\epsilon^2}w_{11}-\epsilon w_{12} + \sum\limits_{i=2}^{m}\sqrt{1-\epsilon^2}w_{i1}\\ 
&=\sqrt{1-\epsilon^2}\sum\limits_{i=1}^{m}w_{i1}-\epsilon w_{i2}<\sqrt{1-\epsilon^2}\sum\limits_{i=1}^{m}w_{i1}=\sqrt{1-\epsilon^2}.
\end{align*}
Thus, $\sum\limits_{i=1}^{m}\sigma(\bw_i \cdot \bx)<\sum\limits_{i=1}^{M} \sigma\left( x_i \right)$, which is contradictory with \eqref{eqn: 00}. 
\end{itemize}
Combining the three conclusions above, we proved   $\cM\subset \widetilde{\cM}$. What remains is to prove  $\widetilde{\cM}\subset \cM$. It is obvious that for any $W\in\widetilde{\cM}$, we have $\sum\limits_{i=1}^{m} \sigma\left(\bw_i \cdot \bx\right)=\sum\limits_{i=1}^{m} \sigma\left(\sum\limits_{j=1}^{d}w_{ij}x_j\right)=\sum\limits_{i=1}^{m}\sum\limits_{j=1}^{d} w_{ij}\sigma\left(x_j\right)=\sum\limits_{i=1}^{M} \sigma\left( x_i \right)$. Thus $W$ is a global minimum, implying $\tilde{\cM}\subset \cM$.
\qed

\paragraph*{Proof of Lemma \ref{lemma: 2pl-2lnn}.}
\label{pf: lemma: 2pl-2lnn}

Recall that Theorem \ref{thm: 2lnn-minima} shows 
\begin{align}\label{eqn: app-1}
\cM=\left\{W\in \RR^{m\times d}\,:\, \bw_i\in S \text{ for } i=1,\dots,m \text{ and }  \sum_{i=1}^m w_{i,j}=1 \text{ for } j=1,\dots,M\right\},
\end{align}
where $S=\cup_{j=0}^M S_j$ with $S_0=\{(0,\dots,0)\in\RR^d\}$ and $S_j=\{\alpha \be_j\in \RR^d: \alpha\neq 0\}$ for $j=1,\dots,M$.

Given any $\theta_1,\theta_2\in \cM$, our task is to prove that $\theta_1\leftrightarrow \theta_2$ is equivalent to that one of the following two conditions is satisfied:
   \begin{itemize}
    \item[a)] $\bw^{(1)}_{i} \in S_0$ or $\bw^{(2)}_{i} \in S_0$;
    \item[b)] there exists $j \in \{1,\ldots,M\}$ such that $\bw^{(1)}_{i} \in S_j$ and $\bw^{(2)}_{i} \in S_j$.
    \end{itemize}

We first prove that $\theta_1\leftrightarrow \theta_2$ can lead to the condition a) or b). Note that $\theta_1 \leftrightarrow \theta_2$ means
\[
\gamma(t) = ((1-t) \bw^{(1)}_{1} + t \bw^{(2)}_{1},\cdots,(1-t) \bw^{(1)}_{m} + t \bw^{(2)}_{m})^T\in \cM.
\] 
By \eqref{eqn: app-1},  $(1-t)\bw^{(1)}_i + t\bw^{(2)}_i\in S$ holds for any $i\in [m]$ and $t\in [0,1]$.  Since any element in $S$ has at most one nonzero coordinate, $\bw^{(1)}_i$ and $\bw^{(2)}_i$ have at most one nonzero coordinate and their nonzero coordinates must be the same. Otherwise, the number of nonzero coordinates of $(1-t)\bw^{(1)}_i + t\bw^{(2)}_i$ will be no less  than $2$ for any $t\in (0,1)$. This implies that either condition a) or condition b) is satisfied.

Second, if condition a) or condition b) is satisfied, then for any $t \in [0, 1]$ and any $i \in [m]$, $(1 - t)\bw^{(1)}_{i} + t \bw^{(2)}_{i} \in S$. Moreover, for any $j\in [M]$,
\[
    \sum_{i=1}^m \left((1-t) w^{(1)}_{i,j} + t w^{(2)}_{i,j}\right) = (1-t) \sum_{i=1}^m w^{(1)}_{i,j} + t\sum_{i=1}^m w^{(2)}_{i,j} = (1-t) + t = 1.
\]
Hence, by \eqref{eqn: app-1}, $(1-t)\theta_1 + t\theta_2\in \cM$ for any $t\in [0,1]$. 
\qed

\subsection{Proof of Theorem \ref{thm: 2pl-2lnn}.}
\label{pf: thm: 2pl-2lnn}
Let $\theta_i\stackrel{iid}{\sim} \text{Unif}(\cM)$ for $i=1,2$.
We aim to give a lower bound for the probability that there exists a global minimum $\theta^*$ such that $\theta^{1} \leftrightarrow \theta^*$ and $\theta^* \leftrightarrow \theta^{2}$. From Theorem \ref{thm: 2lnn-minima}, given a $\theta = (\bw_1,\ldots,\bw_m)^{T} \in \mathbb{R}^{m\times d}$, the sufficient and necessary condition for $\theta$ to be a global minimum  is:
\begin{itemize}
\item [(1)] $\bw_i \in S$, for any  $i \in [m]$.
\item [(2)]$\sum\limits_{j: \bw_{j} \in S_i} \bw_{j} = e_{i}$, for $i \in \{1,\ldots,M\}$.
 \end{itemize} 
 We notice that $\theta^{1}$($\theta^{2}$) has the requirement that for every $i \in \{1,\ldots,M\}$, there exists $j \in \{1,\ldots,m\}$ such that $\theta^{1}_{j}(\theta^{2}_{j}) \in S_i$, after we already have $M$ different elements of $\theta^{1}(\theta^{2})$, the rest elements have no restriction and can be arbitrarily chosen in at least $m-2M$ overlapped positions.  

Hence, we suppose $\bw^{1}_{j}$ and $\bw^{2}_{j}$ have uniform distribution over $S$ for any $j \in T$, where $T$ is a subset of $\{1,\ldots,m\}$ containing $m-2M$ elements. i.e. $\bw^{1}_{j}$($\bw^{2}_{j}$) ($j \in T$)chooses randomly a set from $\{S_0,\ldots,S_M\}$ to belong to.

For $j \in T$, we consider the pair $(\bw^{1}_{j}, \bw^{2}_{j})$. We denote $[\bw^{1}_{j}, \bw^{2}_{j}] \mathop{=}\limits^{\Delta} [p, q]$ if $\bw^{1}_{j} \in S_p$ and $\bw^{2}_{j} \in S_q$.

Then we define the incident $A_i$: for any $j \in T$, $[\bw^{1}_{j}, \bw^{2}_{j}] \neq [i, i]$. Since $\theta_i\stackrel{iid}{\sim} \text{Unif}(\cM)$ for $i=1,2$, we have $P(A_i) = (\frac{M^2-1}{M^2})^{m-2M}$.
 
Thus, from the inclusion-exclusion principle, we have:\\
$$
\begin{aligned}
&P(\theta_1 \text{ and } \theta_2 \text{ are 2-PL connected}) \\
\ge &1-\sum\limits_{i=1}^{M}P(A_i)\\
= & 1 - M P(A_1)\\
= & 1 - M \left(\frac{M^2-1}{M^2}\right)^{m-2M}.
\end{aligned}
$$




\qed

\subsection{Proof of Theorem \ref{thm: 4pl-2lnn}.}
\label{pf: thm: 4pl-2lnn}

\begin{outline}

    First, we consider the choice of $\theta^{3}$ ($\theta^{5}$). Since $\theta^{1}$ ($\theta^{2}$) has property that for every $i \in \{1,\ldots,M\}$, there exists $j_i \in \{1,\ldots,m\}$ such that $\bw^{1}_{j_i} \in S_i$. Then, we let $\bw^{3}_{j_i}=\boldsymbol{e}_i$ for $i \in \{1,\ldots,M\}$, and set the other line vector of $\theta^{3}$ as zero. From Theorem \ref{thm: 2lnn-minima}, $\theta^{3}$ is a global minimum of Equation (\ref{2}). Further, from Lemma \ref{lemma: 2pl-2lnn}, we have $\theta^{1} \leftrightarrow \theta^{3}$. We call this method generating $\theta^{3}$ from $\theta^{1}$ ``merging'', since it straightforwardly merges some line vectors of $\theta^{1}$ belonging to the same set in $S$ to a single non-zero vector. Similarly, we can merge $\theta^{2}$ to $\theta^{5}$.

$\theta^{3}$ and $\theta^{5}$ share the common characteristic that they contain exactly $M$ different non-zero line vectors $\{\boldsymbol{e}_1,\ldots,\boldsymbol{e}_M\}$, and $m-M$ zero line vectors. Since $m \ge 2M-1$, we have $m-M \ge M-1$, thus $\theta^{3}$ has at least $M-1$ zero line vectors. 

\textbf{Case 1.}

If there are at least $M$ zero line vectors, suppose the set of the zero line vectors is $\{\bw^{3}_{a_1},\ldots, \bw^{3}_{a_M}\}$, then since  $\bw^{5}_{a_1},\ldots, \bw^{5}_{a_M}$ belong to different subsets of $S$ or belong to $S_0$, We can find a feasible set of $\{\bw^{4}_{a_1},\ldots, \bw^{4}_{a_M}\}=\{\boldsymbol{e}_1,\ldots,\boldsymbol{e}_M\}$, we set other line vectors of $\theta^{4}$ as zero and then we are done since we have a feasible global minimum $\theta^{4}$.

\textbf{Case 2.}

If there are only $M - 1$ zero line vectors for $\theta^3$. Now, we fix a feasible $\theta^{3}$ merged from $\theta^{1}$, suppose the set of its zero line vectors is $\{\bw^{3}_{a_1},\ldots, \bw^{3}_{a_{M-1}}\}$. First, we look at the corresponding line vectors of these $\theta^{3}$'s zero vectors in $\theta^{5}$, i.e. $\{\bw^{5}_{a_1},\ldots, \bw^{5}_{a_{M-1}}\}$. We notice that $\theta^{5}$ is not fixed when generated from $\theta^{2}$, and trivially we have at least $M$ different positions to choose from for the $M-1$ zero line vectors. Hence, there exists a choice of $\theta^{5}$ where at least one element of $\{\bw^{5}_{a_1},\ldots, \bw^{5}_{a_{M-1}}\}$ is non-zero. Thus, there is at least one zero vector of $\theta^{5}$ with a non-zero corresponding line vector in $\theta^{3}$, the set of which we denote as $\{\bw^{3}_{b_1},\ldots,\bw^{3}_{b_f}\}$.

\textbf{Case 2-1.}

If there exists $\bw^{3}_{b_i}$ belongs to different subset of $S$ from any element of $\{\bw^{5}_{a_1},\ldots, \bw^{5}_{a_{M-1}}\}$, then we can find a feasible set of $\{\bw^{4}_{b_i},\bw^{4}_{a_1,}\ldots, \bw^{4}_{a_{M-1}}\}=\{\boldsymbol{e}_1,\ldots,\boldsymbol{e}_M\}$. We set other line vectors of $\theta^{4}$ as zero and then we are done since we have a feasible global minimum $\theta^{4}$.

\textbf{Case 2-2.}

If every element of $\bw^{3}_{b_1},\ldots,\bw^{3}_{b_f}$ belongs to the same subset of $S$ as an element of $\{\bw^{5}_{a_1},\ldots, \bw^{5}_{a_{M-1}}\}$, we arbitrarily choose a line vector from $\bw^{3}_{b_1},\ldots,\bw^{3}_{b_f}$. Suppose $\bw^{3}_{b_j} \in S_t$ and $\bw^{5}_{a_q} \in S_t$. Then we suppose $\bw^{2}_{b_j} \in S_p$, and $\bw^{5}_{b_g} \in S_p$.

\textbf{Case 2-2-1.}

If $p=0$, then we let $\bw^{5}_{b_j}=e_t$, and let $\bw^{5}_{a_q}=0$. Then we can find a feasible set of $\{\bw^{4}_{b_j},\bw^{4}_{a_1,}\ldots, \bw^{4}_{a_{M-1}}\}=\{\boldsymbol{e}_1,\ldots,\boldsymbol{e}_M\}$. We set other line vectors of $\theta^{4}$ as zero and then we are done since we have a feasible global minimum $\theta^{4}$.

\textbf{Case 2-2-2.}

If $p \neq 0$, then we can switch $\bw^{5}_{b_j}$ and $\bw^{5}_{b_g}$ without damaging the connectivity of the global minima. Now, the number of the non-zero corresponding line vectors in $\theta^{3}$ reduces to $f-1$. After at most $f-1$ same operations, if we still didn't find the feasible global minimum $W_4$, then we get exactly one non-zero corresponding line vector in $\theta^{3}$ and exactly one non-zero element in $\{\bw^{5}_{a_1},\ldots, \bw^{5}_{a_{M-1}}\}$, and these two vectors belong to the same subset of $S$. Suppose $\bw^{3}_{b_{j^{\prime}}} \in S_{t^{\prime}}$ and $\bw^{5}_{a_{q^{\prime}}} \in S_{t^{\prime}}$ ($t^{\prime} \neq 0$). Then we suppose $\bw^{2}_{b_{j^{\prime}}} \in S_{p^{\prime}}$, and $\bw^{5}_{b_{g^{\prime}}} \in S_{p^{\prime}}$.

\textbf{Case 2-2-2-1.}

If $p^{\prime}=0$, then we let $\bw^{5}_{b_{j^{\prime}}}=e_{t^{\prime}}$, and let $\bw^{5}_{a_{q^{\prime}}}=0$. Then we can find a feasible set of $\{\bw^{4}_{b_{j^{\prime}}},\bw^{4}_{a_1,}\ldots, \bw^{4}_{a_{M-1}}\}=\{\boldsymbol{e}_1,\ldots,\boldsymbol{e}_M\}$. We set other line vectors of $\theta^{4}$ as zero and then we are done since we have a feasible global minimum $\theta^{4}$.

\textbf{Case 2-2-2-2.}

If $p^{\prime} \neq 0$, then we can switch $\bw^{5}_{b_{j^{\prime}}}$ and $\bw^{5}_{b_{g^{\prime}}}$ without damaging the connectivity of the global minima. Since $p^{\prime} \neq t^{\prime}$, then we can find a feasible set of $\{\bw^{4}_{b_{g^{\prime}}},\bw^{4}_{a_1,}\ldots, \bw^{4}_{a_{M-1}}\}=\{\boldsymbol{e}_1,\ldots,\boldsymbol{e}_M\}$. We set other line vectors of $\theta^{4}$ as zero and then we are done since we have a feasible global minimum $\theta^{4}$.
\end{outline}

Finally, we complete the proof.
\qed

\subsection{Proof of Theorem \ref{thm: star-shaped-2lnn-1}}
\label{sec: proof-3}

We follow the proof of Theorem \ref{thm: 2pl-2lnn} and apply some modifications.

Here, for $j \in T$, we consider the combination $(\bw^{1}_{j},\ldots, \bw^{k}_{j})$. We denote $[\bw^{1}_{j}, \ldots\bw^{k}_{j}] \mathop{=}\limits^{\Delta} [p_1,\ldots, p_k]$ if $\bw^{i}_{j} \in S_{p_i}$ for $i \in \{1,\ldots,k\}$. 

Similarly, we define the incident $A_i$: for any $j \in T$, $[\bw^{1}_{j}, \bw^{2}_{j}, \ldots, \bw^{k}_{j}] \neq [i, i, \ldots, i]$.



Thus. following the proof of Theorem \ref{thm: 2pl-2lnn}, based on our assumption for uniform distribution, we can calculate that $P(A_i) = (\frac{M^k-1}{M^k})^{m - kM}$.

Hence, $1-\sum\limits_{i=1}^{M}P(A_i) = 1 - M(\frac{M^k-1}{M^k})^{m - kM}$, which is the lower bound in the theorem. 
\qed

\subsection{Proof of Theorem \ref{1.8}.}

\label{pf: 1.8}
We follow the proof of Theorem \ref{thm: 4pl-2lnn}. Firstly, we merge $\theta^{i}$ to be $\theta^{i_0}$ just as what we did in the proof of Theorem \ref{thm: 4pl-2lnn}. Considering $\theta^{i_0}$ for any $i \in \{1,\ldots,k\}$, since $m \ge k M$, it has at least $(k-1)M$ zero line vectors.

Hence, it trivially holds that $\theta^{1_0}$ and $\theta^{2_0}$ share at least $(k-2)M$ zero line vectors of relatively same position, $\theta^{1_0}$, $\theta^{2_0}$ and $\theta^{3_0}$ share at least $(k-3)M$ zero line vectors of relatively same position$\ldots$ We can easily use mathematical induction to deduce that $\theta^{1_0},\ldots, \theta^{k-1_0}$ share at least $M$ zero line vectors of relatively same position. Suppose the positions of the $M$ common line vectors are $\{a_1,\ldots,a_M\}$. Since $\bw^{k}_{a_1},\ldots, \bw^{k}_{a_M}$ belong to different subsets of $S$ or belong to $S_0$, we can find a feasible set of $\{\bw^{k+1}_{a_1},\ldots, \bw^{k+1}_{a_M}\}=\{\boldsymbol{e}_1,\ldots,\boldsymbol{e}_M\}$. We set other line vectors of $\theta^{k+1}$ as zero and then we are done since we have a feasible global minimum $\theta^{k+1}$.

Here we finish proving Theorem \ref{1.8}.

\qed

\subsection{Proof of Theorem \ref{thm: ngd-2lnn-unif}}
\label{sec: proof-2lnn-ngd-1}

\begin{lemma}
\label{lem: ngd-2lnn-unif}
For any two global minima $\theta_1 = (\boldsymbol{u}^1_1, \ldots, \boldsymbol{u}^1_d), \theta_2= (\boldsymbol{u}^2_1, \ldots, \boldsymbol{u}^2_d)$, for $i \in \{1, \ldots, M\}$, suppose $\theta_1$ and $\theta_2$ respectively have $m_1$ and $m_2$ neurons that belong to $S_i$, with $m_t(m_t \leq \min\{m_1, m_2\})$ neurons sharing common coordinates, then for any global minimum $\theta = (\boldsymbol{v}_1, \ldots, \boldsymbol{v}_d)$, we have
\begin{align*}
    \min\limits_{\theta} \frac{\|\boldsymbol{v}_i-\boldsymbol{u}^1_i\|_2^2+\|\boldsymbol{v}_i-\boldsymbol{u}^2_i\|_2^2}{\|\boldsymbol{u}^1_i-\boldsymbol{u}^2_i\|_2^2} \leq \frac{1}{1 - \sqrt{\frac{(m_1-m_t)(m_2-m_t)}{m_1m_2}}}.
\end{align*}
\end{lemma}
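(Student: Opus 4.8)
The plan is to build one explicit center, reduce the assertion to a scalar inequality in two nonnegative numbers, and recognize that inequality as AM--GM tuned exactly to the value of $\rho:=\sqrt{(m_1-m_t)(m_2-m_t)/(m_1m_2)}$. If $m_t=0$ the claimed bound is $+\infty$, so assume $m_t\ge 1$.

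\textbf{Step 1: the feasible family of centers.} Fix the class index $i$ and write a candidate center as $\theta=(\boldsymbol v_1,\dots,\boldsymbol v_m)^T$ with $j$-th column $\boldsymbol v_j\in\RR^m$. By Theorem~\ref{thm: 2lnn-minima} and Lemma~\ref{lemma: 2pl-2lnn}, it is enough to pick each column of $\theta$ to be nonnegative, to sum to $1$, and to be supported inside the common support $T_j:=\mathrm{supp}(\boldsymbol u^1_j)\cap\mathrm{supp}(\boldsymbol u^2_j)$: then every neuron of $\theta$ lies in $S$ (the $T_j$ are pairwise disjoint since distinct classes use disjoint neurons, so each neuron is active in at most one column), the column-sum constraints hold, and for each neuron $k$ we are in case a) or case b) of Lemma~\ref{lemma: 2pl-2lnn} with respect to both $\theta_1$ and $\theta_2$, so $\theta_1\leftrightarrow\theta\leftrightarrow\theta_2$. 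In particular the columns decouple, and bounding the ratio for index $i$ reduces to minimizing $\|\boldsymbol v_i-\boldsymbol u^1_i\|_2^2+\|\boldsymbol v_i-\boldsymbol u^2_i\|_2^2$ over nonnegative $\boldsymbol v_i$ with $\sum_k v_{k,i}=1$ and $\mathrm{supp}(\boldsymbol v_i)\subseteq T_i$ (so $|T_i|=m_t$).

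\textbf{Step 2: the closed-form minimum.} Partition the coordinates into $T=T_i$, $A=\mathrm{supp}(\boldsymbol u^1_i)\setminus T$ (size $m_1-m_t$) and $B=\mathrm{supp}(\boldsymbol u^2_i)\setminus T$ (size $m_2-m_t$); on $A$ the vector $\boldsymbol u^2_i$ vanishes, on $B$ the vector $\boldsymbol u^1_i$ vanishes, and off $A\cup B\cup T$ everything vanishes. So the objective equals $\|\boldsymbol u^1_i|_A\|_2^2+\|\boldsymbol u^2_i|_B\|_2^2$ plus a $T$-term depending only on $\boldsymbol v_i|_T$. Applying the identity $\|x-a\|^2+\|x-b\|^2=2\|x-\tfrac{a+b}{2}\|^2+\tfrac12\|a-b\|^2$ and projecting $\tfrac12(\boldsymbol u^1_i+\boldsymbol u^2_i)|_T$ onto the hyperplane $\{\sum_T=1\}$ gives the minimizer $\boldsymbol v_i|_T=\tfrac12(\boldsymbol u^1_i+\boldsymbol u^2_i)|_T+\tfrac{p+q}{2m_t}\mathbf 1$, where $p:=\sum_A u^1_{k,i}$ and $q:=\sum_B u^2_{k,i}$; this vector is strictly positive, hence feasible, so the minimum $T$-term is $\tfrac12\|(\boldsymbol u^1_i-\boldsymbol u^2_i)|_T\|_2^2+\tfrac{(p+q)^2}{2m_t}$. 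Since $\sum_T u^1_{k,i}=1-p$ and $\sum_T u^2_{k,i}=1-q$, we also record $\|\boldsymbol u^1_i-\boldsymbol u^2_i\|_2^2=\|\boldsymbol u^1_i|_A\|_2^2+\|\boldsymbol u^2_i|_B\|_2^2+\|(\boldsymbol u^1_i-\boldsymbol u^2_i)|_T\|_2^2$.

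\textbf{Step 3: reduction to AM--GM.} Clearing denominators, the desired bound $\le\frac{1}{1-\rho}$ is equivalent to
\[
(1-\rho)\,\tfrac{(p+q)^2}{2m_t}\ \le\ \rho\big(\|\boldsymbol u^1_i|_A\|_2^2+\|\boldsymbol u^2_i|_B\|_2^2\big)+\tfrac{1+\rho}{2}\,\|(\boldsymbol u^1_i-\boldsymbol u^2_i)|_T\|_2^2 .
\]
Now lower-bound the three squared norms by Cauchy--Schwarz: $\|\boldsymbol u^1_i|_A\|_2^2\ge\tfrac{p^2}{m_1-m_t}$, $\|\boldsymbol u^2_i|_B\|_2^2\ge\tfrac{q^2}{m_2-m_t}$, and $\|(\boldsymbol u^1_i-\boldsymbol u^2_i)|_T\|_2^2\ge\tfrac{(p-q)^2}{m_t}$ (the last because the $T$-sums are $1-p$ and $1-q$). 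Using $(1-\rho)(p+q)^2-(1+\rho)(p-q)^2=4pq-2\rho(p^2+q^2)$, the inequality collapses to $2pq\le\rho\big(\tfrac{m_1 p^2}{m_1-m_t}+\tfrac{m_2 q^2}{m_2-m_t}\big)$, which — since $p,q\ge0$ — is exactly AM--GM: the right side is $\ge 2\rho\,pq\sqrt{m_1m_2/((m_1-m_t)(m_2-m_t))}=2pq$ by the very definition of $\rho$. (When $m_1=m_t$ or $m_2=m_t$ one has $\rho=0$ and the claim reduces directly to $\|(\boldsymbol u^1_i-\boldsymbol u^2_i)|_T\|_2^2\ge\tfrac{(p-q)^2}{m_t}$.)

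\textbf{Main obstacle.} No single step is deep; the difficulty is organizational. The crux is to guess that the correct feasible class of centers is ``column support $\subseteq T_i$'', which makes both the connectivity condition of Lemma~\ref{lemma: 2pl-2lnn} and the across-class disjointness of neuron assignments automatic; then to carry out the constrained quadratic minimization on the simplex carefully enough (including the feasibility check $\boldsymbol v_i|_T>0$) to reach the clean closed form; and finally to arrange the algebra so that the definition of $\rho$ falls out as precisely the equality case of AM--GM. A less structured choice of center does not decouple over classes, and $\rho$ does not emerge.
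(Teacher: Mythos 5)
Your proof is correct and follows essentially the same route as the paper's: choose a center supported on the common coordinates, minimize the decoupled quadratic on the simplex in closed form, lower-bound the off-common blocks by Cauchy--Schwarz, and finish with the AM--GM inequality whose equality case is exactly $\rho=\sqrt{(m_1-m_t)(m_2-m_t)/(m_1m_2)}$. Your organization (clearing denominators before invoking AM--GM, and explicitly checking feasibility and the degenerate cases $m_t=0$, $m_t=m_1$ or $m_2$) is somewhat cleaner than the paper's direct manipulation of the ratio of quadratic forms, but the underlying argument is identical.
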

\paragraph*{Proof of Lemma \ref{lem: ngd-2lnn-unif}.}
Suppose $\theta_1 = (\bw^{1}_1,\ldots,\bw^{1}_m)^{T} = (\boldsymbol{u}^1_1, \ldots, \boldsymbol{u}^1_d), \theta_2 = (\bw^{2}_1,\ldots,\bw^{2}_m)^{T}= (\boldsymbol{u}^2_1, \ldots, \boldsymbol{u}^2_d) \in \mathbb{R}^{m\times d}$. Suppose $\theta = (\bw_1,\ldots,\bw_m)^{T} = (\boldsymbol{v}_1, \ldots, \boldsymbol{v}_d)$.

Suppose $a_1, \ldots, a_{m_t}$, $b_1, \ldots, b_{m_t}$ are the $i^{th}$ components of $\bw^{1}$ where $\bw^{1}$ shares common coordinates with $\bw^{2}$. $a_{m_t+1}, \ldots, a_{m_1}$,  $b_{m_t + 1}, \ldots, b_{m_2}$ are the rest components. We have $\sum\limits_{j=1}^{m_t}{a_j} \leq 1$ and $ \sum\limits_{j=1}^{m_t}{b_j} \leq 1$. Suppose $x_1, \ldots, x_{m_t}$ are the $i^{th}$ components of $\bw$ that share common coordinates with $\bw^{1}$ and $\bw^{2}$. From Theorem \ref{thm: 2lnn-minima}, we have $\sum\limits_{j=1}^{m_t}{x_j}=1$. Thus,
$$
\begin{aligned}
&\|\boldsymbol{v}_i-\boldsymbol{u}^1_i\|_2^2+\|\boldsymbol{v}_i-\boldsymbol{u}^2_i\|_2^2\\
= &\sum\limits_{i=1}^{m_t}\left[(x_i - a_i)^2 + (x_i- b_i)^2\right]+ \sum\limits_{i=m_t+1}^{m_1}a_i^2+\sum\limits_{i=m_t+1}^{m_2}b_i^2\\
= &2\sum\limits_{i=1}^{m_t}\left[x_i^2-(a_i+b_i)x_i+\frac{a_i^2+b_i^2}{2}\right] + \sum\limits_{i=m_t+1}^{m_1}a_i^2+\sum\limits_{i=m_t+1}^{m_2}b_i^2\\
=&2\sum\limits_{i=1}^{m_t}\left[(x_i-\frac{a_i+b_i}{2})^2+\frac{(a_i-b_i)^2}{4}\right] + \sum\limits_{i=m_t+1}^{m_1}a_i^2+\sum\limits_{i=m_t+1}^{m_2}b_i^2\\
\geq & 2\frac{\Big(1-\sum\limits_{i=1}^{m_t}\left(a_i+b_i\right) / 2\Big)^2}{m} + \frac{\sum\limits_{i=1}^{m_t}(a_i-b_i)^2}{2} + \sum\limits_{i=m_t+1}^{m_1}a_i^2+\sum\limits_{i=m_t+1}^{m_2}b_i^2.
\end{aligned}
$$
Thus, we have
$$
\begin{aligned}
    &\min\limits_{\theta}\frac{\|\boldsymbol{v}_i-\boldsymbol{u}^1_i\|_2^2+\|\boldsymbol{v}_i-\boldsymbol{u}^2_i\|_2^2}{\|\boldsymbol{u}^1_i-\boldsymbol{u}^2_i\|_2^2}\\
= &\frac{2\frac{\left(1-\sum\limits_{i=1}^{m_t}(a_i+b_i) / 2 \right)^2}{m} + \frac{\sum\limits_{i=1}^{m_t}(a_i-b_i)^2}{2} + \sum\limits_{i=m_t+1}^{m_1}a_i^2+\sum\limits_{i=m_t+1}^{m_2}b_i^2}{\sum\limits_{i=1}^{m_t}(a_i-b_i)^2 + \sum\limits_{i=m_t+1}^{m_1}a_i^2+\sum\limits_{i=m_t+1}^{m_2}b_i^2} \mathop{=}\limits^{\Delta} M.
\end{aligned}
$$
Further, we have

$$
M \leq  \frac{2\frac{\left(1-\sum\limits_{i=1}^{m_t}(a_i+b_i) / 2 \right)^2}{m} + \frac{\sum\limits_{i=1}^{m_t}(a_i-b_i)^2}{2} + \frac{(1 - \sum\limits_{i=1}^{m_t}a_i)^2}{m_1-m_t} + \frac{(1 - \sum\limits_{i=1}^{m_t}b_i)^2}{m_2-m_t}}{\sum\limits_{i=1}^{m_t}(a_i-b_i)^2 + \frac{(1 - \sum\limits_{i=1}^{m_t}a_i)^2}{m_1-m_t} + \frac{(1 - \sum\limits_{i=1}^{m_t}b_i)^2}{m_2-m_t}}.
$$

Denote $S_a \mathop{=}\limits^{\Delta} \sum\limits_{i=1}^{m_t} a_i, S_b \mathop{=}\limits^{\Delta} \sum\limits_{i=1}^{m_t} b_i$, then we have


$$
\begin{aligned}
M \leq & \frac{2\frac{(1-\frac{S_a + S_b}{2})^2}{m} + \frac{(S_a - S_b )^2}{2m_t} + \frac{(1 - S_a)^2}{m_1-m_t} + \frac{(1 - S_b)^2}{m_2-m_t}}{\frac{(S_a - S_b)^2}{m_t} + \frac{(1 - S_a)^2}{m_1-m_t} + \frac{(1 - S_b)^2}{m_2-m_t}}\\
= & \frac{\frac{m_1}{m_t(m_1-m_t)}(1-S_a)^2 + \frac{m_2}{m_t(m_2-m_t)}(1-S_b)^2}{\frac{(S_a - S_b)^2}{m_t} + \frac{(1 - S_a)^2}{m_1-m_t} + \frac{(1 - S_b)^2}{m_2-m_t}}\\
\leq & \frac{\sqrt{m_1(m_2-m_t)m_2(m_1-m_t)}}{\sqrt{m_1(m_2-m_t)m_2(m_1-m_t)} - (m_1-m_t)(m_2-m_t)}\\
= & \frac{1}{1 - \sqrt{\frac{(m_1-m_t)(m_2-m_t)}{m_1m_2}}}.
\end{aligned}
$$

Now we completed the proof.

\qed

\begin{lemma}[Hoeffding's inequality]
Let $X_{1}, \ldots, X_{n}$ be independent random variables. Assume that $X_{i} \in\left[m_{i}, M_{i}\right]$ for every $i$. Then, for any $t>0$, we have
$$
\mathbb{P}\left\{\sum_{i=1}^{n}\left(X_{i}-\mathbb{E} X_{i}\right) \geq t\right\} \leq e^{-\frac{2 t^{2}}{\sum_{i=1}^{n}\left(M_{i}-m_{i}\right)^{2}}}.
$$
\label{hfd}
\end{lemma}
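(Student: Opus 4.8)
The plan is to follow the classical Chernoff-bound argument. First I would fix an arbitrary $\lambda > 0$ and apply Markov's inequality to the exponentiated partial sum. Since $x \mapsto e^{\lambda x}$ is increasing and using independence of the $X_i$,
\[
\mathbb{P}\left\{\sum_{i=1}^{n}(X_i - \mathbb{E}X_i) \geq t\right\} \leq e^{-\lambda t}\,\mathbb{E}\!\left[\exp\!\left(\lambda\sum_{i=1}^{n}(X_i - \mathbb{E}X_i)\right)\right] = e^{-\lambda t}\prod_{i=1}^{n}\mathbb{E}\!\left[e^{\lambda(X_i - \mathbb{E}X_i)}\right].
\]
It then remains to control each factor and to optimize over $\lambda$.

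The one genuinely technical ingredient --- and hence the main obstacle --- is Hoeffding's lemma: if $Y \in [a,b]$ with $\mathbb{E}Y = 0$, then $\mathbb{E}[e^{\lambda Y}] \leq e^{\lambda^2(b-a)^2/8}$. To prove it I would use convexity of $x \mapsto e^{\lambda x}$ to bound, for $y \in [a,b]$, $e^{\lambda y} \leq \frac{b-y}{b-a}e^{\lambda a} + \frac{y-a}{b-a}e^{\lambda b}$; taking expectations and using $\mathbb{E}Y = 0$ gives $\mathbb{E}[e^{\lambda Y}] \leq \frac{b}{b-a}e^{\lambda a} - \frac{a}{b-a}e^{\lambda b}$. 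Setting $p = -a/(b-a) \in [0,1]$ and $u = \lambda(b-a)$, the right-hand side equals $e^{\phi(u)}$ with $\phi(u) = -pu + \log(1 - p + pe^u)$. A short computation shows $\phi(0) = \phi'(0) = 0$ and $\phi''(u) = \frac{p(1-p)e^u}{(1-p+pe^u)^2} \leq \frac14$ (the last inequality is AM--GM applied to $pe^u$ and $1-p$), so Taylor's theorem with Lagrange remainder yields $\phi(u) \leq u^2/8$, which is exactly the claimed moment generating function bound.

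Finally I would assemble the pieces. Applying Hoeffding's lemma to $Y_i := X_i - \mathbb{E}X_i$, which lies in an interval of length $M_i - m_i$ and has mean zero, gives
\[
\mathbb{P}\left\{\sum_{i=1}^{n}(X_i - \mathbb{E}X_i) \geq t\right\} \leq \exp\!\left(-\lambda t + \frac{\lambda^2}{8}\sum_{i=1}^{n}(M_i - m_i)^2\right)
\]
for every $\lambda > 0$. The exponent is a convex quadratic in $\lambda$ minimized at $\lambda = 4t/\sum_{i=1}^{n}(M_i - m_i)^2$, and substituting this value produces the stated bound $\exp\!\big(-2t^2/\sum_{i=1}^{n}(M_i - m_i)^2\big)$. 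Apart from Hoeffding's lemma, every step is routine, so I expect no substantive difficulty beyond that single inequality.
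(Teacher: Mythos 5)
The paper states this lemma as a classical result and gives no proof of it, so there is nothing to compare against on the paper's side. Your argument is the standard Chernoff-bound proof with Hoeffding's lemma for the moment generating function, all steps check out (including the optimization $\lambda = 4t/\sum_{i}(M_i-m_i)^2$), and it is correct.
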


We suppose $m_1 \geq m_2$, then we have $\frac{m_t}{m_1} = \frac{\sum\limits_{j=1}^{m_1} \mathbbm{1}\{\bw^2_{j} \in S_i\}}{m_1}$. Then, from Hoeffding's inequality (Lemma \ref{hfd}), we have for any $\epsilon > 0$, with probability $1 - e^{-2m_1 \epsilon^2}$, $$\frac{\sum\limits_{j=1}^{m_1} \mathbbm{1}\{\bw^2_{j} \in S_i\}}{m_1} - \frac{1}{m+1} \geq -\epsilon.$$ 

Then we have
$$\frac{1}{1 - \sqrt{\frac{(m_1-m_t)(m_2-m_t)}{m_1m_2}}} \leq \frac{1}{1 - \frac{m_1-m_t}{m_1}} = \frac{1}{\frac{m_t}{m_1}} \leq \frac{1}{\frac{1}{M+1} - \epsilon}.$$ 

Now, we suppose $\theta_1$ and $\theta_2$ respectively have $m_{1i}$ and $m_{2i}$ neurons that belong to $S_i$, and from Lemma \ref{lem: ngd-2lnn-unif} we have with probability $1 - \sum\limits_{i=1}^{M}e^{-2\max\{m_{1i}, m_{2i}\}\epsilon^2}$,  

$$\min\limits_{\theta} \frac{\|\boldsymbol{v}_i-\boldsymbol{u}^1_i\|_2^2+\|\boldsymbol{v}_i-\boldsymbol{u}^2_i\|_2^2}{\|\boldsymbol{u}^1_i-\boldsymbol{u}^2_i\|_2^2} \leq \frac{1}{\frac{1}{M+1} - \epsilon}$$ for all $i \in \{1,\ldots,M\}$. Hence, 

$$\min\limits_{\theta} \frac{\|\theta - \theta_1\|_2^2+\|\theta - \theta_2\|_2^2}{\|\theta_1 - \theta_2\|_2^2} \leq \frac{1}{\frac{1}{M+1} - \epsilon}.$$ 

Furthermore, for any $\delta > 0$, from Hoeffding's inequality, we have with probability $1 - e^{-2m\delta^2}$, 

$$\frac{m_{1i}}{m} - \frac{1}{M+1} \geq -\delta.$$ 

Hence, with probability $1 - Me^{-2m\delta^2}$, 

$$\frac{\max\{m_{1i}, m_{2i}\}}{m} - \frac{1}{M+1} \geq -\delta$$ for all $i \in \{1, \ldots, M\}$. Therefore, with probability $1 - Me^{-2m\delta^2} - Me^{-2(\frac{m}{M+1} - m\delta)\epsilon^2}$, 

$$\min\limits_{\theta} \frac{\|\theta - \theta_1\|_2^2+\|\theta - \theta_2\|_2^2}{\|\theta_1 - \theta_2\|_2^2} \leq \frac{1}{\frac{1}{M+1} - \epsilon}.$$

We let $\delta = \epsilon$, and we obtain with probability $1 - Me^{-2m\epsilon^2} - Me^{-2(\frac{m}{M+1} - m\epsilon)\epsilon^2}$,  
$$\min\limits_{\theta} \frac{\|\theta - \theta_1\|_2^2+\|\theta - \theta_2\|_2^2}{\|\theta_1 - \theta_2\|_2^2} \leq \frac{1}{\frac{1}{M+1} - \epsilon}.$$ 

From Cauchy's inequality, we completed the proof.
\qed

\subsection{Proof of Theorem \ref{thm: ngd-2lnn}}
\label{sec: proof-2lnn-ngd}

\begin{lemma}
\label{lem: ngd-2lnn}
Suppose $a_1, \ldots, a_{m_t}$, $b_1, \ldots, b_{m_t}$ are the $i^{th}$ components of $\bw^{1}$ and $\bw^{2}$ where $\bw^{1}$ shares common coordinates with $\bw^{2}$ or where the neuron of $\bw^{1}$ or $\bw^{2}$ belongs to $S_0$. $a_{m_t+1}, \ldots, a_{m_1}$,  $b_{m_t + 1}, \ldots, b_{m_2}$ are the rest components. Then we have $$
\min\limits_{\theta}\frac{\|\boldsymbol{v}_i-\boldsymbol{u}^1_i\|_2^2+\|\boldsymbol{v}_i-\boldsymbol{u}^2_i\|_2^2}{\|\boldsymbol{u}^1_i-\boldsymbol{u}^2_i\|_2^2}
= \frac{2\frac{\left(1-\sum\limits_{i=1}^{m_t}(a_i+b_i) / 2 \right)^2}{m_t} + \frac{\sum\limits_{i=1}^{m_t}(a_i-b_i)^2}{2} + \sum\limits_{i=m_t+1}^{m_1}a_i^2+\sum\limits_{i=m_t+1}^{m_2}b_i^2}{\sum\limits_{i=1}^{m_t}(a_i-b_i)^2 + \sum\limits_{i=m_t+1}^{m_1}a_i^2+\sum\limits_{i=m_t+1}^{m_2}b_i^2}.
$$
\end{lemma}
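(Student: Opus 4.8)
The plan is to turn the column-wise minimization into one equality-constrained least-squares problem, reproducing the computation buried in the proof of Lemma~\ref{lem: ngd-2lnn-unif} but carrying the ``rest'' terms explicitly rather than bounding them further. Fix $i\in[M]$. By Theorem~\ref{thm: 2lnn-minima}, every global minimum $\theta=(\bw_1,\dots,\bw_m)^{T}=(\boldsymbol v_1,\dots,\boldsymbol v_d)$ satisfies $\sum_{j=1}^m v_{j,i}=1$, with $v_{j,i}\ne0$ only if $\bw_j\in S_i$. Because the center is used as the middle vertex of a two-piece path $\theta_1\to\theta\to\theta_2$, it must satisfy $\theta\leftrightarrow\theta_1$ and $\theta\leftrightarrow\theta_2$, so Lemma~\ref{lemma: 2pl-2lnn} makes $\bw_j\in S_i$ possible only on the $m_t$ positions that are compatible with $S_i$ for both endpoints (both endpoints in $S_i$, or at least one endpoint in $S_0$). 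Hence $\boldsymbol v_i$ is supported on this $m_t$-set; on the ``rest'' positions $v_{j,i}=0$ and exactly one of $a_j,b_j$ vanishes, so those positions contribute the \emph{fixed} amount $\sum_{m_t<j\le m_1}a_j^2+\sum_{m_t<j\le m_2}b_j^2$ to $\|\boldsymbol v_i-\boldsymbol u^1_i\|_2^2+\|\boldsymbol v_i-\boldsymbol u^2_i\|_2^2$, which is also precisely the non-common part of $\|\boldsymbol u^1_i-\boldsymbol u^2_i\|_2^2$.

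What is left is $\min\{\Phi(x):\sum_{j=1}^{m_t}x_j=1\}$ with $\Phi(x)=\sum_{j=1}^{m_t}\bigl[(x_j-a_j)^2+(x_j-b_j)^2\bigr]$. Completing the square gives $\Phi(x)=2\sum_{j=1}^{m_t}\bigl(x_j-\tfrac{a_j+b_j}{2}\bigr)^2+\tfrac12\sum_{j=1}^{m_t}(a_j-b_j)^2$; the second sum is constant, and for the first, minimizing $\sum_j(x_j-c_j)^2$ with $c_j=\tfrac{a_j+b_j}{2}$ under the single constraint $\sum_j x_j=1$ yields, by Cauchy--Schwarz (equality iff $x_j-c_j$ is constant), the value $\tfrac{(1-\sum_j c_j)^2}{m_t}$. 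Hence $\min\Phi=\tfrac{2(1-\sum_j(a_j+b_j)/2)^2}{m_t}+\tfrac12\sum_j(a_j-b_j)^2$; adding back the fixed rest-contributions gives the stated numerator, and dividing by $\|\boldsymbol u^1_i-\boldsymbol u^2_i\|_2^2=\sum_{j=1}^{m_t}(a_j-b_j)^2+\sum_{m_t<j\le m_1}a_j^2+\sum_{m_t<j\le m_2}b_j^2$ gives the claimed identity. To see the value is actually attained, one sets $\bw_j$ to the corresponding multiple of $\be_i$ on the $m_t$-set and, using $m>M$, spreads the remaining neurons over the other $M-1$ columns to meet their sum-to-one constraints, producing a bona fide $\theta\in\cM$.

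I expect the main obstacle to be the support reduction --- arguing that the minimization genuinely lives on the affine slice $\{\sum_{j\le m_t}x_j=1\}$ and not on a larger set obtained by also activating type-$S_i$ neurons at ``rest'' positions. This is exactly where the requirement that $\theta$ connect to \emph{both} endpoints enters: at a ``rest-of-$\theta_1$'' position ($\bw^1_j\in S_i$, $\bw^2_j\in S_k$ with $k\notin\{0,i\}$), Lemma~\ref{lemma: 2pl-2lnn} applied to $\theta\leftrightarrow\theta_2$ forces $\bw_j\in S_0$, hence $v_{j,i}=0$, and symmetrically for ``rest-of-$\theta_2$''; dropping this requirement degrades the conclusion to ``$\le$''. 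A smaller point the full argument must dispatch is the degenerate case $m_t=0$ (likewise $m_1=m_t$ or $m_2=m_t$), where the displayed fractions are ill-posed; it should be excluded by hypothesis or treated separately, and it is anyway a probability-zero event in the $\mathrm{SP}(\cM,r)$ regime of Theorem~\ref{thm: ngd-2lnn} once $m$ is large.
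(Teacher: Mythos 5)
Your proposal is correct and follows essentially the same route as the paper's proof: decompose the squared distances into the common $m_t$-block plus the fixed ``rest'' contributions, complete the square, and apply Cauchy--Schwarz under the constraint $\sum_{j\le m_t}x_j=1$ to obtain the value $\tfrac{2(1-\sum_j(a_j+b_j)/2)^2}{m_t}+\tfrac12\sum_j(a_j-b_j)^2$. The only difference is that you make explicit two points the paper leaves implicit --- the justification (via Lemma~\ref{lemma: 2pl-2lnn}) that the center's $i$-th column is supported on the $m_t$ compatible positions, and the attainability of the Cauchy--Schwarz bound by a genuine element of $\cM$ --- which strengthens rather than alters the argument.
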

\paragraph*{Proof of Lemma \ref{lem: ngd-2lnn}.}
We have $\sum\limits_{j=1}^{m_t}{a_j} \leq 1$ and $ \sum\limits_{j=1}^{m_t}{b_j} \leq 1$. Suppose $x_1, \ldots, x_{m_t}$ are the $i^{th}$ components of $\bw$ that share common coordinates with $\bw^{1}$ and $\bw^{2}$. From Theorem \ref{thm: 2lnn-minima}, we have $\sum\limits_{j=1}^{m_t}{x_j}=1$. Thus,
$$
\begin{aligned}
&\|\boldsymbol{v}_i-\boldsymbol{u}^1_i\|_2^2+\|\boldsymbol{v}_i-\boldsymbol{u}^2_i\|_2^2\\
= &\sum\limits_{i=1}^{m_t}[(x_i - a_i)^2 + (x_i- b_i)^2]+ \sum\limits_{i=m_t+1}^{m_1}a_i^2+\sum\limits_{i=m_t+1}^{m_2}b_i^2\\
= &2\sum\limits_{i=1}^{m_t}[x_i^2-(a_i+b_i)x_i+\frac{a_i^2+b_i^2}{2}] + \sum\limits_{i=m_t+1}^{m_1}a_i^2+\sum\limits_{i=m_t+1}^{m_2}b_i^2\\
=&2\sum\limits_{i=1}^{m_t}[(x_i-\frac{a_i+b_i}{2})^2+\frac{(a_i-b_i)^2}{4}] + \sum\limits_{i=m_t+1}^{m_1}a_i^2+\sum\limits_{i=m_t+1}^{m_2}b_i^2\\
\geq & 2\frac{\left(1-\sum\limits_{i=1}^{m_t}(a_i+b_i) / 2 \right)^2}{m_t} + \frac{\sum\limits_{i=1}^{m_t}(a_i-b_i)^2}{2} + \sum\limits_{i=m_t+1}^{m_1}a_i^2+\sum\limits_{i=m_t+1}^{m_2}b_i^2
\end{aligned}
$$
Thus, we have
$$
\min\limits_{\theta}\frac{\|\boldsymbol{v}_i-\boldsymbol{u}^1_i\|_2^2+\|\boldsymbol{v}_i-\boldsymbol{u}^2_i\|_2^2}{\|\boldsymbol{u}^1_i-\boldsymbol{u}^2_i\|_2^2}
= \frac{2\frac{\left(1-\sum\limits_{i=1}^{m_t}(a_i+b_i) / 2 \right)^2}{m_t} + \frac{\sum\limits_{i=1}^{m_t}(a_i-b_i)^2}{2} + \sum\limits_{i=m_t+1}^{m_1}a_i^2+\sum\limits_{i=m_t+1}^{m_2}b_i^2}{\sum\limits_{i=1}^{m_t}(a_i-b_i)^2 + \sum\limits_{i=m_t+1}^{m_1}a_i^2+\sum\limits_{i=m_t+1}^{m_2}b_i^2}.
$$
\qed

\begin{lemma}
\label{lem2: ngd-2lnn}
With probability $1 - 5 e^{-2m_t\epsilon^2}$, 
$$
\min\limits_{\theta}\frac{\|\boldsymbol{v}_i-\boldsymbol{u}^1_i\|_2^2+\|\boldsymbol{v}_i-\boldsymbol{u}^2_i\|_2^2}{\|\boldsymbol{u}^1_i-\boldsymbol{u}^2_i\|_2^2} \leq A.
$$

Here,

$$
A = \frac{\frac{2\left(\frac{1}{m_t}- \frac{1}{2 m_1}+\frac{1}{2 m_2}+\epsilon\right)^2}{m_t}+\frac{c - \epsilon}{2} + (\frac{1}{m_t} - \frac{1}{m_1}+\epsilon)^2+ (\frac{1}{m_t} - \frac{1}{m_2}+\epsilon)^2}{c - \epsilon + (\frac{1}{m_t} - \frac{1}{m_1}+\epsilon)^2+ (\frac{1}{m_t} - \frac{1}{m_2}+\epsilon)^2}.
$$
\end{lemma}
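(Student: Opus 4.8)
To prove Lemma~\ref{lem2: ngd-2lnn}, the plan is to start from the exact variational identity of Lemma~\ref{lem: ngd-2lnn} and then feed into it a handful of Hoeffding-type concentration estimates. Abbreviate $S_a=\sum_{k=1}^{m_t}a_k$, $S_b=\sum_{k=1}^{m_t}b_k$, $Q=\sum_{k=1}^{m_t}(a_k-b_k)^2$, $R_a=\sum_{k=m_t+1}^{m_1}a_k^2$, $R_b=\sum_{k=m_t+1}^{m_2}b_k^2$, so that Lemma~\ref{lem: ngd-2lnn} states that the per-coordinate minimal ratio equals
\[
F \;=\; \frac{\frac{2}{m_t}\bigl(1-\frac12(S_a+S_b)\bigr)^2 + \frac12 Q + R_a + R_b}{Q + R_a + R_b}.
\]
The first step is purely deterministic: record how $F$ moves as a function of each of these five scalars. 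Since $1-\frac12(S_a+S_b)\ge 0$, $F$ is nonincreasing in $S_a$ and in $S_b$; viewing $Q$ as a single variable occurring in both numerator and denominator, $F$ is nonincreasing in $Q$ as well; and the monotonicity in $R_a$ (resp.\ $R_b$) has the sign of $\frac12 Q-\frac{2}{m_t}\bigl(1-\frac12(S_a+S_b)\bigr)^2$, which must be pinned down on the good event. Hence, to get an upper bound of the form $A$ we need (i) lower bounds for $S_a,S_b$, (ii) a lower bound for $Q$, and (iii) control of $R_a,R_b$ through the Cauchy--Schwarz surrogates $(1-S_a)^2/(m_1-m_t)$ and $(1-S_b)^2/(m_2-m_t)$, exactly the device used in the proof of Lemma~\ref{lem: ngd-2lnn-unif}.

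The second step supplies these bounds. Under $\mathrm{SP}(\cM,r)$ the neuron types are independent across neurons, so, conditionally on the counts $m_1,m_2,m_t$, each of $S_a,S_b,Q,R_a,R_b$ is a sum of independent bounded terms. Applying Hoeffding's inequality (Lemma~\ref{hfd}) once to each of these five sums, with the per-summand ranges calibrated so that the five exponents all coincide at $2m_t\epsilon^2$, shows that each sum lies within $\epsilon$ of its conditional mean outside an event of probability $e^{-2m_t\epsilon^2}$. In this step the constant $c$ is identified as the conditional mean of $Q=\sum_{k=1}^{m_t}(a_k-b_k)^2$, so that $Q\ge c-\epsilon$ on the good event, while the conditional means of $S_a,S_b$ are what generate the combinations $\frac1{m_t}-\frac1{2m_1}+\frac1{2m_2}$ inside the leading square of $A$ and $\frac1{m_t}-\frac1{m_1}$, $\frac1{m_t}-\frac1{m_2}$ in the Cauchy--Schwarz surrogates for $R_a,R_b$.

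The third step is assembly: intersecting the five good events gives, by a union bound, a set of probability at least $1-5e^{-2m_t\epsilon^2}$, and on this set one substitutes the one-sided estimates into $F$ in the order dictated by the monotonicity of Step~1, each substitution enlarging $F$, and simplifies to land exactly on the displayed quantity $A$. The step I expect to be the real obstacle is precisely this last direction-chasing for $R_a$ and $R_b$: because they occur in both the numerator and the denominator, whether the Cauchy--Schwarz (lower-bound) substitution enlarges or shrinks $F$ depends on the sign of $\frac12 Q-\frac{2}{m_t}\bigl(1-\frac12(S_a+S_b)\bigr)^2$, so one must either verify that this sign is the favorable one throughout the regime governed by $\mathrm{SP}(\cM,r)$ (plausibly so, since $1-\frac12(S_a+S_b)$ stays of order $1/m_t$ while $Q$ is of the same order) or else split the argument into the two cases. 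A secondary nuisance is arranging the Hoeffding ranges uniformly enough in the random counts $m_1,m_2,m_t$ that all five failure probabilities genuinely collapse to the single quantity $e^{-2m_t\epsilon^2}$ claimed in the statement.
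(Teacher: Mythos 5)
Your overall strategy is the same as the paper's: start from the exact identity of Lemma~\ref{lem: ngd-2lnn}, concentrate the relevant sums with Hoeffding, take a union bound over five events to reach $1-5e^{-2m_t\epsilon^2}$, and substitute. But the step you yourself single out as ``the real obstacle'' is exactly where your proposal and the paper part ways, and your choice there does not lead to the stated $A$. The paper does \emph{not} use the Cauchy--Schwarz lower bounds $(1-S_a)^2/(m_1-m_t)$, $(1-S_b)^2/(m_2-m_t)$ from the proof of Lemma~\ref{lem: ngd-2lnn-unif}; it uses the \emph{upper} bounds $R_a=\sum_{k=m_t+1}^{m_1}a_k^2\le\bigl(\sum_{k=m_t+1}^{m_1}a_k\bigr)^2=(1-S_a)^2$ (valid because the components are nonnegative and sum to one), and likewise for $R_b$. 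This is visible in the formula for $A$: the terms $\bigl(\frac{1}{m_t}-\frac{1}{m_1}+\epsilon\bigr)^2$ carry no $(m_1-m_t)$ in the denominator. The two substitutions are legitimate in opposite regimes: writing the ratio as $(N+Y)/(D+Y)$ with $Y=R_a+R_b$, $N=\frac{2}{m_t}\bigl(1-\frac{S_a+S_b}{2}\bigr)^2+\frac{Q}{2}$ and $D=Q$, replacing $Y$ by an upper bound enlarges the ratio iff $N\le D$, while replacing it by a lower bound enlarges it iff $N\ge D$. You cannot have both: your plan commits to the lower-bound surrogate and hence needs $N\ge D$, whereas the paper's target $A$ (which is $\tfrac12+o(1)$) lives in the regime $N\le D$. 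To land on $A$ you must switch to the upper-bound substitution and verify $\frac{2}{m_t}\bigl(1-\frac{S_a+S_b}{2}\bigr)^2\le\frac{Q}{2}$ on the good event; to be fair, the paper performs this substitution silently and never checks the sign either.

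A smaller bookkeeping discrepancy: the paper's five events are not five one-sided bounds on five sums. They are one one-sided Hoeffding bound on $Q=\sum_{k\le m_t}(a_k-b_k)^2$ (whose mean defines $c$) plus two-sided bounds on $S_a$ and on $S_b$ (two events each), giving $1+2+2=5$; separate concentration of $R_a,R_b$ is unnecessary once they have been eliminated in favor of $S_a,S_b$. Your count of five one-sided events happens to produce the same constant but does not correspond to the events actually used. Finally, your remark that the summands are independent conditionally on the counts is doing real work that neither you nor the paper justifies: under $\mathrm{SP}(\mathcal{M},r)$ only the neuron \emph{types} are independent, while the nonzero values within each type are coupled by the sum-to-one constraint of Theorem~\ref{thm: 2lnn-minima}, so the application of Lemma~\ref{hfd} to $Q$, $S_a$, $S_b$ is itself heuristic. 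Your proposal matches the paper's level of rigor on this point, but it is a gap shared by both arguments.
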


 \paragraph*{Proof of Lemma \ref{lem2: ngd-2lnn}.}
From Lemma \ref{lem: ngd-2lnn}, we have
$$
\min\limits_{\theta}\frac{\|\boldsymbol{v}_i-\boldsymbol{u}^1_i\|_2^2+\|\boldsymbol{v}_i-\boldsymbol{u}^2_i\|_2^2}{\|\boldsymbol{u}^1_i-\boldsymbol{u}^2_i\|_2^2}
\leq \frac{2\frac{\left(1-\sum\limits_{i=1}^{m_t}(a_i+b_i) / 2 \right)^2}{m_t} + \frac{\sum\limits_{i=1}^{m_t}(a_i-b_i)^2}{2} + (1 - \sum\limits_{i=1}^{m_t}a_i)^2 + (1 - \sum\limits_{i=1}^{m_t}b_i)^2}{\sum\limits_{i=1}^{m_t}(a_i-b_i)^2 + (1 - \sum\limits_{i=1}^{m_t}a_i)^2 + (1 - \sum\limits_{i=1}^{m_t}b_i)^2}.
$$

Since $\theta_1$ and $\theta_2$ are independently drawn from $SP(\cM, r)$, $(a_i - b_i)^2$ has a positive expectation $c>0$. Then, from Hoeffding's inequality (Lemma \ref{hfd}), for any $\epsilon > 0$, we have with probability $1 -e^{-2m_t\epsilon^2}$, 

$$ \sum\limits_{i=1}^{m_t}(a_i-b_i)^2 - cm_t  \geq -m_t\epsilon.$$

Similarly, we have with probability $1 - 2 e^{-2m_t\epsilon^2}$, 

$$\left| \sum\limits_{i=1}^{m_t}a_i - \frac{m_t}{m_1}\right| \leq m_t\epsilon,$$ 

and with probability $1 - 2 e^{-2m_t\epsilon^2}$, 

$$\left| \sum\limits_{i=1}^{m_t}b_i - \frac{m_t}{m_2}\right| \leq m_t\epsilon.$$ 

Hence, we have with probability $1 - 5 e^{-2m_t\epsilon^2}$, 

$$
\begin{aligned}
&\min\limits_{\theta}\frac{\|\boldsymbol{v}_i-\boldsymbol{u}^1_i\|_2^2+\|\boldsymbol{v}_i-\boldsymbol{u}^2_i\|_2^2}{\|\boldsymbol{u}^1_i-\boldsymbol{u}^2_i\|_2^2}\\
= &\frac{2\frac{\left(1-\sum\limits_{i=1}^{m_t}(a_i+b_i) / 2 \right)^2}{m_t} + \frac{\sum\limits_{i=1}^{m_t}(a_i-b_i)^2}{2} + \sum\limits_{i=m_t+1}^{m_1}a_i^2+\sum\limits_{i=m_t+1}^{m_2}b_i^2}{\sum\limits_{i=1}^{m_t}(a_i-b_i)^2 + \sum\limits_{i=m_t+1}^{m_1}a_i^2+\sum\limits_{i=m_t+1}^{m_2}b_i^2}\\
\leq & A.
\end{aligned}
$$

\qed

Now, go back to the original problem, we have 
$$\frac{m_t}{m_1} \geq \frac{\sum\limits_{j=1}^{m_1} \mathbbm{1}\{\bw^2_{j} \in S_i\} + \sum\limits_{j=1}^{m_1} \mathbbm{1}\{\bw^2_{j} \in S_0\}}{m_1}.$$
Then, from Hoeffding's inequality (Lemma \ref{hfd}), we have for any $\epsilon > 0$, with probability $1 - 2e^{-2m_1 \epsilon^2}$, 
$$\left| \frac{\sum\limits_{j=1}^{m_1} \mathbbm{1}\{\bw^2_{j} \in S_i\} + \sum\limits_{j=1}^{m_1} \mathbbm{1}\{\bw^2_{j} \in S_0\}}{m_1} - \frac{1 + (M-1)r}{M}\right| \leq \epsilon.$$ 

Similarly we have with probability $1 - 2e^{-2m_2 \epsilon^2}$, 

$$\left| \frac{\sum\limits_{j=1}^{m_2} \mathbbm{1}\{\bw^1_{j} \in S_i\} + \sum\limits_{j=1}^{m_2} \mathbbm{1}\{\bw^1_{j} \in S_0\}}{m_2} - \frac{1 + (M-1)r}{M}\right| \leq \epsilon.$$ 

Also, with probability $1 - e^{-2m \epsilon^2}$, 

$$ \frac{m_1}{m} - \frac{1 + (M-1)r}{M} \geq -\epsilon,$$ 

and with probability $1 - e^{-2m \epsilon^2}$,

$$ \frac{m_2}{m} - \frac{1 + (M-1)r}{M} \geq -\epsilon.$$

We denote $\left(\frac{\frac{(M-1)(1-r)}{M}+\epsilon}{(\frac{1 + (M-1)r}{M}-\epsilon)^2m} +\epsilon\right)^2 = q$, then in this case, we have

$$
A \leq \frac{1}{2} + \frac{\frac{2q}{(\frac{1 + (M-1)r}{M}-\epsilon)^2m}+q}{c-\epsilon + 2q}.
$$

$$
$$

Now, we suppose $\theta^1$ and $\theta^2$ respectively have $m_{1i}$ and $m_{2i}$ neurons that belong to $S_i$, and we have with probability $1 - \sum\limits_{i=1}^{M} 11e^{-2(\frac{1 + (M-1)r}{M}-\epsilon)^2m\epsilon^2}$,

$$\min\limits_{\theta} \frac{\|\boldsymbol{v}_i-\boldsymbol{u}^1_i\|_2^2+\|\boldsymbol{v}_i-\boldsymbol{u}^2_i\|_2^2}{\|\boldsymbol{u}^1_i-\boldsymbol{u}^2_i\|_2^2} \leq \frac{1}{2} + \frac{\frac{2q}{(\frac{1 + (M-1)r}{M}-\epsilon)^2m}+q}{c-\epsilon + 2q}$$ for all $i \in \{1,\ldots,M\}$. 

Hence, with probability $1 - 11Me^{-2(\frac{1 + (M-1)r}{M}-\epsilon)^2m\epsilon^2}$, $$\min\limits_{\theta} \frac{\|\theta - \theta^1\|_2^2+\|\theta - \theta^2\|_2^2}{\|\theta^1 - \theta^2\|_2^2} \leq \frac{1}{2} + \frac{\frac{2q}{(\frac{1 + (M-1)r}{M}-\epsilon)^2m}+q}{c-\epsilon + 2q}.$$ At last, we used Cauchy's inequality and we completed the proof.
\qed

\section{Proofs in Section \ref{sec: linear-net}}
\label{sec: proof-kPL}

\label{sec: proof-kPL-linearnet}

Before delving into Theorem \ref{thm: linearnet-2pl}, we first consider the simplest case where $L=2, d=1$ for gaining some intuition of the proof technique.

\begin{proposition}\label{prop:linear-1d}
    Theorem \ref{thm: linearnet-2pl} holds true for $L=2, d=1$.
\end{proposition}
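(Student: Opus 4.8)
The plan is to pass, once and for all, from the product structure to a single bilinear identity. For $L=2$, $d=1$ we have $A_2\in\RR^{1\times m}$ and $A_1\in\RR^{m\times 1}$; writing $\theta=(a,b)$ with $a,b\in\RR^m$ (so $A_2=a^T$, $A_1=b$), the description \eqref{eq:explicit} specializes to $\cM=\{(a,b)\in\RR^m\times\RR^m:\ a^Tb=q\}$, with $q:=Q\in\RR$. The first step is to read off exactly when the linear interpolation between two points of $\cM$ stays in $\cM$: for $\theta=(a,b)$, $\theta'=(a',b')$ in $\cM$, the map $t\mapsto\big((1-t)a+ta'\big)^T\big((1-t)b+tb'\big)$ is quadratic in $t$, equals $q$ at $t=0,1$, and a one-line expansion gives it as $q+t(1-t)\big(a^Tb'+a'^Tb-2q\big)$, so
\[
\theta\leftrightarrow\theta'\iff a^Tb'+a'^Tb=2q.
\]
This is the $L=2$ avatar of the general fact that a segment inside $\cM$ produces a degree-$L$ polynomial all of whose nonconstant coefficients must vanish, and it is the computation I would later generalize.

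For the almost-sure $2$-PL statement, fix $\theta_1=(a_1,b_1)$, $\theta_2=(a_2,b_2)\in\cM$; for Lebesgue-generic pairs $b_1$ and $b_2$ are linearly independent, and this is the only place a genericity hypothesis is used (here $m>3$ is far more than needed). I would look for a center of the restricted form $\theta^*=(a^*,b_1)$, keeping the second block frozen. Then $\theta^*\in\cM$ and $\theta_1\leftrightarrow\theta^*$ reduce to the \emph{same} condition $a^{*T}b_1=q$ — indeed the segment $\theta_1\to\theta^*$ is $\big((1-t)a_1+ta^*,\,b_1\big)$, whose product is $q$ for all $t$ — while $\theta^*\leftrightarrow\theta_2$ reads $a^{*T}b_2=2q-a_2^Tb_1$. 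It therefore suffices to solve the linear system $a^{*T}b_1=q$, $a^{*T}b_2=2q-a_2^Tb_1$ for $a^*\in\RR^m$, which has solutions (an affine subspace of dimension $m-2$) precisely because $b_1,b_2$ are independent; this yields $\theta_1\leftrightarrow\theta^*\leftrightarrow\theta_2$.

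For the unconditional $3$-PL statement I would route through a generic way-point. Given arbitrary $\theta_1,\theta_2\in\cM$, pick $b_\beta\in\RR^m$ collinear with neither $b_1$ nor $b_2$ (possible as $m>3$), solve $a_\beta^Tb_\beta=q$ and $a_\beta^Tb_1=2q-a_1^Tb_\beta$ for $a_\beta$ (solvable since $b_\beta,b_1$ are independent), and set $\beta=(a_\beta,b_\beta)\in\cM$, so $\theta_1\leftrightarrow\beta$. Since $b_\beta$ is not collinear with $b_2$, the previous paragraph applied to $(\beta,\theta_2)$, with $b_\beta$ in the role of $b_1$, gives a center $\gamma\in\cM$ with $\beta\leftrightarrow\gamma\leftrightarrow\theta_2$. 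Concatenating, $\theta_1\leftrightarrow\beta\leftrightarrow\gamma\leftrightarrow\theta_2$ is a $3$-piece linear path. The only bookkeeping is the degenerate case $q=0$, where some $b_i$ may vanish; there one additionally imposes $a_1^Tb_\beta=0$, and this case is in any event excluded by the standing non-degeneracy.

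I expect the only genuine subtlety here to be pinning down which configurations are excluded. The construction collapses exactly when $b_1\parallel b_2$ with an incompatible scaling, and this is sharp: it reproduces the non-$2$-PL-connected pair exhibited earlier, where $a_1=b_1=\be_1$, $a_2=b_2=-\be_1$, so any common center would need both $a^{*T}b_1+a_1^Tb^*=2$ and $a^{*T}b_2+a_2^Tb^*=2$, i.e.\ both $a^*_1+b^*_1=2$ and $a^*_1+b^*_1=-2$. The real obstacle, which this warm-up is meant to foreshadow but does not itself display, shows up only for $L\ge3$ or $d\ge2$: the polynomial along a segment then has degree up to $L$ and the connectivity conditions become matrix equations, so one must combine the ``freeze all but one or two layers'' trick with a rank/dimension count — and that is where the hypothesis $m>2L-1$ is actually consumed.
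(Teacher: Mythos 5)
Your proof is correct and follows essentially the same route as the paper's: both reduce linear connectivity to the bilinear condition $a^Tb'+a'^Tb=2Q$, obtain the almost-sure $2$-PL statement by solving a linear system for the center under a generic linear-independence assumption on one layer's vectors, and obtain $3$-PL connectivity by first hopping to a generic waypoint and then invoking the $2$-PL case. The only differences are cosmetic: you freeze the center's second block at $b_1$ and solve two equations where the paper chooses a fresh block independent of both and solves three, and the paper additionally works out the parallel case $kA_1^{(1)}=A_1^{(2)}$ with $k\neq -1$ to pin down the exceptional set more tightly than your blanket independence assumption on $b_1,b_2$ --- neither of which affects the almost-sure claim.
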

\begin{proof}
Let $\theta_i=(A_1^{(i)}, A_2^{(i)})$ for $i=1,2$ be the two global minima.
    We are aiming to find a $\theta^* = (A_1^{(3)}, A_2^{(3)})$ satisfying
    \begin{itemize}
    \item $\theta^*$ is a global minima:  $A_1^{(3)}A_2^{(3)} = Q$; 
    \item $\theta_1$ and $\theta_2$ are $2$-PL connected through $\theta^*$:
    for any $t \in [0,1]$ and $i=1,2$, we have
    \begin{equation}\label{eqn: 12}
     (tA_1^{(i)}+(1-t)A_1^{(3)})(tA_2^{(i)}+(1-t)A_2^{(3)})=Q.
    \end{equation}
    \end{itemize}
    Noticing that $\theta_1$ and $\theta_2$ are global minima, we have $A_1^{(i)}A_2^{(i)}=Q$ for $i=1,2$.  Combining this with \eqref{eqn: 12} leads to 
    $$A_1^{(3)}A_2^{(i)} + A_1^{(i)}A_2^{(3)} = 2Q$$

    The problem now is converted to finding $A_1^{(3)}\in\RR^{1\times m}, A_2^{(3)}\in\RR^{m\times 1}$ satisfied the following properties:
    
    $$
\left\{
\begin{aligned}
&A_1^{(1)}A_2^{(3)} = 2Q - A_1^{(3)}A_2^{(1)}\\
&A_1^{(2)}A_2^{(3)} = 2Q - A_1^{(3)}A_2^{(2)}\\
&A_1^{(3)}A_2^{(3)} = Q
\end{aligned}
\right.
$$
    \begin{enumerate}
        \item When $A_1^{(1)}, A_1^{(2)}$ are linearly independent, we choose $A_1^{(3)}$ linearly independent of both $A_1^{(1)}$ and $A_1^{(2)}$. Then the problem above turns into solving a set of linear equations for $A_2^{(3)}$. Since $m \ge 3$, we can find a proper solution and finish the proof.
        
        \item Suppose $kA_1^{(1)} = A_1^{(2)}$, we consider the first two equations, $\theta^* = (A_1^{(3)}, A_2^{(3)})$ exists only if $A_1^{(3)}(kA_2^{(1)} - A_2^{(2)}) = (2k-2)C$, otherwise the first two equations will draw a contradiction by multiplying $k$ times in both sides of the first equation. Noticing that if $kA_2^{(1)} - A_2^{(2)} = 0$, we have $Q = A_1^{(2)}A_2^{(2)} = k^2A_1^{(1)}A_2^{(1)} = k^2Q$, then $k = 1$ or $-1$. We note that we have assumed that $k \neq -1$, while for $k = 1$, $(A_1^{(1)},A_1^{(2)})=(A_1^{(2)},A_2^{(2)})$ is a trivial case. 
        Thus, with the condition $kA_2^{(1)} - A_2^{(2)} \neq 0$, we can find a solution $A_1^{(3)}$ for $A_1^{(3)}(kA_2^{(1)} - A_2^{(2)}) = (2k-2)C$ that is linearly independent of $A_1^{(1)}, A_2^{(1)}$ (we can find $m-1 \ge 2$ linearly independent solutions in fact) first, then get a proper $A_2^{(3)}$ and finish the whole proof.
    \end{enumerate}
\end{proof}

\begin{remark}
    This will not hold for the case that $(A_1^{(1)}, A_2^{(1)}) = (-A_1^{(2)}, -A_2^{(2)})$, since $k = -1, (k A_1^{(2)} - A_2^{(2)}) = 0$ will directly draw $ 0 = -4 A_1^{(2)} A_2^{(2)}$, which will be a contradiction when $A_1^{(2)} A_2^{(2)} \neq 0$. We outline a simple counter-example here: $A_1^{(1)} = (1,0,0,0), A_2^{(1)} = (1,0,0,0)^T, A_1^{(2)} = (-1,0,0,0), A_2^{(2)} = (-1,0,0,0)^T, Q = 1$. In particular, denote the center $A_1' = (\alpha_1,\ldots,\alpha_4)$, $A_2' = (\beta_1,\ldots,\beta_4)$, the necessary conditions can be converted into $\alpha_1 + \beta_1 = 2$, $\alpha_1 + \beta_1 = -2$, $\sum_{i=1}^4 \alpha_i \beta_i = 2$, which draw a contradiction.
    
\end{remark}

\subsection*{Proof of Theorem \ref{thm: linearnet-2pl}}

\paragraph*{The case of $2$-PL connectivity.}
With the toy structure in Proposition \ref{prop:linear-1d}, we can similarly consider the representation structure in Theorem \ref{thm: linearnet-2pl} with the case $d > 1$.

\begin{proof}
    If we consider each column of $A_2$ separately, then the problem turns out to be the case in Proposition \ref{prop:linear-1d} by considering each column separately. Once $A_1^{(1)}, A_1^{(2)}$ are independent, we consider $A_1^{(3)}$ linearly independent of $A_1^{(1)}, A_1^{(2)}$ and find a solution for each column in $A_2^{(3)}$ independently, then the problem is solved directly by applying the case with $d=1$ separately.

    Now we consider the case that $kA_1^{(1)} = A_1^{(2)}$ for some $k \in \R$, in this case by considering conditions
      $$
\left\{
\begin{aligned}
&A_1^{(1)}A_2^{(3)} = 2Q - A_1^{(3)}A_2^{(1)}\\
& A_1^{(2)}A_2^{(3)} = 2Q - A_1^{(3)}A_2^{(2)}\\
\end{aligned}
\right.
$$
    we need $A_1^{(3)}(kA_2^{(1)} - A_2^{(2)}) = (2k-2)Q$.
    We view it as solving linear equations $A_1^{(3)}(kA_2^{(1)} - A_2^{(2)}) = (2k-2)Q$ for $A_1^{(3)}$. Denote $Q=(Q_1,\ldots,Q_n)$. If $\textbf{rank}(kA_2^{(1)} - A_2^{(2)}) = d < m$, it would be easy to find a solution $A_1^{(3)}$ that is additionally independent to $A_1^{(1)}$. Otherwise, if  $\textbf{rank}(kA_2^{(1)} - A_2^{(2)}) < d$, without loss of generality, we consider a case of column-wise linear dependency. Assume that the first two columns of $A_2^{(1)}, A_2^{(2)}$ are $(\alpha_1,\alpha_2), (\beta_1,\beta_2)$, respectively. If $k\alpha_1 - \beta_1 = t(k\alpha_2  -\beta_2)$, we naturally have $$k^2Q_1 - Q_1 =   A_1^{(2)}(k\alpha_1 - \beta_1) = t A_1^{(2)}(k\alpha_2  -\beta_2) = tk^2Q_2 - tQ_2.$$ 
    We have assumed that $k \neq -1$. If $k=1$, the linear connectivity always holds naturally. On the other hand, if $k^2 \neq 1$, we will obtain $Q_1 = tQ_2$, which will never affect the linear equations in $A_1^{(3)}(kA_2^{(1)} - A_2^{(2)}) = (2k-2)Q$.
    
    Thus, we can find a solution $A_1^{(3)}$ that satisfies $A_1^{(3)}(kA_2^{(1)} - A_2^{(2)}) = (2k-2)Q$ for $A_1^{(3)}$ and independent to $A_1^{(1)}$. With this, we can then derive a proper $A_2^{(3)}$ by considering each column separately as in Proposition \ref{prop:linear-1d}. Then, we finish our proof of Theorem \ref{thm: linearnet-2pl}.
\end{proof}

\paragraph*{The case of $3$-PL connectivity.}
For two minima $$\theta_1 = (A_L^{(1)},A_{L-1}^{(1)},\ldots,A_1^{(1)}),\ \theta_2 = (A_L^{(2)},A_{L-1}^{(2)},\ldots,A_1^{(2)})$$ that belong to the zero-measure set in the proof of 2-PL connectivity,
 we consider $\theta_3 = (A_L^{'},A_{L-1}^{(1)},\ldots,A_1^{(1)})$, where $A_L'$ satisfies that $A_L'A_{L-1}^{(1)}\ldots A_1^{(1)} = Q$.  Then it is natural that $\theta_1 \leftrightarrow \theta_3$.
 
 We consider the following linear relationship:
 $A_L'A_{L-1}^{(1)}\ldots A_t^{(1)}$ and $A_L^{(2)}A_{L-1}^{(2)}\ldots A_t^{(2)}$ are linearly independent for $t=L,\ldots,2$, which would yield 2-PL connectivity of $\theta_2$ and $\theta_3$ following by the discussion above.
 To satisfy the $L-1$ conditions along with $A_L'A_{L-1}^{(1)}\ldots A_1^{(1)} = Q$, we consider seeking for $F_{L},\ldots,F_2$ such that $F_j$ is independent of $A_{L}^{(2)}\ldots A_j^{(2)}$ for $j=2,\ldots,L$. Then we consider $A_L'A_{L-1}^{(1)}\ldots A_j^{(1)} = F_j$ for $j=2,\ldots,L$ and $A_L'A_{L-1}^{(1)}\ldots A_1^{(1)} = Q$  as $L$ linear equations on the $m_1$ parameters in $A_L'$. We have assumed that $m>2L-1$, thus if $E_t = A_{L-1}^{(1)}\ldots A_t^{(1)}$, $t=L,\ldots,1$ $(E_L = \textbf{1}_m)$ are linearly independent, the linear system would have a proper solution.
 
 On the other hand, as we need to select $F_{L},\ldots,F_2$ to make sure the linear equations have a solution, we consider solving the dependency by the following adjustments.  
 If some of the $E_j$'s are linearly dependent, for instance, $E_k$ can be represented by the linear combination of another group of $E_j$'s, then our $F_k$ need to be automatically determined by the corresponding $F_j$'s to ensure the existence of solutions. 
 Since $F_k$
 need to be independent of $A_{L-1}^{(2)}\ldots A_k^{(2)}$, here we (might)
 lose a degree of freedom when selecting $F_j$'s, while the fact that we do not need to consider $F_k$ anymore reduces an equation from the $L$ conditions required. 
 Thus, in the process of reducing our restrictions to be linearly independent, our degree of freedom would always be greater than the number of restrictions since $m > 2L-1$ at the beginning. Thus, with the (eventually) linear-independent coefficients (with the number of restrictions less than parameters), we will have a proper solution of $A_L'$ that meets the above conditions, which would yield
  $\theta_2 \leftrightarrow \theta^*, \theta^* \leftrightarrow \theta_3$ for some $\theta^*$ from the first statement. Thus we will have  $\theta_1 \leftrightarrow \theta_3, \theta_3 \leftrightarrow \theta_*, \theta_* \leftrightarrow \theta_2$.
\qed

\subsection{Proof of Theorem \ref{linear-multiple}}\label{proof:linear-multiple}    
We first propose a lemma about linear independence which we will use in our later proof. 
    \begin{lemma}\label{li}
    For linearly independent vectors $C_1,\ldots,C_{p} \in \mathbb{R}^{1 \times r_1}$, linearly independent vectors $D_1,\ldots,D_{q} \in \mathbb{R}^{1 \times r_2}$, and vectors $E_1,\ldots,E_{p} \in \mathbb{R}^{1 \times r_2}$, if $r_2 > p + q$ and $r_1 > q$, there exist a matrix $K \in \mathbb{R}^{r_1 \times r_2}$ such that the $p+q$ vectors $C_iK+E_i (i=1,2,\ldots,p)$, $D_i (i=1,2,\ldots,q)$ are linearly independent. 
    \end{lemma}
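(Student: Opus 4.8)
### Proof proposal for Lemma \ref{li}

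The plan is to pick $K$ generically (entrywise i.i.d.\ from a continuous distribution, or simply ``Zariski-generic'') and argue that the required linear independence is a non-degeneracy condition that fails only on a proper algebraic subvariety of the space of $r_1\times r_2$ matrices, hence holds for almost every $K$. Concretely, linear independence of the $p+q$ vectors $C_1K+E_1,\dots,C_pK+E_p,D_1,\dots,D_q$ in $\mathbb{R}^{1\times r_2}$ is equivalent to the statement that a certain $(p+q)\times(p+q)$ minor of the $(p+q)\times r_2$ matrix with these rows is nonzero; each such minor is a polynomial in the entries of $K$, so it suffices to exhibit a single $K_0$ for which independence holds, and then genericity (the Schwartz--Zippel / ``a nonzero polynomial does not vanish identically'' principle) finishes the job. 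So the real content is an existence statement: produce one good $K$.

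To build $K_0$ explicitly, first I would use $r_1>q$ together with linear independence of $D_1,\dots,D_q$ to arrange the ``$D$-block'' of the target span. Since $C_1,\dots,C_p$ are linearly independent in $\mathbb{R}^{1\times r_1}$, extend them to a basis and think of $K$ as acting on this basis; the images $C_1K,\dots,C_pK$ can be prescribed to be any $p$ vectors in $\mathbb{R}^{1\times r_2}$ we like, independently of each other, as long as $p\le r_1$ (which holds since $r_1>q\ge 0$ — if $p>r_1$ we cannot prescribe them freely, so I should check the intended parameter regime; presumably the lemma is applied with $r_1$ large enough, or one only needs $C_iK+E_i$ to be independent, which is weaker). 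Now choose the prescribed values $C_iK=:V_i$ so that $V_1+E_1,\dots,V_p+E_p,D_1,\dots,D_q$ are linearly independent: this is possible because $r_2>p+q$, so the span of $D_1,\dots,D_q$ has codimension at least $p+1$, leaving enough room to pick $V_1+E_1$ outside $\mathrm{span}(D_1,\dots,D_q)$, then $V_2+E_2$ outside $\mathrm{span}(D_1,\dots,D_q,V_1+E_1)$, and so on — each forbidden subspace is proper in $\mathbb{R}^{1\times r_2}$ since its dimension is at most $q+p-1<r_2$. Equivalently, just set $V_i+E_i$ to be distinct standard basis vectors $\be_{j_i}$ disjoint from a set of $q$ coordinates on which $D_1,\dots,D_q$ already look like (a transform of) the identity.

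Having specified $C_iK=V_i$, I then need an actual matrix $K$ realizing these constraints: writing $C$ for the $p\times r_1$ matrix with rows $C_i$, the constraint is $CK=V$ where $V$ has rows $V_i$; since $C$ has full row rank $p$ (here I do use $p\le r_1$), the linear system $CK=V$ in the unknown $K\in\mathbb{R}^{r_1\times r_2}$ is consistent and has a solution $K_0$. This $K_0$ witnesses the independence, so the generic-$K$ conclusion follows. The main obstacle I anticipate is the bookkeeping around which parameter inequalities are actually needed and in which direction: the hypotheses given are $r_2>p+q$ and $r_1>q$, and the $D$'s live in $\mathbb{R}^{1\times r_2}$ while the $C$'s live in $\mathbb{R}^{1\times r_1}$, so I must be careful that $q\le r_2$ (needed for $D_1,\dots,D_q$ to be independent, automatic from $r_2>p+q$) and that the step ``prescribe $C_iK$ freely'' is legitimate — if the intended use has $p$ possibly larger than $r_1$, I would instead argue directly that the map $K\mapsto(C_1K+E_1,\dots,C_pK+E_p,D_1,\dots,D_q)$ generically has the rank-$(p+q)$ property by exhibiting one $K$ via the coordinate-choice argument above, avoiding any appeal to prescribing the $C_iK$ independently. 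Either way the crux is the counting inequality $p+q<r_2$ giving enough ambient room, plus $r_1>q$ ensuring $K$ has enough columns-worth of freedom to also keep the $D_i$ independent from the modified $C_i$-images.
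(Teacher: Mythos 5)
Your construction is essentially identical to the paper's proof: the paper also chooses $F_1,\dots,F_p$ so that $D_1,\dots,D_q,F_1,\dots,F_p$ are linearly independent (possible since $r_2>p+q$) and then solves the consistent system $C_iK=F_i-E_i$ using the full row rank of the matrix with rows $C_i$ (note $p\le r_1$ is automatic from the independence of the $C_i$, so your worry about that case is moot). The genericity/Schwartz--Zippel framing at the start is superfluous --- once you exhibit the explicit witness $K_0$ you are done --- but the argument is correct.
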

    
    \begin{proof}
    Since $r_2 > p + q$, we can find another $p$ vectors $F_1,\ldots,F_p$ such that $D_1,\ldots, D_q, F_1, \ldots, F_p$ are linearly independent.
    
    Then, we consider the $p$ equations $C_iK+E_i=F_i (i=1,2,\ldots,p)$. Since  $C_1,\ldots,C_{p} \in \mathbb{R}^{1 \times r_1}$ are independent and $r_1 > p$, we can find a solution for $K$ by considering each column of $K$ separately.
    
    Thus, with $C_iK+E_i=F_i (i=1,2,\ldots,p)$ and the linear independence of $D_1,\ldots, D_q, F_1, \ldots, F_p$, we finish our proof of the lemma.
    \end{proof}
Now we begin with our proof of \ref{linear-multiple}.
\begin{proof}
    Following the explicit description in \eqref{eq:explicit}, it is natural to consider whether the explicit representation of linear layers achieves $A_L^*\ldots A_2^*A_1*$.
    As a set out of only zero-measure points in the minima manifold, we consider an assumption below:
    \begin{assumption}\label{asp:idpt}
    For all $k = 1,2,\ldots, n-1$, the $r$ vectors $A_L^{(i)}\ldots A_2^{(i)} A_1^{(i)} \in \mathbb{R}^{1 \times m} (i=1,2,\ldots,r)$ are linearly independent.
    \end{assumption}
    To begin, for each $i \in [1,r], q=1,2,\ldots,L$, we define $\sigma_{i,k,q}$ to be the sum of all $C_q^k$ elements in 
    \begin{align}
        \left\{B_LB_{L-1}\ldots B_{L-q+1} \big| \text{for } j \in \{1, \ldots, q\}, \text{only }  k \text{ of }  B_j \text{ to be }  A_{j}^i, \text{ while the remaining $q-k$ } B_j \text{ to be } A_j^*  \right\},
    \end{align}
With this notation,
    our desirable connectivity property can be written in terms that
    \begin{align*}
        (tA_L^{(i)}+(1-t)A_L^*)(tA_{L-1}^{(i)}+(1-t)A_{L-1}^*)\ldots(tA_1^{(i)}+(1-t)A_1^*) = \sum_{j=0}^L 
        t^j(1-t)^{L-j}\sigma_{i,j,L}
    \end{align*}
    for all $i = 1,2,\ldots,r$ and $j=1,2,\ldots,L$.
    Since this should be right for any $t \in [0,1]$ in the context of star-shaped connectivity, it is natural to find $A_1^*,\ldots,A_L^*$ such that $\sigma_{i,k,L} = C_L^k Q$ for all $i$ and $k$, which are the necessary condition followed by considering different order terms of $t$.
    On the other hand, if this holds, we will directly derive that each point in the star-shaped manifold is an exact minimum, i.e., for any $t \in [0,1]$,
    $$ (tA_L^{(i)}+(1-t)A_L^*)(tA_{L-1}^{(i)}+(1-t)A_{L-1}^*)\ldots(tA_1^{(i)}+(1-t)A_1^*) = \sum_{j=0}^L C_L^jt^j(1-t)^{L-j} Q = Q.$$
    
    Now we start to construct $A_1^*,\ldots,A_L^*$ inductively.
    Firstly, consider the case in Assumption \ref{asp:idpt} when $k = 1$. Since $m > r + 1$, we can find $A_L^*$ by Lemma \ref{li} such that $\mathcal{C}_1=\{A_L^*, A_L^{(1)}, \ldots, A_L^{(r)}\}$ are linearly independent.
    
    Then, we consider Assumption \ref{asp:idpt} with $k = 2$. We notice that  $A_L^*, A_L^{(1)}, \ldots, A_L^{(r)}$ are linearly independent and $m > 2r+1, m > r$, then following Lemma \ref{li}, we can find $A_{L-1}^*$ such that the $2r+1$ vectors in $$\mathcal{C}_{L-1} = \left\{A_L^{(i)}A_{L-1}^* + A_L^*A_{L-1}^{(i)} (i=1,2,\ldots,r), A_L^{(i)}A_{L-1}^{(i)} (i=1,2,\ldots,r), A_L^*A_{L-1}^*\right\}$$ are linearly independent.
    
    We repeat the process using Assumption 2,3 and Lemma \ref{li} to accumulate more layers while keeping with their independent property. In particular, with $wr+1 (1<w<L-1)$ linearly independent vectors in 
    \begin{align*}
        \mathcal{C}_w = \{ \sigma_{i,k,w-1}A_{L-w+1}^* + \sigma_{i,k-1,w-1}A_{L-w+1}^i  (k=1,\ldots,w-1; i = 1,\ldots,r), \\
        A_L^*A_{L-1}^*\ldots A_{L-w+1}^*, A_L^{(i)}A_{L-1}^{(i)}\ldots A_{L-w+1}^{(i)}  (i=1,2,\ldots,r) \},\end{align*} we consider Lemma \ref{li} to obtain $K$ as $A_{L-w}^*$, with $D_i$ being independent vectors $A_L^{(i)}\ldots A_1^{(i)}$, $i=1,2,\ldots,r$ being ensured by Assumption \ref{asp:idpt}. By doing so, we derive a new group of independent vectors
    $$\mathcal{C}_{w+1} = \{ \sigma_{i,k,w}A_{L-w}^* + \sigma_{i,k-1,w}A_{L-w}^i  (k=1,\ldots,w; i = 1,\ldots,r), \\
        A_L^*A_{L-1}^*\ldots A_{L-w}^*, A_L^{(i)}A_{L-1}^{(i)}\ldots A_{L-w}^i  (i=1,2,\ldots,r) \}.$$
    Thus,
    by induction, we can sequentially get $A_L^*,\ldots,A_{2}^*$ such that all $(L-1)r + 1$ vectors 
    \begin{align*}
        \mathcal{C}_{L-1} = \{ \sigma_{i,k,L-2}A_{2}^* + \sigma_{i,k-1,L-2}A_{2}^i  (k=1,\ldots,L-2; i = 1,\ldots,r), \\
        A_L^*A_{L-1}^*\ldots A_{2}^*, A_L^{(i)}A_{L-1}^{(i)}\ldots A_{2}^i  (i=1,2,\ldots,r) \}
    \end{align*} 
    are linearly independent. 
    
    We illustrate that keeping such linearly independent properties in induction is of essential importance in this proof, which shares the very essence in Theorem \ref{thm: linearnet-2pl}, providing a sufficient condition for us to obtain the proper solution of the linear equations.
    
    Finally, recall that we aim to make sure $\sigma_{i,k,L} = C_L^k Q$ for all $i=1,2,\ldots,r$ and $k=0,1,\ldots,L$. Consider that the case for $k=L$ is naturally true, and all $r$ equations when $k=0$ are the same, we can therefore view that as $(L-1)r + 1$ linear equations of variable $A_{1}^*$ since $A_L^*,\ldots,A_{2}^*$ have been fixed. Since all the coefficients of $A_1^*$ in the $(L-1)r+1$ equations are actually the $(L-1)r+1$ vectors in $\mathcal{C}_{L-1}$, which are linearly independent, we can therefore find a proper solution for $A_{1}^*$ by considering each column in $A_{1}^*$ separately. The condition $m > (L-1)r+1$ in Assumption 2 makes sure of the existence of the solution.
    
    Thus, we finish the construction of $A_L^*,\ldots,A_1^*$, which satisfies the required property, and finish the proof.
    
\end{proof}

\end{document}